\renewcommand{\cite}{\citep}
\algrenewcommand\algorithmicrequire{\textbf{Input:}}
\algrenewcommand\algorithmicensure{\textbf{Output:}}
\newlength{\continueindent}
\newcommand*{\ALG@customparshape}{\parshape 2 \leftmargin \linewidth \dimexpr\ALG@tlm+\continueindent\relax \dimexpr\linewidth+\leftmargin-\ALG@tlm-\continueindent\relax}
\apptocmd{\ALG@beginblock}{\ALG@customparshape}{}{\errmessage{failed to patch}}
\newcommand\D{\textnormal{d}}
\newcommand{\pdp}{LiDP\xspace}
\newcommand{\pdpfull}{Lifted DP\xspace}
\newcommand{\emb}{\text{XBern}\xspace}
\definecolor{puorange}{rgb}{0.80,0.20,0}
\definecolor{bluegray}{rgb}{0.04,0,0.7}
\definecolor{greengray}{rgb}{0.05,0.50,0.15}
\definecolor{darkbrown}{rgb}{0.40,0.2,0.05}
\definecolor{darkcyan}{rgb}{0,0.4,1}
\definecolor{black}{rgb}{0,0,0}
\definecolor{grey}{rgb}{0.93,0.93,0.93}
\newcommand{\canary}{\text{canary}}
\newcommand \reals {\mathbb{R}}
\newcommand \T {^{\top}}
\newcommand{\prob}{\mathbb{P}}
\newcommand \expect {\mathbb{E}}
\newcommand \pow [1]{^{(#1)}}
\DeclarePairedDelimiterX{\inp}[2]{\langle}{\rangle}{#1, #2} %
\DeclarePairedDelimiterX{\norm}[1]{\Vert}{\Vert}{#1} %
\newcommand \argmin {\operatorname*{arg\,min}} %
\newcommand \argmax {\operatorname*{arg\,max}} %
\newcommand \grad {\nabla}
\newcommand{\Pcal}{\mathcal{P}}
\newcommand{\Rcal}{\mathcal{R}}
\newcommand{\Ncal}{\mathcal{N}}
\newcommand{\Xcal}{\mathcal{X}}
\newcommand{\Ycal}{\mathcal{Y}}
\newcommand{\Zcal}{\mathcal{Z}}
\newcommand{\Acal}{\mathcal{A}}
\newcommand{\Ccal}{\mathcal{C}}
\renewcommand{\epsilon}{\varepsilon}
\newcommand{\eps}{\epsilon}
\newcommand{\mv}{{\bm m}}
\newcommand{\muv}{{\bm \mu}}
\newcommand{\thetav}{{\bm \theta}}
\newcommand{\xv}{{\bm x}}
\newcommand{\yv}{{\bm y}}
\newcommand{\cv}{{\bm c}}
\newcommand{\pv}{{\bm p}}
\newcommand{\ind}{{\mathbb I}}
\newcommand{\var}{{\rm Var}}
\newcommand{\cA}{{\cal A}}
\newcommand{\bc}{{\bm c}}
\newcommand{\bD}{{\bm D}}
\newcommand{\bR}{{\bm R}}
\newcommand{\bx}{{\bm x}}
\newcommand{\by}{{\bm y}}
\newcommand{\bxi}{{\bm \xi}}
\newcommand{\id}{{\bm I}}
\newtheorem{theorem}{Theorem}
\newtheorem{lemma}[theorem]{Lemma}
\newtheorem{proposition}[theorem]{Proposition}
\theoremstyle{definition}
\newtheorem{definition}[theorem]{Definition}
 \declaretheoremstyle[
notefont=\bfseries, notebraces={}{},
bodyfont=\normalfont\itshape,
headformat=\NAME \NOTE
]{nopar}
\declaretheorem[style=nopar,name=Theorem]{theorem_unnumbered}
\declaretheorem[style=nopar, name=Proposition]{proposition_unnumbered}
\newcommand{\myparagraph}[1]{\par\noindent\textbf{{#1}.}} %
\newcommand{\tabemphgood}[1]{\cellcolor{blue!7}\textcolor{black}{#1}}%
\newcommand{\tabemphbad}[1]{\cellcolor{red!12}\textcolor{black}{#1}}%
\newcounter{arxiv}
\renewcommand{\myparagraph}[1]{\paragraph{#1.}\hspace{-0.8em}}  %
\title{
\textbf{Unleashing the Power of Randomization in} \\ 
\textbf{Auditing Differentially Private ML}
 }
\author{%
Krishna Pillutla$^1$
$\qquad$
Galen Andrew$^1$
$\qquad$
Peter Kairouz$^1$ 
\\
H. Brendan McMahan$^1$ 
$\qquad$
Alina Oprea$^{1, 2}$
$\qquad$
Sewoong Oh$^{1, 3}$
 \vspace{0.5em}
\\
{\small
  $^1$Google Research 
  $\qquad$
  $^2$Northeastern University 
  $\qquad$
  $^3$University of Washington
  }
}
\date{\vspace{-2em}}
\begin{document}

\maketitle
\doparttoc %
\faketableofcontents %

\begin{abstract}
We present a rigorous methodology for auditing differentially private machine learning algorithms by adding multiple carefully designed examples called canaries. We take a first principles approach based on three key components. First, we introduce Lifted Differential Privacy (\pdp) that expands the definition of differential privacy to handle randomized datasets. This gives us the freedom to design randomized canaries. Second, we audit \pdp by trying to distinguish between the model trained with $K$ canaries versus $K-1$ canaries in the dataset, leaving one canary out. By drawing the canaries i.i.d., \pdp can leverage the symmetry in the design and reuse each privately trained model to run multiple statistical tests, one for each canary. Third, we introduce novel confidence intervals that take advantage of the multiple test statistics by adapting to the empirical higher-order correlations. Together, this new recipe demonstrates significant improvements in sample complexity, both theoretically and empirically, using synthetic and real data. Further, recent advances in designing stronger canaries can be readily incorporated into the new framework.

 \end{abstract}

\section{Introduction} 
\label{sec:intro}
Differential privacy (DP), introduced in \cite{DMNS}, has gained widespread adoption by governments, companies, and researchers by formally ensuring plausible deniability for participating individuals. This is achieved by guaranteeing that a curious observer of the output of a query cannot be confident in their answer to the following binary hypothesis test: {\em did a particular individual participate in the dataset or not?} For example, introducing sufficient randomness when training a model on a certain dataset ensures a desired level of differential privacy. This in turn ensures that an individual's sensitive information cannot be inferred from the trained model with high confidence. However, calibrating the right amount of noise can be a challenging process. It is easy to make mistakes when implementing a DP mechanism as it can involve intricacies like micro-batching, sensitivity analysis, and privacy accounting. Even with a correct implementation, there are several known incidents of published DP algorithms with miscalculated privacy guarantees that falsely report higher levels of privacy  \cite{lyu2016understanding,chen2015privacy,stevens2022backpropagation,Tra20,Par18,Joh20}. %
Data-driven approaches to auditing a mechanism for a violation of a claimed privacy guarantee can significantly mitigate the danger of unintentionally leaking sensitive data.  

Popular approaches for auditing privacy share three common components \cite[e.g.][]{jayaraman2019evaluating,jagielski2020auditing,hyland2019empirical,nasr2021adversary,zanella2022bayesian}. Conceptually, these approaches are founded on the definition of DP and involve producing counterexamples that potentially violate the DP condition. 
Algorithmically, this leads to the standard recipe of injecting a single carefully designed example, referred to as a \emph{canary}, and running a statistical hypothesis test for its presence from the outcome of the mechanism. Analytically, a high-confidence bound on the DP condition is derived by calculating the confidence intervals of the corresponding Bernoulli random variables from $n$ independent trials of the mechanism. 

Recent advances adopt this standard approach and focus on designing stronger canaries to reduce the number of trials required to successfully audit DP~\cite[e.g.][]{jayaraman2019evaluating,jagielski2020auditing,hyland2019empirical,nasr2021adversary,zanella2022bayesian,lu2022general}. 
However, each independent trial can be as costly as training a model from scratch; refuting a false claim of $(\varepsilon,\delta)$-DP with minimal number of samples is of utmost importance. 
In practice, standard auditing can require training on the range of thousands to hundreds of thousands of models \cite[e.g.,][]{tramer2022debugging}.
Unfortunately, under the standard recipe,  we are fundamentally limited by the $1/\sqrt{n}$ sample dependence of the Bernoulli confidence intervals.

\myparagraph{Contributions}
We break this $1/\sqrt{n}$ barrier by rethinking auditing from first principles.

\vspace{-0.2cm}
\begin{itemize}[leftmargin=1.5em]
    \item[1.] \textbf{\pdpfull}:
    We propose to audit an equivalent definition of DP, which we call {\pdpfull} in \S\ref{sec:pdp}. This gives an auditor the freedom to design a counter-example consisting of {\em random} datasets and rejection sets. This enables adding random canaries, which is critical in the next step. \Cref{prop:pdp:dp} shows that violation of \pdpfull implies violation of DP, justifying our framework. 
    
    \item[2.] \textbf{Auditing \pdpfull with Multiple Random Canaries}: We propose adding $K>1$ canaries under the alternative hypothesis and comparing it against a dataset with $K-1$ canaries, leaving one canary out. If the canaries are deterministic, we need $K$ separate null hypotheses; each hypothesis leaves one canary out. Our new recipe overcomes this inefficiency in \S\ref{sec:canary} by drawing {\em random} canaries independently from the same distribution. 
    This ensures the exchangeability of the test statistics, allowing us to reuse each privately trained model to run multiple hypothesis tests in a principled manner. This is critical in making the confidence interval sample efficient.
    
    \item[3.] \textbf{Adaptive Confidence Intervals}: 
    Due to the symmetry of our design, the test statistics follow a special distribution that we call \emph{eXchangeable Bernoulli (XBern)}. Auditing privacy boils down to computing confidence intervals on the average test statistic over the $K$ canaries included in the dataset. If the test statistics are independent, the resulting confidence interval scales as $1/\sqrt{nK}$. However, in practice, the dependence  is non-zero and unknown.  We propose a new and principled family of confidence intervals in  \S\ref{sec:stat} that adapts to the empirical higher-order correlations between the test statistics. This gives significantly smaller confidence intervals when the actual dependence is small, both theoretically (\Cref{prop:stat}) and empirically. 
    
    \item[4.] \textbf{Numerical Results}:
    We audit an unknown Gaussian mechanism with black-box access and demonstrate (up to) $16\times$ improvement in the sample complexity. We also show how to seamlessly lift recently proposed canary designs in our recipe to improve the sample complexity on real data.

\end{itemize}
\vspace{-0.3cm} 
\section{Background} 
\label{sec:background}
We describe the standard recipe to audit DP.
Formally, we adopt the so-called add/remove definition of differential privacy; our constructions also seamlessly extend to other choices of neighborhoods as we explain in \S\ref{sec:a:nbhood}.
 \begin{definition}[Differential privacy]
 \label{def:dp}
A pair of datasets, $(D_0, D_1)$, is said to be {\em neighboring} if their sizes differ by one and the datasets differ in one entry: $|D_1\setminus D_0|+|D_0\setminus D_1|  = 1 $. A randomized mechanism $\Acal: \Zcal^* \to \Rcal$ is said to be {\em $(\eps, \delta)$-Differentially Private} (DP) for some $\eps\geq0$ and $\delta\in[0,1]$ if  it satisfies 
$\prob_\Acal(\Acal(D_1) \in R) \le e^\eps  \, \prob_\Acal(\Acal(D_0) \in R) + \delta$, which is equivalent to
\ifnum \value{arxiv}>0  %
{
    \begin{align} \label{eq:dp}
    \eps \;\; \ge \;\; \log\left( \frac{\prob_\Acal(\Acal(D_1) \in R) - \delta}{\prob_\Acal(\Acal(D_0) \in R)} \right) \,,
\end{align} 
}
\else
{
    \begin{align} \label{eq:dp}
    \eps \ge \log\left(\prob(\Acal(
    D_1) \in R) - \delta  \right) - \log{\prob(\Acal(
    D_0) \in R)} \,,
\end{align} 
}
\fi
for all pairs of neighboring datasets, $(D_0, D_1)$, and all measurable sets, $R \subset \Rcal$, of the output space $\Rcal$, where $\prob_\Acal$ is over the randomness internal to the mechanism $\Acal$. Here, $\Zcal^*$ is a space of datasets. 
 \end{definition} 
For small $\varepsilon$ and $\delta$, one cannot infer from the output whether a particular individual is in the dataset or not with a high success probability. For a formal connection, we refer to \cite{kairouz2015composition}.  This naturally leads to a standard procedure for auditing a mechanism $\Acal$ claiming  $(\varepsilon,\delta)$-DP: present $(D_0,D_1,R)\in \Zcal^*\times \Zcal^*\times \Rcal$ that violates Eq.~\eqref{eq:dp} as a piece of evidence. Such a counter-example confirms that an adversary attempting to test the participation of an individual will succeed with sufficient probability, thus removing the potential for plausible deniability for the participants. 

\myparagraph{Standard Recipe: Adding a Single Canary}
When auditing DP model training (using e.g. DP-SGD~\cite{song2013stochastic,DP-DL}), the following recipe is now standard for designing a counter-example $(D_0,D_1,R)$ \cite{jayaraman2019evaluating,jagielski2020auditing,hyland2019empirical,nasr2021adversary,zanella2022bayesian}. A training dataset $D_0$ is assumed to be given.
This ensures that the model under scrutiny matches the  use-case  and is called a {\em null hypothesis}. Next, under a corresponding {\em alternative hypothesis}, a neighboring dataset $D_1 = D_0 \cup \{c\}$ is constructed by adding a single carefully-designed example $c\in\Zcal$, known as a {\em canary}. Finally, Eq.~\eqref{eq:dp} is evaluated with a choice of $R$ called a {\em rejection set}. For example, one can reject the null hypothesis (and claim the presence of the canary) if the loss on the canary is smaller than a fixed threshold; $R$ is a set of models satisfying this rejection rule. 

\myparagraph{Bernoulli Confidence Intervals} Once a counter-example $(D_0,D_1,R)$ is selected, we are left to evaluate the DP condition in Eq.~\eqref{eq:dp}. Since the two probabilities in the condition cannot be directly evaluated, we rely on the samples of the output from the mechanism, e.g., models trained with DP-SGD. This is equivalent to estimating the expectation, $\prob(\Acal(D) \in R)$ for $D\in\{D_0,D_1\}$, of a Bernoulli random variable, ${\mathbb I}(\Acal(D) \in R)$, from $n$ i.i.d.~samples. 
Providing high confidence intervals for Bernoulli distributions is a well-studied problem with several off-the-shelf techniques, such as Clopper-Pearson, Jeffreys, Bernstein, and Wilson intervals. Concretely, let $\hat\prob_n(\Acal(D) \in R)$ denote the empirical probability of the model falling in the rejection set in $n$ independent runs. The standard intervals scale as  $|\prob(\Acal(D_0) \in R) - \hat \prob_n(\Acal(D_0) \in R)|\leq  C_0 n^{-1/2}$ and $|\prob(\Acal(D_1) \in R) - \hat \prob_n(\Acal(D_1) \in R)| \leq C_1 n^{-1/2}$ for constants $C_0$ and $C_1$ independent of $n$.
If $\Acal$ satisfies a claimed $(\eps,\delta)$-DP in Eq.~\eqref{eq:dp}, then the following finite-sample lower bound holds with high confidence:  
\ifnum \value{arxiv}>0  %
{
\begin{align}
    \eps \;\;\ge\;\; \hat \eps_n \;=\; \log\left( \frac{\hat \prob_n(\Acal(D_1) \in R) - {C_1}{{n}^{-1/2}} - \delta}{\hat \prob_n(\Acal(D_0) \in R) + {C_0}{{n}^{-1/2}}} \right)\;. 
    \label{eq:dp_finite}
\end{align} 
} 
\else  %
{
\begin{align}
    \eps \ge \hat \eps_n = \log\big( {\hat \prob_n(\Acal(D_1) \in R) - \tfrac{C_1}{\sqrt{n}} - \delta}\big)
    - \log \big({\hat \prob_n(\Acal(D_0) \in R) + \tfrac{C_0}{\sqrt{n}}} \big)\;. 
    \label{eq:dp_finite}
\end{align} 
}
\fi
Auditing$(\eps,\delta)$-DP amounts to testing the violation of this condition.
This is fundamentally limited by the ${n}^{-1/2}$ dependence of the Bernoulli confidence intervals. Our goal is to break this barrier.

\myparagraph{Notation} 
While the DP condition is symmetric in $(D_0,D_1)$, we use $D_0, D_1$ to refer to specific hypotheses. For symmetry, we need to check both conditions: Eq.~\eqref{eq:dp} and its counterpart with $D_0, D_1$ interchanged. We omit this second condition for notational convenience.
We use the shorthand $[k]:=\{1,2,\ldots,k\}$. We refer to random variables by boldfaced letters 
\ifnum \value{arxiv}>0 {
(e.g. $\bD$ is a random dataset while $D$ is a fixed dataset).
} \else{
(e.g. $\bD$ is a random dataset).
} \fi

\myparagraph{Related Work} 
We provide detailed survey in \Cref{sec:related}. A stronger canary (and its rejection set) can increase the RHS of \eqref{eq:dp}. The resulting hypothesis test can tolerate larger confidence intervals, thus requiring fewer samples. This has been the focus of recent breakthroughs in privacy auditing in  \cite{jagielski2020auditing,nasr2021adversary,tramer2022debugging,lu2022general,maddock2023canife}. They build upon membership inference attacks \cite[e.g.][]{carlini2019secret,shokri2017membership,yeom2018privacy} to measure memorization. Our aim is not to innovate in this front. Instead, our framework can seamlessly adopt recently designed canaries and inherit their strengths as  demonstrated in \S\ref{sec:liftingcanary} and \S\ref{sec:expt}.

Random canaries have been used in prior work, but for making the canary out-of-distribution in a computationally efficient manner. No variance reduction is achieved by such random canaries. Adding multiple (deterministic) canaries has been explored in literature, but for  different purposes.
\cite{jagielski2020auditing,lu2022general} include multiple copies of the same canary to make the canary easier to detect, while paying for group privacy since the paired datasets differ in multiple entries (see \S\ref{sec:canary} for a detailed discussion).

\cite{malek2021antipodes,zanella2022bayesian} propose adding multiple distinct canaries to reuse each trained model for multiple hypothesis testing. However, each canary is not any stronger than a single canary case, and the resulting auditing suffers from group privacy.
When computing the lower bound of $\varepsilon$, however, group privacy is ignored and the test statistics are assumed to be independent without rigorous justification.
\cite{andrew2023one} avoids group privacy in the federated scenario where the canary has the freedom to return a canary gradient update of choice. The prescribed random gradient shows good empirical performance. The confidence interval is not rigorously derived.
Our recipe on injecting multiple canaries \emph{without} a group privacy cost with \emph{rigorous} confidence intervals can be incorporated into these works to give provable lower bounds.

\textcolor{black}{An independent and concurrent work \cite{steinke2023privacy} also considers auditing with randomized  canaries that are Poisson-sampled, i.e., each canary is included or excluded independently with equal probability.
Their recipe involves computing an empirical lower bound by comparing the accuracy (rather than the full confusion matrix as in our case) from the possibly dependent guesses with the worst-case randomized response mechanism. This allows them to use multiple dependent observations from a single trial. Their confidence intervals, unlike the ones we give here, are non-adaptive and worst-case.
}

\section{A New Framework for Auditing DP Mechanisms with Multiple Canaries} \label{sec:auditing}
We define \pdpfull, a new definition of privacy that is equivalent to DP (\S\ref{sec:pdp}).
This allows us to define a new recipe for auditing with multiple random canaries, as opposed to a single deterministic canary in the standard recipe, and reuse each trained model to run multiple correlated hypothesis tests in a principled manner (\S\ref{sec:canary}). 
The resulting test statistics form a vector of \emph{dependent but exchangeable} indicators 
(which we call an eXchangeable Bernoulli or \emb distribution), as opposed to a single Bernoulli distribution.
We leverage this exchangeability to give confidence intervals for the \emb distribution  that can potentially improve with the number of injected canaries (\S\ref{sec:stat}). The pseudocode of our approach is provided in \Cref{alg:recipe}.

\subsection{From DP to Lifted DP}
\label{sec:pdp}

To enlarge the design space of counter-examples, we introduce an equivalent definition of DP. 
\begin{definition}[Lifted differential privacy]
\label{def:pdp}
Let $\Pcal$ denote  a joint probability distribution over $(\bD_0,\bD_1,\bR)$ where $(\bD_0,\bD_1)\in \Zcal^*\times \Zcal^*$ is a pair of {\em random} datasets  that are neighboring (as in the standard definition of neighborhood in \Cref{def:dp}) with probability one and let $\bR \subset \Rcal$ denote a {\em random} rejection set. We say that a randomized mechanism $\Acal: \Zcal^* \to \Rcal$ satisfies {\em $(\eps, \delta)$-Lifted Differential Privacy} (\pdp) for some $\eps\geq0$ and $\delta\in[0,1]$ if, for all $\Pcal$ independent of $\Acal$, we have
\begin{align} \label{eq:pdp}
    \prob_{\Acal, \Pcal}(\Acal(\bD_1) \in \bR)
    \;\; \le\;\; e^\eps\, \prob_{\Acal, \Pcal}(\Acal(\bD_0) \in \bR) + \delta \,.
\end{align}
\end{definition}
In \Cref{sec:bayesianDP}, we discuss connections between \pdpfull and other existing extensions of DP, such as Bayesian DP and Pufferfish, that also consider randomized  datasets. The following theorem shows that LiDP is equivalent to the standard DP, which justifies our framework of checking the above condition; if a mechanism $\Acal$ violates the above condition then it violates $(\eps,\delta)$-DP. 
\begin{theorem} \label{prop:pdp:dp}
    A randomized algorithm $\Acal$ is $(\eps, \delta)$-\pdp iff $\Acal$ is $(\eps, \delta)$-DP.
\end{theorem}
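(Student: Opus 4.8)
The plan is to prove the two implications separately. Both are short: the reverse direction is an immediate specialization, and the forward direction is a conditioning argument followed by one application of Fubini's theorem, so the only genuine work is measure-theoretic bookkeeping.

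For LiDP $\Rightarrow$ DP, I would observe that the class of laws $\mathcal{P}$ allowed in \Cref{def:pdp} contains every point mass. Given an arbitrary neighboring pair $(D_0, D_1)$ and a measurable $R \subseteq \mathcal{R}$, take $\mathcal{P} = \delta_{(D_0, D_1, R)}$, so that $(\bD_0, \bD_1, \bR) = (D_0, D_1, R)$ almost surely. This $\mathcal{P}$ is trivially independent of $\mathcal{A}$ and is supported on neighboring pairs, so \eqref{eq:pdp} reduces to $\prob_\mathcal{A}(\mathcal{A}(D_1) \in R) \le e^\varepsilon\, \prob_\mathcal{A}(\mathcal{A}(D_0) \in R) + \delta$; since the pair and $R$ were arbitrary, this is exactly $(\varepsilon, \delta)$-DP.

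For DP $\Rightarrow$ LiDP, I would fix an arbitrary $\mathcal{P}$ over $(\bD_0, \bD_1, \bR)$ as in \Cref{def:pdp}, independent of $\mathcal{A}$, and condition on its realization. Writing $g(D_0, D_1, R) := \prob_\mathcal{A}(\mathcal{A}(D_1) \in R) - e^\varepsilon\, \prob_\mathcal{A}(\mathcal{A}(D_0) \in R)$, the DP guarantee says $g \le \delta$ on every neighboring pair and measurable set, hence $g(\bD_0, \bD_1, \bR) \le \delta$ holds $\mathcal{P}$-almost surely. Because $\mathcal{A}$ is independent of $\mathcal{P}$, Fubini's theorem gives $\prob_{\mathcal{A}, \mathcal{P}}(\mathcal{A}(\bD_i) \in \bR) = \expect_{\mathcal{P}}[\prob_\mathcal{A}(\mathcal{A}(\bD_i) \in \bR)]$ for $i \in \{0, 1\}$, and subtracting and taking $\mathcal{P}$-expectations of the pointwise bound yields $\prob_{\mathcal{A}, \mathcal{P}}(\mathcal{A}(\bD_1) \in \bR) - e^\varepsilon\, \prob_{\mathcal{A}, \mathcal{P}}(\mathcal{A}(\bD_0) \in \bR) = \expect_{\mathcal{P}}[g(\bD_0, \bD_1, \bR)] \le \delta$, which is \eqref{eq:pdp}.

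The hard part, such as it is, is purely measure-theoretic: I need $(D_0, D_1, R) \mapsto \prob_\mathcal{A}(\mathcal{A}(D_i) \in R)$ to be jointly measurable so that $g(\bD_0, \bD_1, \bR)$ and its $\mathcal{P}$-expectation make sense, and I need the event $\{\mathcal{A}(\bD_i) \in \bR\}$ on the product space to be measurable so that the left-hand side of \eqref{eq:pdp} is even defined. This is where I would invoke the (implicit) regularity assumptions on $\mathcal{Z}^*$, $\mathcal{R}$, and the random rejection set $\bR$ --- e.g., standard Borel spaces and a measurably parameterized $\bR$ --- under which Fubini applies and the argument goes through verbatim, with the same $(\varepsilon, \delta)$ in both directions.
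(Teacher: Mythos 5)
Your proof is correct and follows essentially the same route as the paper: the LiDP-to-DP direction uses point masses $\Pcal = \delta_{(D_0,D_1,R)}$, and the DP-to-LiDP direction integrates the pointwise DP inequality against $\Pcal$, using independence of $\Acal$ and $\Pcal$ to identify $\prob_{\Acal,\Pcal}(\Acal(\bD_i)\in\bR)$ with $\expect_\Pcal[\prob_\Acal(\Acal(\bD_i)\in\bR)]$. The paper phrases this step via the law of iterated expectation rather than Fubini, but the mechanism is identical; your explicit flagging of the joint-measurability requirements is a point of care the paper's proof leaves implicit.
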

A proof is provided in \Cref{sec:a:pdp}. In contrast to DP, \pdp involves  probabilities over both the internal randomness of the algorithm $\Acal$ and the distribution $\Pcal$ over  $(\bD_0, \bD_1,\bR)$. This gives the auditor greater freedom to search over a {\em lifted} space of joint distributions over the paired datasets and a rejection set; hence the name \pdpfull. Auditing \pdp amounts to constructing a {\em randomized} (as emphasized by the boldface letters) counter-example $(\bD_0,\bD_1,\bR)$ that violates \eqref{eq:pdp} as evidence.

\subsection{From a Single Deterministic Canary to Multiple Random Canaries}
\label{sec:canary}

Our strategy is to turn the \pdp condition in  Eq.~\eqref{eq:pdp} into another condition in Eq.~\eqref{eq:pdp_average} below; this allows the auditor to reuse samples, running multiple hypothesis tests on each sample. This derivation critically relies on our carefully designed recipe that incorporates three crucial features: $(a)$ binary hypothesis tests between pairs of stochastically coupled datasets containing $K$ canaries and $K-1$ canaries, respectively, for some fixed integer $K$, $(b)$ sampling those canaries i.i.d.\ from the same distribution, and ($c$) choice of rejection sets, where each rejection set only depends on a single left-out canary. We introduce the following recipe, also presented in \Cref{alg:recipe}. 

We fix a given training set $D$ and a canary distribution $P_\canary$ over $\Zcal$.
This ensures that the model under scrutiny is close to the use-case. Under the alternative hypothesis, we train a model on a randomized training dataset  $\bD_1=D\cup\{\bc_1,\ldots,\bc_K\}$, augmented with $K$ random canaries drawn i.i.d.\ from $P_\canary$. 
Conceptually, this is to be tested against $K$ leave-one-out (LOO) null hypotheses. Under the $k$\textsuperscript{th} null hypothesis for each $k\in[K]$, we construct a coupled dataset, $\bD_{0, k} = D\cup \{\bc_1,\ldots,\bc_{k-1},\bc_{k+1},\ldots\bc_{K}\}$, with $K-1$ canaries, leaving the $k$\textsuperscript{th} canary out. This coupling of $K-1$ canaries ensures that $(\bD_{0, k},\bD_1)$ is neighboring with probability one. For each left-out canary, the auditor runs a binary hypothesis test with a choice of a random rejection set $\bR_k$. We restrict $\bR_k$ to depend only on the canary $\bc_k$ that is being tested and not the index $k$. For example, $\bR_k$ can be the set of models achieving a loss on the canary $\bc_k$ below a predefined threshold $\tau$. %

The goal of this LOO construction is to reuse each trained private model to run multiple tests such that the averaged test statistic has a smaller variance for a \emph{given number of models}. Under the standard definition of DP, one can still use the above LOO construction but with fixed and deterministic canaries.
This gives no variance gain because  evaluating $\prob(\Acal(D_{0, k}) \in R_k)$ in Eq.~\eqref{eq:dp} or its averaged counterpart $(1/K)\sum_{k=1}^K \prob(\Acal(D_{0, k}) \in R_k) $ requires training one model to get one sample from the test statistic ${\mathbb I}(\Acal(D_{0, k}) \in R_k)$. The key ingredient in \emph{reusing trained models} is randomization.

We build upon the \pdp condition in Eq.~\eqref{eq:pdp}
by noting that the test statistics are exchangeable for i.i.d.~canaries. Specifically, we have for any $k \in [K]$ that
$\prob(\Acal(\bD_{0, k})\in\bR_k) = \prob(\Acal(\bD_{0, K})\in\bR_K) = \prob(\Acal(\bD_{0,K})\in\bR_{j}')$ for any canary $\cv_j'$ drawn i.i.d. from $P_\canary$ and its corresponding rejection set $\bR_j'$ that are statistically independent of $\bD_{0, K}$. Therefore, we can rewrite the right side of Eq.~\eqref{eq:pdp} by testing $m$ i.i.d. canaries $\cv_1',\ldots, \cv_m'\sim P_\canary$ using a single trained model $\Acal(\bD_{0, K})$ as
\ifnum \value{arxiv}>0  {
\begin{align} 
    \frac{1}{K} \sum_{k=1}^K \prob_{\Acal,\Pcal}(\Acal(\bD_1) \in \bR_k)
    \;\; \le\;\; \frac{e^\eps}{m}\sum_{j=1}^m \prob_{\Acal,\Pcal}(\Acal(\bD_{0, K}) \in \bR_j') + \delta \,. \label{eq:pdp_average}
\end{align}
} \else {
\begin{align} 
\textstyle
    \frac{1}{K} \sum_{k=1}^K \prob_{\Acal,\Pcal}(\Acal(\bD_1) \in \bR_k)
    \;\; \le\;\; \frac{e^\eps}{m}\sum_{j=1}^m \prob_{\Acal,\Pcal}(\Acal(\bD_{0, K}) \in \bR_j') + \delta \,. \label{eq:pdp_average}
\end{align}
} \fi

Checking this condition is sufficient for auditing \pdp and, via \Cref{prop:pdp:dp}, for auditing DP.
For each model trained on $\bD_1$, we record the test statistics of $K$ (correlated) binary hypothesis tests. This is denoted by a random vector $\bx = (\ind (\cA(\bD_1) \in \bR_k))_{k=1}^K \in \{0,1\}^K $, where $\bR_k$ is a rejection set that checks for the presence of the $k$\textsuperscript{th} canary. 
Similar to the standard recipe, we train $n$ models to obtain $n$ i.i.d.\ samples $\xv\pow{1}, \ldots, \xv\pow{n} \in \{0, 1\}^K$ to estimate the left side of \eqref{eq:pdp_average} using the empirical mean: 
\ifnum \value{arxiv}>0  {
\begin{align}
    \hat\muv_1 &:= \frac{1}{n}\sum_{i=1}^n \frac{1}{K}\sum_{k=1}^K \bx_k^{(i)} \;\; \in\, [0,1]\;,
    \label{eq:firstmoment}
\end{align}
} \else {
\begin{align}
\textstyle
    \hat\muv_1 := \frac{1}{n}\sum_{i=1}^n \frac{1}{K}\sum_{k=1}^K \bx_k^{(i)} \;\; \in\, [0,1]\;,
    \label{eq:firstmoment}
\end{align}
} \fi
where the subscript one in $\hat\muv_1$ denotes that this is the empirical first moment. 
Ideally, if the $K$ tests are independent, the corresponding confidence interval is smaller by a factor of $\sqrt{K}$. In practice, it depends on how correlated the $K$ test are. We derive principled confidence intervals that leverage the empirically measured correlations in \S\ref{sec:stat}. We can define $\yv\in\{0,1\}^m$ and its mean $\hat{\bm \nu}_1$ analogously for the null hypothesis. 
We provide pseudocode in \Cref{alg:recipe} as an example guideline for applying our recipe to auditing DP training, where $f_\theta(z)$ is the loss evaluated on an example $z$ for a model $\theta$. 
${\rm XBernLower()}$ and ${\rm XBernUpper()}$ respectively return the lower and upper adaptive confidence intervals from \S\ref{sec:stat}. We instantiate \Cref{alg:recipe} with concrete examples of canary design in \S\ref{sec:liftingcanary}.

\begin{algorithm}[t]
\caption{Auditing \pdpfull}
\label{alg:recipe}
\begin{algorithmic}[1]
\Require
Sample size $n$, number of canaries $K$, number of null tests $m$, DP mechanism $\Acal$, training set $D$, canary generating distribution $P_\canary$, threshold $\tau$, failure probability $\beta$, privacy $\delta\in[0,1]$.
\For{$i=1, \ldots, n$}
    \State Randomly generate $K+m$ canaries $\{\bc_1,\ldots, \bc_K, \bc_{1}', \ldots, \bc_m'\}$ i.i.d.~from $P_\canary$.
    \State  $\bD_0\gets D\cup \{\bc_1,\ldots,\bc_{K-1}\}$,  $\bD_1\gets D\cup \{\bc_1,\ldots,\bc_{K}\}$
    \State Train two models $\thetav_0\gets \Acal(\bD_0)$ and $\thetav_1\gets \Acal(\bD_1)$
    \State Record test statistics $\bx\pow{i} \gets \big({\mathbb I}(f_{\thetav_1}(\bc_k)<\tau)\big)_{k=1}^K $ and $\by\pow{i} \gets \big({\mathbb I}(f_{\thetav_0}(\bc_{j}')<\tau)\big)_{j=1}^m $
\EndFor
\State Set $\underline \pv_1 \gets \mathrm{XBernLower}\big(\{\bx\pow{i}\}_{i \in [n]}, \beta/2\big)$ and 
$\overline \pv_0 \gets \mathrm{XBernUpper}\big(\{\by\pow{i}\}_{i \in [n]}, \beta/2\big)$
\State \textbf{Return} $\hat \varepsilon_n \gets \log\big((\underline \pv_1 - \delta) / \overline \pv_0\big)$ and a guarantee that $\prob(\eps < \hat \eps_n) \le \beta$.
\end{algorithmic}
\end{algorithm}

\myparagraph{Our Recipe vs. Multiple Deterministic Canaries}
An alternative with deterministic canaries would be to test between $D_0$ with no canaries and $D_1$ with $K$ canaries
such that we can get $K$ samples from a single trained model on $D_0$, one for each of the $K$ test statistics $\{{\mathbb I}(\Acal(D_{0}) \in R_k)\}_{k=1}^K$. 
However, due to the fact that $D_1$ and $D_0$ are now at Hamming distance $K$, this suffers from group privacy; we are required to audit for a much larger privacy leakage of $(K\varepsilon,((e^{K \varepsilon}-1)/(e^\varepsilon-1))\delta)$-DP. 
Under the (deterministic) LOO construction, this translates into  $(1/K)\sum_{k=1}^K\prob(\Acal(D_{1}) \in R_k) \leq e^{K\varepsilon} (1/K)\sum_{k=1}^K\prob(\Acal(D_{0}) \in R_k) + ((e^{K \varepsilon}-1)/(e^\varepsilon-1))\delta $, applying probabilistic method such that if one canary violates the group privacy condition then the average also violates it. We can reuse a single trained model to get $K$ test statistics in the above condition, but each canary is distinct and not any stronger than the one from $(\eps,\delta)$-DP auditing. 
One cannot obtain stronger counterexamples without sacrificing the sample gain. For example, we can repeat the same canary $K$ times as proposed in \cite{jagielski2020auditing}. This makes it easier to detect the canary, making a stronger counter-example, but there is no sample gain as we only get one test statistic per trained model. With deterministic canaries, there is no way to avoid this group privacy cost while our recipe does not incur it.

\subsection{From Bernoulli Intervals to Higher-Order Exchangeable Bernoulli (\emb) Intervals}
\label{sec:stat}

The \pdp condition in Eq.~\eqref{eq:pdp_average} critically relies on the canaries being sampled i.i.d.\ and the rejection set only depending on the corresponding canary. For such a symmetric design, auditing boils down to deriving a Confidence Interval (CI) for 
 a special family of distributions that we call \emph{Exchangeable Bernoulli (\emb)}. Recall that $\xv_k := {\mathbb I}(\Acal(\bD_1) \in \bR_k)$ denotes the test statistic for the $k$\textsuperscript{th} canary. By the symmetry of our design,  $\xv\in\{0,1\}^K$ is an \emph{exchangeable} random vector that is distributed as an exponential family. Further, the distribution of $\xv$ is  fully defined by a $K$-dimensional parameter $(\mu_1,\ldots,\mu_K)$ where $\mu_\ell$ is the $\ell\textsuperscript{th}$ moment of $\xv$. We call this family XBern. 
Specifically, this implies permutation invariance of the higher-order moments: $\expect[\xv_{j_1}\cdots \xv_{j_\ell}] = \expect[\xv_{k_1}\cdots\xv_{k_\ell}]$ for any distinct sets of indices $(j_l)_{l \in [\ell]}$ and $(k_l)_{l \in [\ell]}$ for any $\ell\leq K$. 
For example, 
$\mu_1:= \expect [(1/K) \sum_{k=1}^K \xv_k ]$, which is the LHS of \eqref{eq:pdp_average}. Using samples from this XBern, we aim to derive a CI on $\mu_1$ around the empirical mean $\hat \muv_1$ in \eqref{eq:firstmoment}. 
Bernstein's inequality applied to our test statistic $\mv_1 := (1/K) \sum_{k=1}^K \xv_k$ gives, 
\begin{align} \label{eq:bernstein}
    |\hat\muv_1 - \mu_1| \;\; \le \;\; \sqrt{\frac{2\log(2/\beta)}{n} \var(\mv_1)} + \frac{2 \log(2/\beta)}{3n} \,,
\end{align} 
w.p. at least $1-\beta$.
Bounding $\var(\mv_1) \le \mu_1(1-\mu_1)$ since $\mv_1 \in [0, 1]$ a.s. and numerically solving the above inequality for $\mu_1$ gives a CI that scales as $1/\sqrt{n}$ --- see \Cref{fig:ci-illustration} (left). We call this the \textbf{1\textsuperscript{st}-order Bernstein bound}, as it depends only on the 1\textsuperscript{st} moment $\mu_1$. Our strategy is to measure the (higher-order) correlations between $\xv_k$'s to derive a tighter CI that adapts to the given instance. This idea applies to  any  standard CI. %
We derive and analyze the higher order Bernstein intervals here, and experiments use Wilson intervals from \Cref{sec:a:ci}; see also \Cref{fig:ci-illustration} (right).

\begin{figure}[t]
\begin{minipage}[t]{0.47\linewidth}
\begin{center}
\begin{figure}[H]
\centering
\begin{adjustbox}{max width=\linewidth}
\begin{tikzpicture}[xscale=2, yscale=2]

    \node (sum) at (1.11, 0.99) {};
    \draw (0, 0) --  (0.11, 0.99);
    \draw (0, 0) --  (1, 0);
    \draw (0, 0) --  (1.11, 0.99);
    \draw[dotted, thin] (0.11, 0.99) -- (1.11, 0.99);
    \draw[dotted, thin] (1, 0) -- (1.11, 0.99);

    \node[xshift=7pt] at (1, 0) {$z_1$};
    \node[yshift=5pt] at (0.11, 0.99) {$z_2$};
    \filldraw[color=black] (1, 0) circle (0.5pt); 
    \filldraw[color=black] (0.11, 0.99) circle (0.5pt); 
    \filldraw[color=black] (1.11, 0.99) circle (0.5pt);

    \draw[color=blue,fill=blue, opacity=0.2] (sum) circle (8pt);
    \path [draw=none,fill=red, fill opacity = 0.2, even odd rule] (sum) circle (12pt) (sum) circle (8pt);

    \draw[fill] (1.7, 0.9) coordinate [inner sep=1pt,label=right:{\small $z_1 + z_2$}] (l1) ;
    \draw[gray, ultra thin] (1.11, 0.99) -- (l1) ;

    \draw[fill] (1.5, 0.6) coordinate [inner sep=1pt,label=right:{\small $z_1 + z_2 + \sigma \bm{\xi}$}] (l2) ;
    \draw[gray, ultra thin] (1.2, 0.9) -- (l2) ;

    \draw[fill] (1.3, 0.3) coordinate [inner sep=1pt,label=right:{\small $z_1 + z_2  + \bm{c} + \sigma \bm{\xi}$}] (l3) ;
    \draw[gray, ultra thin] (1.2, 0.65) -- (l3) ;

\end{tikzpicture} \end{adjustbox}
\caption{\small
\textbf{Bias illustration}:
Consider the sum query $z_1 + z_2$ with 2 inputs. Its DP version produces a point in the blue circle w.h.p. due to the noise $\bxi\sim\Ncal(0, \id)$ scaled by $\sigma$. When auditing with a random canary $\bc$, it contributes additional randomness (red disc) leading to a smaller effective privacy parameter $\eps$.
}
\label{fig:bias:illustration}
\end{figure}
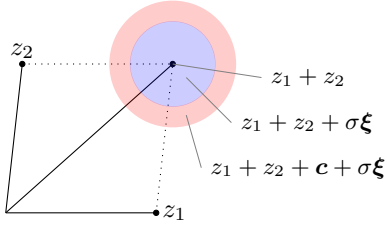
\end{center}
\end{minipage}
    \hspace{1em}
\begin{minipage}[t]{0.5\linewidth}
\begin{figure}[H]
    \includegraphics[width=\linewidth]{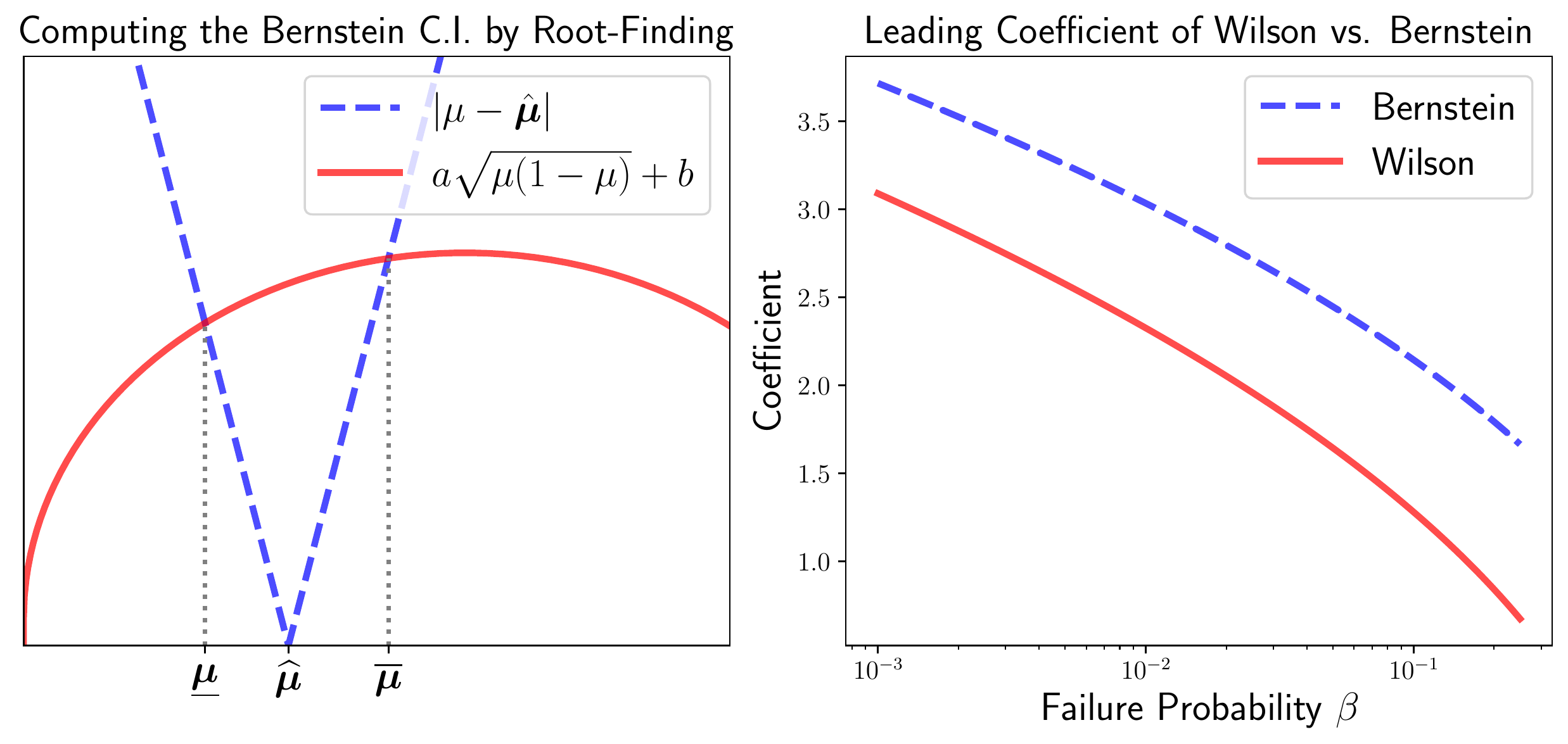}
    \caption{\small
    \textbf{Left}: The Bernstein CI $[\underline \muv, \overline \muv]$ can be found by equating the two sides of  Bernstein's inequality in Eq.~\eqref{eq:bernstein}.
    \textbf{Right}: The asymptotic Wilson CI is a tightening of the Bernstein CI with a smaller $1/\sqrt{n}$ coefficient (shown here) and no $1/n$ term. 
    }
    \label{fig:ci-illustration}
\end{figure}
\end{minipage}
\vspace{-3em}
\end{figure}

Concretely, we can leverage the $2$\textsuperscript{nd} order correlation by expanding
\ifnum \value{arxiv}>0  %
{
\begin{align*}
    \var(\mv_1) = \frac{1}{K}(\mu_1-\mu_2)  + (\mu_2- \mu_1^2)
    \quad\text{where}\quad
    \mu_2 := \frac{1}{K(K-1)} \sum_{k_1 < k_2 \in [K]} \expect\left[ \xv_{k_1} \xv_{k_2} \right] 
    \,.
\end{align*}
}
\else
{
\begin{align*}
    \var(\mv_1) = \tfrac{1}{K}(\mu_1-\mu_2)  + (\mu_2- \mu_1^2)
    \quad\text{where}\quad
    {\textstyle
    \mu_2 := \frac{1}{K(K-1)} \sum_{k_1 < k_2 \in [K]} \expect\left[ \xv_{k_1} \xv_{k_2} \right] 
    }\,.
\end{align*}
}
\fi
Ideally, when the second order correlation $\mu_2-\mu_1^2=\expect[\bx_1\bx_2]-\expect[\bx_1]\expect[\bx_2]$ equals $0$, we have  $\var(\mv_1) = \mu_1(1-\mu_1)/K$, a factor of $K$ improvement over the worst-case. %
Our higher-order CIs adapt to the \emph{actual level of correlation} of $\xv$ by further estimating the 2\textsuperscript{nd} moment $\mu_2$ from samples. Let $\overline\muv_2$ be the first-order Bernstein upper bound on $\mu_2$ such that $\prob(\mu_2 \le \overline \muv_2) \ge 1- \beta$. On this event,
\ifnum \value{arxiv}>0  {
\begin{align} \label{eq:var1:ub}
    \var(\mv_1) \;\; \le\;\;  \frac{1}{K}(\mu_1-\overline \muv_2)  + (\overline \muv_2 - \mu_1^2) \,.
\end{align}
} \else {
\begin{align} \label{eq:var1:ub}
    \var(\mv_1) \;\; \le\;\;  \tfrac{1}{K}(\mu_1-\overline \muv_2)  + (\overline \muv_2 - \mu_1^2) \,.
\end{align}
} \fi
Combining this with \eqref{eq:bernstein} gives us the \textbf{2\textsuperscript{nd}-order Bernstein bound} on $\mu_1$, valid w.p. $1-2\beta$.  Since $\overline \muv_2 \lesssim \hat \muv_2 + 1/\sqrt{n}$ where $\hat\muv_2$ is the empirical estimate of $\mu_2$, the 2\textsuperscript{nd}-order bound scales as
\begin{align} \label{eq:2nd-order:bernstein}
    |\mu_1 - \hat\muv_1| \;\; \lesssim \;\; 
    \sqrt{\frac{1}{nK}}
        + \sqrt{\frac{1}{n}|  \hat\muv_2 - \hat\muv_1^2 |}
        + \frac{1}{n^{3/4}}  \,,
\end{align}
where constants and log factors are omitted. 
Thus, our 2\textsuperscript{nd}-order CI can be as small as  $1/\sqrt{nK} + 1/n^{3/4}$ (when $\hat \muv_1^2 \approx \muv_2$) or as large as $1/\sqrt{n}$ (in the worst-case). With small enough correlations of $|\hat\muv_2-\hat\muv_1^2|=O(1/K)$, 
this suggests a choice of  $K=O(\sqrt{n})$ to get CI of $1/n^{3/4}$. In practice, the correlation is controlled by the design of the canary. For the Gaussian mechanism with random canaries, the correlation indeed empirically decays as $1/K$, as we see in \S\ref{sec:gaussian}.
While the CI decreases monotonically with $K$, this incurs a larger bias due to the additional randomness from adding more canaries. The optimal choice of $K$ balances the bias and variance; we return to this in \S\ref{sec:gaussian}.

\myparagraph{Higher-order intervals}
We can recursively apply this method by expanding the variance of higher-order statistics, and derive higher-order Bernstein bounds.
The next recursion uses empirical 3\textsuperscript{rd} and 4\textsuperscript{th} moments $\hat\muv_3$ and  $\hat\muv_4$ to get the \textbf{4\textsuperscript{th}-order Bernstein bound}
which scales as 
\begin{align}
\label{eq:4th-order-bernstein}
|\mu_1 - \hat\muv_1| \lesssim         \sqrt{\frac{1}{nK}} + \sqrt{\frac{1}{n}\left|\hat \muv_ 2 - \hat\muv_1^2  \right|}
+ \frac{1}{n^{3/4}}|\hat\muv_4 - \hat\muv_2^2  |^{1/4} 
+ \frac{1}{n^{7/8}}            \,.
\end{align}
Ideally, when the 4\textsuperscript{th}-order correlation is small enough, $|\hat\muv_4-\hat\muv_2^2|=O(1/K)$, (along with the $2$nd-order correlation, $\hat\muv_2-\hat\muv_1^2$) this $4$th-order  CI scales as $1/n^{7/8}$ with a choice of $K=O(n^{3/4})$ improving upon the $2$nd-order CI of $1/n^{3/4}$. 
We can recursively derive even higher-order CIs, but we find \S\ref{sec:gaussian} that the gains diminish rapidly. In general, the $\ell\textsuperscript{th}$ order Bernstein bounds achieve CIs scaling as $1/n^{(2\ell-1)/2\ell}$ with a choice of $K=O(n^{(\ell-1)/\ell})$. This shows that the  higher-order confidence interval is  decreasing in  the order $\ell$  of the correlations used;
we refer to \Cref{sec:a:ci} for details.

\begin{proposition}
\label{prop:stat}
    For any positive integer $\ell$ that is a power of two and $K=\lceil n^{(\ell-1)/\ell} \rceil $, suppose we have $n$ samples from a $K$-dimensional  XBern distribution with parameters $(\mu_1,\ldots,\mu_K)$. If all $\ell'\textsuperscript{th}$-order correlations scale as $1/K$, i.e., $ |\mu_{2\ell'}-\mu_{\ell'}^2|=O(1/K)$, for all $\ell'\leq \ell$ and $\ell'$ is a power of two, then the $\ell\textsuperscript{th}$-order Bernstein bound is  
    $|\mu_1-\hat\muv_1|=O(1/n^{(2\ell-1)/(2\ell)})$.
\end{proposition}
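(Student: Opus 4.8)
The plan is to prove this by induction on the order $\ell$, running through the powers of two $1, 2, 4, \ldots$, and at each stage unrolling one more level of the variance recursion sketched in the main text. The base case $\ell=1$ is the $1^{\text{st}}$-order Bernstein bound: applying \eqref{eq:bernstein} to $\mv_1=(1/K)\sum_k\xv_k$ with the trivial variance bound $\var(\mv_1)\le\mu_1(1-\mu_1)\le 1/4$ already gives $|\mu_1-\hat\muv_1|=O(1/\sqrt n)=O(1/n^{(2\cdot1-1)/(2\cdot1)})$, consistent with $K=\lceil n^0\rceil=1$. For the inductive step, I would formalize the identity used informally around \eqref{eq:var1:ub}: for the order-$\ell$ statistic one expands $\var$ of the relevant empirical symmetric polynomial into a ``diagonal'' term of size $O(1/K)\cdot(\text{moment gap})$ plus an ``off-diagonal'' term that is itself a higher-order correlation $\mu_{2\ell}-\mu_\ell^2$. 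Concretely, for the averaged product statistic $\mv_{\ell'}$ estimating $\mu_{\ell'}$, one shows $\var(\mv_{\ell'})\le \tfrac{c}{K}\,\mu_{\ell'} + |\mu_{2\ell'}-\mu_{\ell'}^2| + (\text{lower-order cross terms that are also }O(1/K))$ using exchangeability (permutation-invariance of all moments) to collapse the combinatorial sum into a bounded number of moment types, each weighted by the appropriate count which is $\Theta(1/K)$ for terms with a repeated index and $\Theta(1)$ for the fully-distinct term.

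Next I would chain the Bernstein inequalities. By hypothesis every correlation gap $|\mu_{2\ell'}-\mu_{\ell'}^2|$ for $\ell'\le\ell$ a power of two is $O(1/K)$, so the variance of each order-$\ell'$ statistic is $O(1/K)$; feeding this into \eqref{eq:bernstein} gives an estimate of $\mu_{\ell'}$ with error $O(\sqrt{1/(nK)}+1/n)$. Using this to replace the true higher moment $\mu_{2\ell'}$ by an empirical upper bound $\overline\muv_{2\ell'}$ introduces an additional $O(\sqrt{1/(nK)}+1/n)$ slack — but crucially this slack enters the \emph{variance} bound for $\mv_{\ell'/2}$, hence contributes only its square root, $O((nK)^{-1/4}+n^{-1/2})$, to the CI for $\mu_{\ell'/2}$, and so on down the chain. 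Tracking the exponents: at recursion depth $j$ (counting from the top, $j=0,\ldots,\log_2\ell$) the accumulated error term is of the form $n^{-(2^{j+1}-1)/2^{j+1}}$ times a bounded power of a correlation gap, plus the leading $\sqrt{1/(nK)}$ from the variance floor. With $K=\lceil n^{(\ell-1)/\ell}\rceil$ we have $1/\sqrt{nK}=O(n^{-(2\ell-1)/(2\ell)})$, and one checks that every intermediate term $n^{-(2^{j+1}-1)/2^{j+1}}$ for $j<\log_2\ell$ is dominated by $n^{-(2\ell-1)/(2\ell)}$ exactly when it is multiplied by the extra $1/K$-type factor it carries — this is the content of the exponent bookkeeping already displayed in \eqref{eq:2nd-order:bernstein} and \eqref{eq:4th-order-bernstein}, which I would state as a general lemma and prove by the same induction.

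The main obstacle I anticipate is not the probabilistic inequalities — those are routine Bernstein — but the combinatorial variance expansion: showing cleanly that $\var$ of an order-$\ell'$ U-statistic-like average over an exchangeable binary vector decomposes into exactly one $\Theta(1)$-weighted term $\mu_{2\ell'}-\mu_{\ell'}^2$ plus a collection of terms each carrying a genuine $\Theta(1/K)$ (or smaller) combinatorial prefactor, \emph{uniformly} in $\ell'$ and $K$, and that the lower-order gaps appearing there are controlled by the hypothesis (which only assumes control of gaps at powers of two). I would handle this by grouping index-tuples according to their intersection pattern, using exchangeability to assign each group a single moment value $\mu_r$ with $r\le 2\ell'$, bounding $\mu_r\le 1$, and counting each group's size as $O(K^{-(\text{number of forced coincidences})})$; one then only needs the power-of-two gaps because the recursion visits moments $\mu_1,\mu_2,\mu_4,\ldots$ and the off-diagonal dominant term at step $\ell'$ is precisely $\mu_{2\ell'}-\mu_{\ell'}^2$. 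A secondary technical point is union-bounding over the $O(\log\ell)$ nested Bernstein events and rescaling $\beta$ by a constant factor, and ensuring the low-order $1/n^{\text{(stuff)}}$ additive terms (the ``$2\log(2/\beta)/3n$'' tails) never dominate, which holds since $(2\ell-1)/(2\ell)<1$.
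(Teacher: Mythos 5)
Your proposal is correct and takes essentially the same route as the paper: both unroll the variance recursion $\sigma_{\ell'}^2 = O(1/K + |\mu_{2\ell'}-\mu_{\ell'}^2|)$ one power of two at a time, feeding Bernstein bounds on the empirical moments back into the variance, and then close with $K = \lceil n^{(\ell-1)/\ell}\rceil$; the paper presents this as a direct recursion while you phrase it as induction, but the bookkeeping is identical. You also correctly flag the combinatorial variance identity (grouping index tuples by intersection pattern so that only the fully-distinct term has a $\Theta(1)$ coefficient) as the one step the paper leaves implicit for general $\ell$.
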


\section{Simulations: Auditing the Gaussian Mechanism} \label{sec:gaussian}

\begin{figure}[t]
    \centering
    \includegraphics[width=\linewidth]{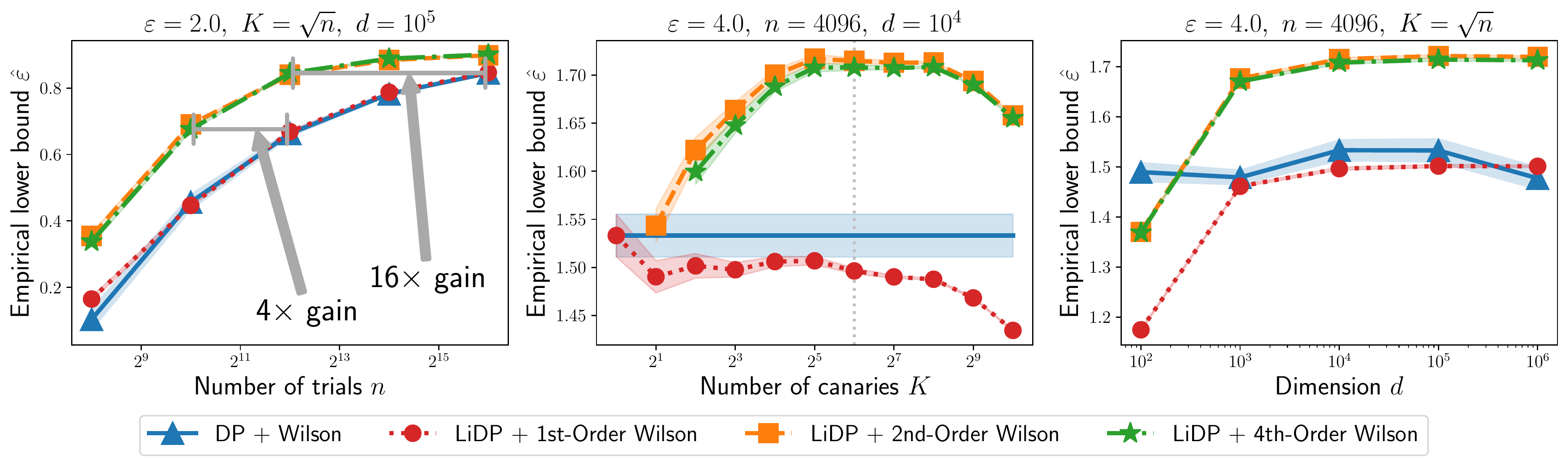}
    \caption{\small 
    \textbf{Left}: For Gaussian mechanisms, the proposed \pdp-based auditing with $K$ canaries provides significant gain in the require number of trials to achieve a desired level of lower bound $\hat\eps$ on the privacy. \textbf{Center}: Increasing the number of canaries trades off the bias and the variance, with our prescribed $K=\sqrt{n}$ achieving a good performance. \textbf{Right}: Increasing the dimension makes the canaries less correlated, thus achieving smaller confidence intervals, and larger $\hat\eps$.  The shaded area denotes the standard error over $25$  repetitions.}
    \label{fig:gaussian}
\end{figure}

\myparagraph{Setup}
We consider a simple sum query $q(D) = \sum_{z \in D} z$ over the unit sphere $\Zcal = \{z \in \reals^d \, : \, \norm{z}_2 = 1\}$. We want to audit a Gaussian mechanism that returns $q(D)+ \sigma \bxi$ with standard Gaussian $\bxi\sim\Ncal(0, \id_{d})$ and $\sigma$ calibrated to ensure $(\varepsilon,\delta)$-DP. We assume black-box access, where we do not know what mechanism we are auditing and we only access it through samples of the outcomes. A white-box audit is discussed in \S\ref{sec:prior_strong}. We apply our new recipe with  canaries sampled uniformly at random from $\Zcal$. 
Following standard methods~\cite[e.g.][]{dwork2015robust}, we declare that a canary $\bc_k$ is present if $\bc_k\T \Acal(\bD) > \tau$ for a threshold $\tau$ learned from separate samples. For more details and additional results, see \Cref{sec:a:gaussian}. 
A broad range of values of $K$ (between 32 and 256 in \Cref{fig:gaussian} middle) leads to good performance in auditing \pdp. We use  $K=\sqrt{n}$ (as suggested by our analysis in \eqref{eq:2nd-order:bernstein}) and the 2\textsuperscript{nd}-order Wilson  estimator (as gains diminish rapidly afterward) as a reliable default. 

\myparagraph{Sample Complexity Gains}  In \Cref{fig:gaussian} (left), the proposed approach of injecting $K$ canaries with the 2\textsuperscript{nd} order Wilson interval (denoted ``\pdp+2\textsuperscript{nd}-Order Wilson'') reduces the number of trials, $n$, needed to reach the same empirical lower bound, $\hat \eps$, by  $4\times$ to $16\times$, compared to the  baseline of injecting a single canary (denoted ``DP+Wilson''). We achieve $\hat \eps_n = 0.85$ with $n=4096$ (while the baseline requires $n=65536$) and  $\hat\eps_n= 0.67$ with $n=1024$ (while the baseline requires $n=4096$).

\myparagraph{Number of Canaries and Bias-Variance Tradeoffs}
In Fig.~\ref{fig:gaussian} (middle), \pdp auditing with 2\textsuperscript{nd}/4\textsuperscript{th}-order CIs improve with increasing canaries up to a point and then decreases.
This is due to a bias-variance tradeoff, which we investigate further in \Cref{fig:gaussian-2} (left).
Let $\hat\eps(K, \ell)$ denote the empirical privacy lower bound with $K$ canaries using an $\ell$\textsuperscript{th} order interval, e.g., the baseline is $\hat \eps(1, 1)$. Let the bias from injecting $K$ canaries and the variance gain from $\ell$\textsuperscript{th}-order interval respectively be 
\begin{align} \label{eq:bias-variance}
    \Delta\text{Bias}(K) := \hat \eps(K, 1) - \hat \eps(1, 1)\,,
    \quad\text{and}\quad
    \Delta\text{Var}(K, \ell) := \hat \eps(K, \ell) - \hat \eps(K, 1) \,.
\end{align}
In \Cref{fig:gaussian-2} (left), the gain $\Delta\text{Bias}(K)$ from bias is negative and gets worse with increasing $K$; when testing for each canary, the $K-1$ other canaries introduce more randomness that makes the test more private and hence lowers the $\hat\eps$. 
The gain $\Delta\text{Var}(K)$ in variance is positive and increases with $K$ before saturating. This improved ``variance'' of the estimate is a key benefit of our framework.
The net improvement $\hat \eps(K, \ell) - \hat\eps(1, 1)$ is a sum of these two effects. This trade-off between bias and variance explains the concave shape of  $\hat\eps$ in $K$. 

\myparagraph{Correlation between Canaries}
Based on \eqref{eq:2nd-order:bernstein}, this improvement in the variance can further be examined by looking at the term $|\hat \muv_2 - \hat \muv_1^2|$ that leads to a narrower 2\textsuperscript{nd}-order Wilson interval. 
The log-log plot of this term in \Cref{fig:gaussian-2} (middle) is nearly parallel to the dotted $1/K$ line (slope = $-0.93$), meaning that it decays roughly as $1/K$.\footnote{
The plot of $y=cx^a$ in log-log scale is a straight line with slope $a$ and intercept $\log c$.
}
This indicates that we get close to a $1/\sqrt{nK}$ confidence interval as desired. Similarly, we get that $|\hat \muv_4 - \hat \muv_2^2|$ decays roughly as $1/K$ (slope = $-1.05$). However, the 4\textsuperscript{th}-order estimator offers only marginal additional improvements in the small $\eps$ regime (see \Cref{sec:a:gaussian}). Thus, the gain diminishes rapidly in the order of our estimators.

\myparagraph{Effect of Dimension on the Bias and Variance}
As the dimension $d$ increases,  the \pdp-based lower bound becomes tighter monotonically as we show in \Cref{fig:gaussian} (right). 
This is due to both the bias gain, as in \eqref{eq:bias-variance}, improving (less negative) and the variance gain improving  (more positive). With increasing $d$, the variance of our estimate reduces because the canaries become less correlated. 
Guided by Eq.~(\ref{eq:2nd-order:bernstein},\ref{eq:4th-order-bernstein}), we measure the relevant correlation measures $|\hat \muv_2 - \hat \muv_1^2|$ and $|\hat \muv_4 - \hat \muv_2^2|$ in \Cref{fig:gaussian-2} (right). Both decay approximate as $1/d$  (slope = $-{1.06}$ and $-{1.00}$ respectively, ignoring the outlier at $d=10^6$). This suggests that, for Gaussian mechanisms, the corresponding  XBern distribution resulting from our recipe behaves favorably as $d$ increases.

\begin{figure}[t]
    \centering
    \includegraphics[width=\linewidth]{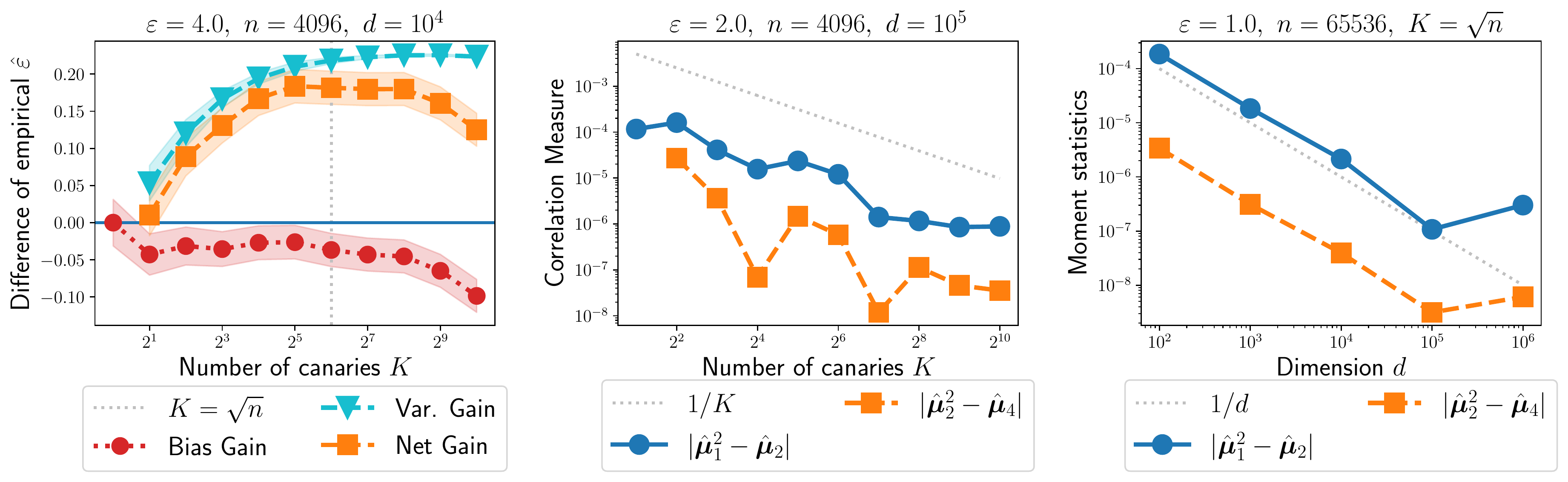}
    \caption{\small 
    \textbf{Left}: Separating the effects of bias and variance in auditing \pdp; cf.~definition \eqref{eq:bias-variance}.
    \textbf{Center \& Right}: The correlations between the test statistics of the canaries %
    decrease with $K$ and $d$, achieving smaller CIs.
    }
    \label{fig:gaussian-2}
\end{figure}

\section{Lifting Existing Canary Designs}
\label{sec:liftingcanary}
Several prior works \cite[e.g.,][]{jagielski2020auditing,nasr2021adversary,tramer2022debugging,maddock2023canife}, focus on designing stronger canaries to improve the lower bound on $\varepsilon$. We provide two concrete examples of how to {\em lift} these canary designs to be compatible with our framework while inheriting their strengths. We refer to \Cref{sec:a:canary} for further details. 

We impose two criteria on the distribution $P_\canary$ over canaries for auditing \pdp.
First, the injected canaries are easy to detect, so that the probabilities on the left side of \eqref{eq:pdp_average} are large and those on the right side are small. Second, a canary $\cv \sim P_\canary$, if included in the training of a model $\theta$, is unlikely to change the membership of $\theta \in R_{\cv'}$ for an independent canary $\cv' \sim P_\canary$. 
Existing canary designs already impose the first condition to audit DP using \eqref{eq:dp}. The second condition ensures that the canaries are uncorrelated, allowing our adaptive CIs to be smaller, as we discussed in \S\ref{sec:stat}.

\myparagraph{Data Poisoning via Tail Singular Vectors}
ClipBKD~\cite{jagielski2020auditing} adds as canaries the tail singular vector of the input data (e.g., images). This ensures that the canary is out of distribution, allowing for easy detection. 
We lift ClipBKD by defining the distribution $P_\canary$ as the uniform distribution over the $p$ tail singular vectors of the input data. If $p$ is small relative to the dimension $d$ of the data, then they are still out of distribution, and hence, easy to detect. For the second condition, the orthogonality of the singular vectors ensures the interaction between canaries is minimal, as measured empirically.

\myparagraph{Random Gradients}
The approach of \cite{andrew2023one} samples random vectors (of the right norm) as canary gradients, assuming a grey-box access where we can inject gradients. Since random vectors are nearly orthogonal to any fixed vector in high dimensions, their presence is easy to detect with a dot product. Similarly, any two i.i.d.\ canary gradients are roughly orthogonal, leading to minimal interactions.

\section{Experiments}\label{sec:expt}

We compare the proposed \pdp  auditing recipe relative to the standard one for DP training of machine learning models. We will open-source the code to replicate these results.

\myparagraph{Setup}
We test with two classification tasks:
FMNIST~\cite{xiao2017fashionmnist} is a 10-class grayscale image classification dataset, while Purchase-100 is a sparse dataset with $600$ binary features and $100$ classes~\cite{acquire-valued-shoppers-challenge,shokri2017membership}. We train a linear model and a multi-layer perceptron (MLP) with 2 hidden layers using DP-SGD~\cite{DP-DL} to achieve $(\eps, 10^{-5})$-DP with varying values of $\eps$. The training is performed using cross-entropy for a fixed epoch budget and a batch size of $100$. We refer to \Cref{sec:a:expt} for specific details.

\myparagraph{Auditing}
We audit the \pdp using the two types of canaries from \S\ref{sec:liftingcanary}: data poisoning and random gradient canaries. We vary the number $K$ of canaries and the number $n$ of trials. We track the empirical lower bound obtained from the Wilson family of confidence intervals. 
We compare this with auditing DP, which coincides with auditing \pdp with $K=1$ canary.
We audit \emph{only the final model} in all the experiments.

\myparagraph{Sample Complexity Gain}
\Cref{tab:expt:sample-complexity} shows the reduction in the sample complexity from auditing \pdp. 
For each canary type, auditing \pdp is better $11$ out of the $12$ settings considered.
The average improvement (i.e., the harmonic mean over the table) for data poisoning is $2.3 \times$, while for random gradients, it is $3.0 \times$.
Since each trial is a full model training run, this improvement can be quite significant in practice.
This improvement can also be seen visually in \Cref{fig:expt:main} (left two).

\myparagraph{Number of Canaries}
We see from \Cref{fig:expt:main} (right two) that \pdp auditing on real data behaves similar to \Cref{fig:gaussian} in \S\ref{sec:gaussian}. We also observe the bias-variance tradeoff in the case of data poisoning (center right).
The choice $K=\sqrt{n}$ is competitive with the best value of $K$, validating our heuristic.
Overall, these results show that the insights from \S\ref{sec:gaussian} hold even with the significantly more complicated DP mechanism involved in training models privately.

\begin{table}[t]
    \begin{center}
    \begin{adjustbox}{max width=0.9\linewidth}
    \small
    \renewcommand{\arraystretch}{1.15}
\begin{tabular}{cccccccccc}
\toprule
\multirow[c]{2}{*}{\begin{tabular}{c}
\textbf{Dataset} / \textbf{Model}\end{tabular}}
&  
\multirow[c]{2}{*}{\begin{tabular}{c}
\textbf{C.I.}/\\\textbf{(Wilson)}\end{tabular}}
& \multicolumn{4}{c}{\textbf{Data Poisoning Canary}} & \multicolumn{4}{c}{\textbf{Random Gradient Canary}} \\
\cmidrule(lr){3-6}
\cmidrule(lr){7-10}
 &  & $\varepsilon=2$ & $\varepsilon=4$ & $\varepsilon=8$ & $\varepsilon=16$ & $\varepsilon=2$ & $\varepsilon=4$ & $\varepsilon=8$ & $\varepsilon=16$ \\
 \midrule
\multirow[c]{2}{*}{\begin{tabular}{l}\textbf{FMNIST} / \textbf{Linear}\end{tabular}} & 2nd-Ord. & \tabemphbad{} $0.68$ & \tabemphgood{} $3.31$ & \tabemphgood{} $2.55$ & \tabemphgood{} $4.38$ & \tabemphgood{} $2.69$ & \tabemphgood{} $3.34$ & \tabemphgood{} $5.98$ & \tabemphgood{} $5.06$ \\
 & 4th-Ord. & \tabemphbad{} $0.42$ & \tabemphgood{} $2.68$ & \tabemphgood{} $2.29$ & \tabemphgood{} $3.76$ & \tabemphgood{} $2.66$ & \tabemphgood{} $2.70$ & \tabemphgood{} $7.75$ & \tabemphgood{} $4.19$ \\
 \hline
\multirow[c]{2}{*}{\begin{tabular}{l}\textbf{FMNIST} / \textbf{MLP}\end{tabular}} & 2nd-Ord. & 
\tabemphgood{} $4.80$ & \tabemphgood{} $1.46$ & \tabemphgood{} $2.95$ & \tabemphgood{} $1.60$ & \tabemphgood{} $4.62$ & \tabemphgood{} $2.88$ & \tabemphgood{} $2.33$ & \tabemphgood{} $5.46$ \\
 & 4th-Ord. & 
 \tabemphgood{} $4.42$ & \tabemphgood{} $2.49$ & \tabemphgood{} $2.37$ & \tabemphgood{} $1.30$ & \tabemphgood{} $3.95$ & \tabemphgood{} $2.81$ & \tabemphgood{} $2.02$ & \tabemphgood{} $4.60$ \\
 \hline
\multirow[c]{2}{*}{\begin{tabular}{l}\textbf{Purchase} / \textbf{MLP}\end{tabular}} & 2nd-Ord. & 
\tabemphgood{} $3.14$ & \tabemphgood{} $1.06$ & \tabemphgood{} $1.41$ & \tabemphgood{} $9.28$ & \tabemphgood{} $1.41$ & \tabemphbad{}$0.71$ & \tabemphgood{} $4.30$ & \tabemphgood{} $2.84$ \\
 & 4th-Ord. & 
 \tabemphgood{} $2.84$ & 
 \tabemphgood{} $1.09$ & \tabemphgood{} $1.36$ & \tabemphgood{} $6.99$ & \tabemphgood{} $1.29$ & \tabemphbad{} $0.42$ & \tabemphgood{} $4.35$ & \tabemphgood{} $2.38$ \\
 \bottomrule
\end{tabular}

     \end{adjustbox}
    \end{center}
    \caption{\small %
    The (multiplicative) improvement in the sample complexity from auditing \pdp with $K=16$ canaries compared to auditing DP with $n=1000$ trials. 
    We determine this factor by linearly interpolating/extrapolating $\hat \eps_n$; %
    cf. \Cref{fig:expt:main} (left) for a visual representation of these numbers.
    For instance, an improvement of $3.31$ means \pdp needs $n\approx 1000 / 3.31 \approx 302$ trials to reach the same empirical lower bound that DP reaches at $n=1000$.
    \ifnum \value{arxiv}=0  {
    \vspace{-2em}
    } \fi
    }
    \label{tab:expt:sample-complexity}
\end{table}

\begin{figure}
    \centering
    \includegraphics[width=\linewidth]{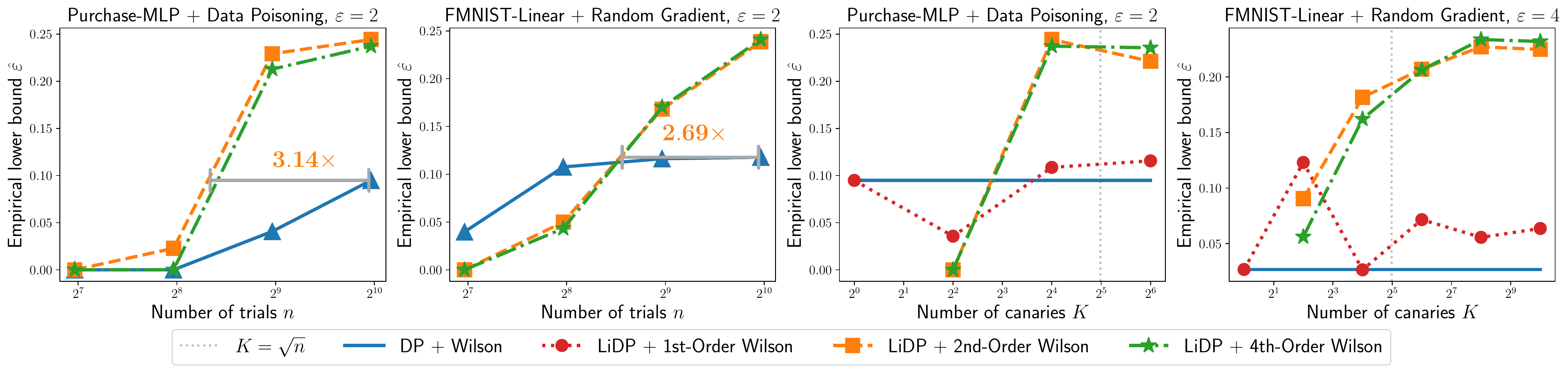}
    \caption{\small \textbf{Left two}: \pdp-based auditing with $K>1$ canaries achieves the same lower bound $\hat \eps$ on the privacy loss with fewer trials.
    \textbf{Right two}: \pdp auditing is robust to $K$; the prescribed $K=\sqrt{n}$ is a reliable default.
 }
    \label{fig:expt:main}
\end{figure} 
\section{Conclusion}
\label{Sec:discussion} 
We introduce a new framework for auditing differentially private learning. Diverging from the standard practice of adding a single deterministic canary, we propose a new recipe of adding multiple i.i.d. random canaries. This is made rigorous by an expanded definition of privacy that we call \pdp. We provide novel higher-order confidence intervals that can automatically adapt to the level of correlation in the data. 
We empirically demonstrate that there is a potentially significant gain in sample dependence of the confidence intervals, achieving favourable bias-variance tradeoff.

Although any rigorous statistical auditing approach can benefit from our framework, it is not yet clear how other  popular approaches~\cite[e.g.][]{carlini2019secret} for measuring memorization can be improved with randomization.
Bridging this gap is an important practical direction for future research.
It is also worth considering how our approach can be adapted to audit the diverse definitions of privacy in machine learning~\cite[e.g.][]{hannun2021measuring}.

\myparagraph{Broader Impact}
Auditing private training involves a trade-off between the computational cost and the tightness of the guarantee. This may not be appropriate for all practical settings. For deployment in production, it worth further studying approaches with minimal computational overhead~\cite[e.g.][]{carlini2019secret,andrew2023one}.

\subsection*{Acknowledgements}
We acknowledge Lang Liu for helpful discussions regarding hypothesis testing and conditional probability, and Matthew Jagielski for help in debugging.

\bibliographystyle{abbrvnat}
\bibliography{preamble/references}

\appendix
\addcontentsline{toc}{section}{Appendix} %

\clearpage
\part{Appendix} %
\parttoc %
\clearpage

\section{Related Work} 
\label{sec:related} 

Prior to \cite{ding2018detecting}, privacy auditing required some access to the description of the mechanism.  \cite{Mcs09,RP10,roy2010airavat,BKO12,barthe2013verified,barthe2014proving,gaboardi2013linear,tschantz2011formal} provide platforms with a specific set of functions to be used to implement the mechanism, where the end-to-end privacy of the source code can be automatically verified. Closest to our setting is the work of \cite{dixit2013testing}, where statistical testing was first proposed for privacy auditing, given access to an oracle that returns the exact probability measure of the mechanism. However, the guarantee is only for a relaxed notion of DP from \cite{bhaskar2011noiseless} and the run-time depends super-linearly on the size of the output domain.  

The pioneering work of \cite{ding2018detecting} was the first to propose practical methods to audit privacy claims given a black-box access to a mechanism.
The focus was on simple queries that do not involve any training of models.
This is motivated by  \cite{chen2015privacy,lyu2016understanding}, where even simple mechanisms falsely reported privacy guarantees. %
For such mechanisms, sampling a large number, say $500,000$ in \cite{ding2018detecting}, of outputs is computationally easy, and no effort was made in \cite{ding2018detecting} to improve the statistical trade-off.
However, for private learning algorithms,  training such a large number of models is computationally prohibitive. The main focus of our framework is to improve this statistical trade-off for auditing privacy. We first survey recent breakthroughs in designing stronger canaries, which is orthogonal to our main focus.

\subsection{Auditing Private Machine Learning with Strong Canaries}
\label{sec:prior_strong}

Recent breakthroughs in auditing are accelerated by advances in privacy attacks, in particular membership inference. An attacker performing membership inference would like to determine if a particular data sample was part of the training set. Early work on membership inference~\cite{homer2008resolving, sankararaman2009genomic, bun2018fingerprinting, dwork2015robust} considered algorithms for statistical methods, and more recent work demonstrates black-box attacks on  ML algorithms~\cite{shokri2017membership, yeom2018privacy, jayaraman2019evaluating, ye2021enhanced, LiRA}.
\cite{jayaraman2019evaluating} compares different notions of privacy by measuring privacy empirically for the composition of differential privacy \cite{kairouz2015composition}, concentrated differential privacy \cite{dwork2016concentrated,bun2016concentrated}, and R\'enyi differential privacy \cite{mironov2017renyi}.
This is motivated by \cite{rahman2018membership}, which compares different DP mechanisms by measuring the success rates of membership inference attacks.
\cite{hyland2019empirical} attempts to measure the intrinsic privacy of stochastic gradient descent (without additional noise as in DP-SGD) using canary designs from membership inference attacks. Membership inference attacks have been shown to have higher success when the adversary can poison the training dataset~\cite{TruthSerum}.

A more devastating privacy attack is the extraction or reconstruction of the training data, which is particularly relevant for generative models such as large language models (LLMs). Several papers showed that LLMs tend to memorize their training data~\cite{MemorizationGPT2,carlini2023quantifying},  allowing an adversary to prompt the generative models and extract samples of the training set.  The connection between the ability of an attacker to perform training data reconstruction and DP guarantees has been shown in recent work~\cite{pmlr-v162-guo22c, hayes2023bounding}.

When performing privacy auditing, a stronger canary design increases the success of the adversary in the distinguishing test and improves the empirical privacy bounds.
The resulting hypothesis test can tolerate larger confidence intervals and requires less number of samples.  Recent advances in privacy auditing have focused on designing such stronger canaries.  \cite{jagielski2020auditing} designs data poisoning canaries, in the direction of the lowest variance of the training data. This makes the canary out of distribution, making it easier to detect. \cite{nasr2021adversary} proposes attack surfaces of varying capabilities. For example, a gradient attack canary returns a gradient of choice when accessed by DP-SGD. It is shown that, with more powerful attacks,  the canaries become stronger and the lower bounds become higher. \cite{tramer2022debugging} proposes using an example from the baseline training dataset, after changing the label, and introduces a search procedure to find a strong canary.
More recently, \cite{nasr2023tight} proposes a significantly improved auditing scheme for DP-SGD under a white-box access model where $(i)$ the auditor knows that the underlying mechanism is DP-SGD with a spherical Gaussian noise with unknown variance, and $(ii)$ all intermediate models are revealed. Since each coordinate of the model update provides an independent sample from the same Gaussian distribution, sample complexity is dramatically improved. %
\cite{maddock2023canife} proposes CANIFE, a novel canary design method that finds a strong data poisoning canary adaptively under the federated learning scenario.

Prior work in this space shows that privacy auditing can be performed via privacy attacks, such as membership inference or reconstruction, and strong canary design results in better empirical privacy bounds.
 We emphasize that our aim is not to innovate on optimizing canary design for performing privacy auditing. Instead, our framework can seamlessly adopt recently designed canaries and inherit their strengths as demonstrated in \S\ref{sec:liftingcanary} and \S\ref{sec:expt}.

\subsection{Improving Statistical Trade-offs in Auditing}
\label{sec:prior_efficient}
Eq.~\eqref{eq:dp_finite} points to two orthogonal directions that can potentially improve the sample dependence: designing stronger canaries and improving the sample dependence of the confidence intervals. The former was addressed in the previous section. There is relatively less work in improving the statistical dependence, which our framework focuses on.

Given a pair of neighboring datasets, $(D_0, D_1)$, and for a query with discrete output in a finite space, the statistical trade-off of estimating privacy parameters was studied in \cite{gilbert2018property} where a plug-in estimator is shown to achieve an error of $O(\sqrt{de^{2\varepsilon}/n})$, where $d$ is the size of the discrete output space.  \cite{LO19} proposes a sub-linear sample complexity algorithm that achieves an error scaling as $\sqrt{de^{\varepsilon}/n\log n}$, based on polynomial approximation of a carefully chosen degree to optimally trade-off bias and variance motivated by  \cite{valiant2017estimating,han2020minimax}. Similarly, \cite{kutta2022lower} provides a lower bound for auditing R\'enyi differential privacy. \cite{bichsel2021dp} trains a classifier for the binary hypothesis test and uses the classifier to design rejection sets. \cite{askin2022statistical} proposes local search to find the rejection set efficiently. More recently, \cite{zanella2022bayesian} proposes numerical integration over a larger space of false positive rate and true positive rate to achieve better sample complexity of the confidence region in the two-dimensional space. Our framework can be potentially applied to this confidence region scenario, which is an important future research direction.

\subsection{Connections to Other Notions of Differential Privacy}
\label{sec:bayesianDP}

Similar to \pdpfull, a line of prior works \cite{kifer2014pufferfish,yang2015bayesian,song2017pufferfish,triastcyn2020bayesian} generalizes DP to include a distributional assumption over the dataset. However, unlike \pdpfull, they are motivated by the observation that DP is not sufficient for preserving privacy when the samples are highly correlated. For example, upon releasing the number of people infected with a highly contagious flu within a tight-knit community, the usual Laplace mechanism (with sensitivity one) is not sufficient to conceal the likelihood of one member getting infected when the private count is significantly high. One could apply group differential privacy to hide the contagion of the entire population, but this will suffer from excessive noise. Ideally, we want to add a noise proportional to the {\em expected} size of the infection. Pufferfish privacy, introduced in \cite{kifer2014pufferfish}, achieves this with a generalization of DP that takes into account prior knowledge of a class of potential distributions over the dataset. A special case of $\varepsilon$-Pufferfish with a specific choice of parameters recovers a special case of our \pdpfull in Eq.~\eqref{eq:pdp} with $\delta=0$ \cite[Section 3.2]{kifer2014pufferfish}, where a special case of \Cref{prop:pdp:dp} has been proven for pure DP \cite[Theorem 3.1]{kifer2014pufferfish}.
However, we want to emphasize that our \pdpfull is motivated by a completely different problem of auditing differential privacy and is critical to breaking the barriers in the sample complexity.

\section{Properties of \pdpfull and Further Details} \label{sec:a:pdp}

\subsection{Equivalence Between DP and \pdp}
\label{sec:pdp-properties}

In contrast to the usual $(\eps, \delta)$-DP in Eq.~\eqref{eq:dp}, the probability in \pdp is over both the internal randomness of the algorithm $\Acal$ and the distribution $\Pcal$ over the triplet $(\bD_0, \bD_1, \bR)$.
Since we require that the definition holds only for lifted distributions $\Pcal$ that are independent of the algorithm $\Acal$, it is easy to show its equivalence to the usual notion of $(\eps, \delta)$-DP.

\begin{theorem_unnumbered}[\ref{prop:pdp:dp}]
    A randomized algorithm $\Acal$ is $(\eps, \delta)$-\pdp iff $\Acal$ is $(\eps, \delta)$-DP.
\end{theorem_unnumbered}
\begin{proof}
    Suppose $\Acal$ is $(\eps, \delta)$-\pdp. Fix a pair of neighboring datasets $D_0, D_1$ and an outcome $R \subset \Rcal$. Define $\Pcal_{D_0, D_1, R}$ as the point mass on $(D_0, D_1, R)$, i.e., 
    \[
        \D \Pcal_{D_0, D_1, R}(D_0', D_1', R') = \mathbbm{1}(D_0' = D_0, \, D_1' = D_1,\, R' = R) \,,
    \]
    so that $\prob_{\Acal, \Pcal_{D_0, D_1, R}}(\Acal(\bD_0) \in \bR) = \prob(\Acal(D_0) \in R)$ and 
    similarly for $D_1$.
    Then, applying the definition of $(\eps, \delta)$-\pdp w.r.t. the distribution $\Pcal_{D_0, D_1, R}$ gives
    \[
        \prob_\Acal(\Acal(D_1) \in R) \le e^\eps\, \prob_\Acal(\Acal(D_0) \in R) + \delta \,.
    \]
    Since this holds for any neighboring datasets $D_0, D_1$ and outcome set $R$, we get that $\Acal$ is $(\eps, \delta)$-DP. 
    
    Conversely, suppose that $\Acal$ is $(\eps, \delta)$-DP. For any distribution $\Pcal$ over pairs of neighboring datasets $\bD_0, \bD_1$ and outcome set $\bR$, we have by integrating the DP definition
    \begin{align} \label{eq:pdp:pf:1}
        \int \prob(\Acal(D_1) \in R) \, \D \Pcal(D_0, D_1, R)
        \le e^\eps \int \prob(\Acal(D_0) \in R) \, \D \Pcal(D_0, D_1, R) + \delta \,.
    \end{align}
    Next,
    we use the law of iterated expectation to get
    \begin{align*}
        \prob_{\Acal, \Pcal}\big(\Acal(\bD_0) \in \bR)\big)
        &= \expect_{\Acal, \Pcal}\left[ \ind\big(\Acal(\bD_0) \in \bR\big) \right] \\
        &= \expect_{(\bD_0, \bD_1, \bR) \sim \Pcal} \left[ 
        \expect_\Acal \left[ 
         \ind\big(\Acal(\bD_0) \in \bR\big) \, \middle\vert
         \bD_0, \bR
        \right]
        \right] \\
        &= \int 
        \expect_\Acal \left[ 
         \ind\big(\Acal(\bD_0) \in \bR\big) \, \middle\vert
         \bD_0 = D_0, \bR = R
        \right]   \, \D \Pcal(D_0, D_1, R) \\
        &\stackrel{(*)}{=} 
         \int 
        \expect_\Acal \left[ 
         \ind\big(\Acal(D_0) \in R \big)
        \right]   \, \D \Pcal(D_0, D_1, R) \\
        &= \int \prob_\Acal\big(\Acal(D_0) \in R\big) \, \D \Pcal(D_0, D_1, R) \,,
    \end{align*}
    where $(*)$ followed from the independence of $\Acal$ and $\Pcal$.
    Plugging this and the 
    analogous expression for $\prob(\Acal(\bD_1) \in \bR)$
    into \eqref{eq:pdp:pf:1} gives us that $\Acal$ is $(\eps, \delta)$-\pdp.
\end{proof}

\subsection{Auditing \pdp with Different Notions of Neighborhood}
\label{sec:a:nbhood}

We describe how to modify the recipe of 
\S\ref{sec:auditing} 
for other notions of neighborhoods of datasets. 
The notion of neighborhood in \Cref{def:dp} is also known as the ``add-or-remove'' neighborhood.

\myparagraph{Replace-one Neighborhood}
Two datasets $D, D' \in \Zcal^*$ are considered neighboring if $|D| = |D'|$ and $|D \setminus D'| = |D' \setminus D| = 1$. Roughly speaking, this leads to privacy guarantees that are roughly twice as strong as the add-or-remove notion of neighborhood in \Cref{def:dp}, as the worst-case sensitivity of the operation is doubled. We refer to \cite{vadhan2017complexity,ponomareva2023dp} for more details.
Just like \Cref{def:dp}, \Cref{def:pdp} can also be adapted to this notion of neighborhood.

\myparagraph{Auditing \pdp with Replace-one Neighborhood}
The recipe of \S\ref{sec:canary} can be slightly modified for this notion of neighborhood.
The main difference is that the null hypothesis must now use $K$ canaries as well, with one fresh canary. 

The alternative hypothesis is the same --- we train a model 
on a randomized training dataset $\bD_1 = D \cup \{\cv_1 , \ldots , \cv_K \}$ augmented with $K$ random canaries drawn i.i.d. from $\Pcal_\canary$.
Under the $j$\textsuperscript{th} null hypothesis for each $j\in [K]$, we construct a coupled dataset $\bD_{0, j} = D \cup \{\cv_1, \ldots, \cv_{j-1}, \cv_j', \cv_{j+1}, \ldots, \cv_K\}$, where $\cv_j'$ is a fresh canary drawn i.i.d. from $\Pcal_\canary$.
This coupling ensures that $(\bD_{0,j} , \bD_1 )$ are neighboring with probability one.
We restrict the rejection region $\bR_j$ to now depend only on $\cv_j$ and not in the index $j$, e.g., we test for the present of $\cv_j$.\footnote{
Note that the test could have depended on $\cv_j'$ as well, but we do not need it here.
}

Note the symmetry of the setup.
Testing for the presence on $\cv_j$ in $\bD_{0, j}$ is exactly identical to testing for the presence of $\cv_j'$ in $\bD_1$.
Thus, we can again rewrite the \pdp condition as 
\begin{align} 
    \frac{1}{K} \sum_{k=1}^K \prob(\Acal(\bD_1) \in \bR_k)
    \;\; \le\;\; \frac{e^\eps}{m}\sum_{j=1}^m \prob(\Acal(\bD_{1}) \in \bR_j') + \delta \,. 
    \label{eq:pdp_average:replce}
\end{align}
Note the subtle difference between \eqref{eq:pdp_average} and \eqref{eq:pdp_average:replce}: both sides depend only on $\Acal(\bD_1)$ and we have completely eliminated the need to train models on $K-1$ canaries.

From here on, the rest of the recipe is identical to \S\ref{sec:auditing} and \Cref{alg:recipe}; we construct \emb confidence intervals for both sides of \eqref{eq:pdp_average:replce} and get a lower bound $\hat\eps$.  
\section{Confidence Intervals for Exchangeable Bernoulli Means}
\label{sec:a:ci}
We give a rigorous definition of the multivariate Exchangeable Bernoulli (\emb) distributions and derive their confidence intervals.
We also give proofs of correctness of the confidence intervals.

\begin{definition}[\emb Distributions]
    A random vector $(\xv_1, \ldots, \xv_K) \in \{0, 1\}^K$ is said to be distributed as $\emb_K(\mu_1, \ldots, \mu_K)$ if: \begin{itemize}
        \item $\xv_1, \ldots, \xv_K$ is exchangeable, i.e., 
            the vector $(\xv_1, \ldots, \xv_K)$ is identical in distribution to $(\xv_{\pi(1)}, \ldots, \xv_{\pi(K)})$ for any permutation $\pi: [K] \to [K]$, and,
        \item for each $\ell = 1, \ldots, K$, we have $\expect[\mv_\ell] = \mu_\ell$, where
        \begin{align} \label{eq:mu_l}
            \mv_\ell := \frac{1}{\binom{K}{\ell}} \sum_{j_1 < \cdots < j_\ell \in [K]} \xv_{j_1} \cdots \xv_{j_\ell} \,.
        \end{align}
    \end{itemize}
\end{definition}

We note that the $\emb_K$ distribution is fully determined by its $K$ moments $\mu_1, \ldots, \mu_K$. 
For $K=1$, $\emb_1(\mu_1) = \text{Bernoulli}(\mu_1)$ is just the Bernoulli distribution.

The moments $\mv_\ell$ satisfy a computationally efficient recurrence.
\begin{proposition}
\label{prop:xbern-recurrence}
    Let $\xv \sim \emb_K(\mu_1, \ldots, \mu_K)$. We have the recurrence for $\ell = 1, \ldots, K-1$:
    \begin{align} \label{eq:ml-m1-identity}
        \mv_{\ell+1} = \mv_{\ell} \left(\frac{K \mv_1 - \ell}{K - \ell}\right) \,.
\end{align}
\end{proposition}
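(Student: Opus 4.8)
The plan is to establish \eqref{eq:ml-m1-identity} \emph{pathwise}, as an identity between the random variables $\mv_\ell$ of \eqref{eq:mu_l}; exchangeability will not be needed, since \eqref{eq:mu_l} makes sense for any $\{0,1\}^K$-valued vector. The key observation is that each coordinate $\xv_k$ is $\{0,1\}$-valued, so a product $\xv_{j_1}\cdots\xv_{j_\ell}$ equals $1$ exactly when all of $\xv_{j_1},\ldots,\xv_{j_\ell}$ equal $1$. Writing $\bS := \sum_{k=1}^K \xv_k$ for the (random) number of coordinates equal to one, the sum $\sum_{j_1<\cdots<j_\ell}\xv_{j_1}\cdots\xv_{j_\ell}$ appearing in \eqref{eq:mu_l} therefore counts the $\ell$-element subsets of the support $\{k : \xv_k = 1\}$, which is exactly $\binom{\bS}{\ell}$ (with the convention $\binom{a}{b}=0$ whenever $a<b$). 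Hence $\mv_\ell = \binom{\bS}{\ell}\big/\binom{K}{\ell}$ for every $\ell \in [K]$; in particular $\mv_1 = \bS/K$, i.e. $\bS = K\mv_1$.

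Given this reduction, the recurrence becomes a routine binomial-coefficient manipulation. Using the elementary identities $\binom{\bS}{\ell+1} = \binom{\bS}{\ell}\,\frac{\bS-\ell}{\ell+1}$ and $\binom{K}{\ell+1} = \binom{K}{\ell}\,\frac{K-\ell}{\ell+1}$, I would compute
\[
    \mv_{\ell+1} \;=\; \frac{\binom{\bS}{\ell+1}}{\binom{K}{\ell+1}}
    \;=\; \frac{\binom{\bS}{\ell}}{\binom{K}{\ell}}\cdot\frac{\bS-\ell}{K-\ell}
    \;=\; \mv_\ell\cdot\frac{\bS-\ell}{K-\ell}
    \;=\; \mv_\ell\cdot\frac{K\mv_1-\ell}{K-\ell}\,,
\]
where the last step substitutes $\bS = K\mv_1$, and $K-\ell \ge 1$ since $\ell \le K-1$, so there is no division by zero in the denominator $K-\ell$.

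The only point requiring a word of care is the degenerate event $\bS < \ell$, where $\mv_\ell = 0$ and the chain of equalities above formally divides by $\binom{\bS}{\ell}=0$. To handle this cleanly I would first prove the cross-multiplied form $(K-\ell)\,\mv_{\ell+1} = (\bS-\ell)\,\mv_\ell$, which is valid for all values of $\bS$ (when $\bS<\ell$ both sides vanish, since then also $\bS<\ell+1$ forces $\mv_{\ell+1}=0$), and only then divide by $K-\ell\ge 1$ and substitute $\bS = K\mv_1$. I do not anticipate any genuine obstacle: the content of the statement is entirely the translation of the symmetric-function sum into $\binom{\bS}{\ell}$, after which everything is a sample-pathwise algebraic identity.
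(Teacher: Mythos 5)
Your proof is correct, and it is genuinely different from the one in the paper. The paper's argument double-counts the sum $\sum_{j_1 < \cdots < j_\ell;\, j_{\ell+1}} x_{j_1}\cdots x_{j_{\ell+1}}$, once as the product $\bigl(\sum_{j_1<\cdots<j_\ell} x_{j_1}\cdots x_{j_\ell}\bigr)\bigl(\sum_{j_{\ell+1}} x_{j_{\ell+1}}\bigr) = K\binom{K}{\ell}m_\ell m_1$, and once by splitting according to whether $j_{\ell+1}$ coincides with one of $j_1,\ldots,j_\ell$ (using $x_j^2 = x_j$) or is fresh; equating and simplifying gives the recurrence. Your route instead identifies the structural fact $\mv_\ell = \binom{\bS}{\ell}\big/\binom{K}{\ell}$ with $\bS = \sum_k \xv_k$, reducing the recurrence to the one-line binomial identity $\binom{\bS}{\ell+1}\big/\binom{K}{\ell+1} = \bigl(\binom{\bS}{\ell}\big/\binom{K}{\ell}\bigr)\cdot\frac{\bS-\ell}{K-\ell}$. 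This is arguably more illuminating: it makes explicit that the vector of symmetric moments $(\mv_1,\ldots,\mv_K)$ is a deterministic function of the single statistic $\bS$ (the XBern distribution is a mixture over $\bS$ of uniform-on-support laws), whereas the paper's calculation stays at the level of index bookkeeping. One small remark: the caution you raise about the degenerate case $\bS < \ell$ is in fact unnecessary as written, because the identity $\binom{\bS}{\ell+1} = \binom{\bS}{\ell}\,\frac{\bS-\ell}{\ell+1}$ already holds for every nonnegative integer $\bS$ (both sides vanish when $\bS \le \ell$), and no division by $\binom{\bS}{\ell}$ is ever performed — you only divide by $\binom{K}{\ell+1} > 0$. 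The cross-multiplied reformulation you propose is harmless but not needed.
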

Computing the $\ell$\textsuperscript{th} moment thus takes time $O(K\ell)$ rather than $O(K^\ell)$ by naively computing the sum in \eqref{eq:mu_l}.
\begin{proof}[Proof of \Cref{prop:xbern-recurrence}]
We show that this holds for any fixed vector $(x_1, \ldots, x_K) \in \{0, 1\}^K$ and their corresponding moments $m_1, \ldots, m_K$ as defined in \eqref{eq:mu_l}.
    For any $1 \le \ell < K$, consider the sum over $\ell+1$ indices $j_1, \ldots, j_{\ell+1} \in [K]$ 
    such that $j_1 < \cdots < j_\ell$ and $j_{\ell+1}$ can take all possible values in $[K]$. We have,
    \begin{align} \label{eq:pf:xbern-1}
        \sum_{j_1 < \cdots < j_\ell \, ;\, j_{\ell+1}}
        x_{j_1} \cdots x_{j_{\ell+1}}
        = 
        \left[
        \sum_{j_1 < \cdots< j_\ell} x_{j_1} \cdots x_{j_\ell}
        \right]
        \,\,
        \left[ 
        \sum_{j_{\ell+1}} x_{j_\ell+1}
        \right]
        = K \binom{K}{\ell} m_\ell m_1 \,.
    \end{align}
    On the other hand, out of the $K$ possible values of $j_{\ell+1}$, $\ell$ of them coincide with one of $j_1, \ldots, j_\ell$. In this case, $x_{j_1} \cdots x_{j_{\ell+1}} = x_{j_1} \cdots x_{j_\ell}$ since each $x_j$ is an indicator.
    Of the other possibilities, $j_{\ell+1}$ is distinct from $j_1, \ldots, j_\ell$ and this leads to $\ell+1$ orderings: 
    (1) $j_{\ell+1} < j_1 < \cdots < j_\ell$, (2)
    $j_1 < j_{\ell+1} < j_2 < \cdots < j_\ell$, $\cdots$, and ($\ell+1$) $j_1 < \cdots < j_\ell < j_{\ell +1}$. By symmetry, the sum over each of these is equal. This gives
    \begin{align}
    \nonumber
    \sum_{j_1 < \cdots < j_\ell \, ;\, j_{\ell+1}}
        x_{j_1} \cdots x_{j_{\ell+1}}
    &= 
    \ell \, \sum_{j_1 < \cdots < j_\ell} x_{j_1} \cdots x_{j_\ell} 
    + (\ell + 1) \, 
    \sum_{j_1 < \cdots < j_\ell < j_{\ell+1}} x_{j_1} \cdots x_{j_{\ell + 1}} \\
    &= \ell \binom{K}{\ell} m_\ell + (\ell + 1) \binom{K}{\ell+1} m_{\ell+1} \,.
\label{eq:pf:xbern-2}
    \end{align}
     Combining \eqref{eq:pf:xbern-1} and \eqref{eq:pf:xbern-2} and simplifying the coefficients using $\binom{K}{\ell + 1} / \binom{K}{\ell} = (K - \ell) / (\ell + 1)$ gives
    \[
        (K - \ell) m_{\ell + 1} + \ell m_\ell = K m_\ell m_1 \,. 
    \]
    Rearranging completes the proof.
\end{proof}

\myparagraph{Notation}
In this section, we are interested in giving confidence intervals on the mean $\mu_1$ of a $\emb_K(\mu_1, \ldots, \mu_K)$ random variable from $n$ i.i.d. observations:
\[
    \xv\pow{1}, \ldots, \xv\pow{n} \stackrel{\text{i.i.d.}}{\sim} \emb_K(\mu_1, \ldots, \mu_K) \,.
\] 
We will define the confidence intervals using the empirical mean
\begin{align*}
    \hat \muv_1 = \frac{1}{n}\sum_{i=1}^n \mv_1\pow{i} \quad\text{where}\quad \mv_1\pow{i} = \frac{1}{K} \sum_{j=1}^K \xv_j\pow{i} \,,
\end{align*}
as well as the higher-order moments for $\ell \in[K]$,
\begin{align*}
    \hat \muv_\ell  = 
    \frac{1}{n} \sum_{i=1}^n  \mv_\ell\pow{i}
    \quad \text{where}\quad
    \mv_\ell\pow{i} = \frac{1}{\binom{K}{\ell}} \sum_{j_1 < \cdots < j_\ell \in [K]} \xv_{j_1}\pow{i} \cdots \xv_{j_\ell}\pow{i} \,.
\end{align*}

\subsection{Non-Asymptotic Confidence Intervals}

We start by giving non-asymptotic confidence intervals for the \emb distributions based on the Bernstein bound.

\subsubsection{First-Order Bernstein Intervals}

\begin{algorithm}[t]
\caption{First-Order Bernstein Intervals}
\label{alg:bern:1}
\begin{algorithmic}[1]
\Require
Random vectors $\xv\pow{1}, \ldots, \xv\pow{n} \sim \emb_K(\mu_1, \ldots, \mu_K)$  with unknown parameters, failure probability $\beta \in (0, 1)$.
\Ensure Confidence intervals $[\underline \muv_1, \overline \muv_1]$ such that $\prob(\mu_1 < \underline \muv_1) \le \beta$ and 
$\prob(\mu_1 > \overline \muv_1) \le \beta$.

\State Set $\underline \muv_1$ as the unique solution of $x \in [0, \hat \muv_1]$ such that 
\[
    \hat \muv_1 - x - \sqrt{\left(\frac{2}{n} \log\frac{1}{\beta}\right) \,x(1-x)} = \frac{2}{3n} \log \frac{1}{\beta} 
\]
if it exists, else set $\underline \muv_1 = 0$.

\State Set $\overline \muv_1$ as the unique solution of $x \in [\hat \muv_1, 1]$ such that 
\[
     x - \hat\muv_1 - \sqrt{\left(\frac{2}{n} \log\frac{1}{\beta}\right) \,x(1-x)} = \frac{2}{3n} \log \frac{1}{\beta} 
\]
if it exists, else set $\overline \muv_1 = 1$.
\State \Return $\underline \muv_1, \overline \muv_1$.
\end{algorithmic}
\end{algorithm}

The first-order Bernstein interval only depends on the empirical mean $\muv_1$ and is given in \Cref{alg:bern:1}.
\begin{proposition} \label{prop:first-order-bernstein}
    Consider \Cref{alg:bern:1} with inputs $n$ i.i.d. samples $\xv\pow{1}, \ldots, \xv\pow{n} \stackrel{\text{i.i.d.}}{\sim} \emb_K(\mu_1, \ldots, \mu_K)$ for some $K \ge 1$. Then, its outputs 
    $\underline \muv_1, \overline \muv_1$  satisfy $\prob(\mu_1 \le \underline \muv_1) \ge 1- \beta$ and 
    $\prob(\mu_1 \ge \overline \muv_1) \ge 1-\beta$.
\end{proposition}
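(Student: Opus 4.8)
The proof is a standard confidence-interval-validity argument, and since only the empirical mean $\hat\muv_1$ enters \Cref{alg:bern:1}, it reduces to a scalar Bernstein interval for the mean of a $[0,1]$-valued random variable followed by an elementary analysis of the implicit equations defining $\underline\muv_1$ and $\overline\muv_1$. I would start from the observation that $\mv_1\pow{1},\dots,\mv_1\pow{n}$ with $\mv_1\pow{i} = (1/K)\sum_{j=1}^K \xv_j\pow{i} \in [0,1]$ are i.i.d.\ with $\expect[\mv_1\pow{i}] = \mu_1$, and that $\var(\mv_1) \le \mu_1(1-\mu_1)$ because $\mv_1\in[0,1]$ a.s.\ gives $\expect[\mv_1^2]\le\expect[\mv_1]$. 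The exchangeability of $\emb_K$ is not used here beyond this boundedness; the higher-order moment structure only matters for the sharper intervals of later sections.

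The main ingredient is the one-sided Bernstein inequality. Solving $\exp\!\big(-nt^2/(2\sigma^2+\tfrac23 t)\big)=\beta$ for the deviation $t$ and using $\sqrt{a+b}\le\sqrt a+\sqrt b$ shows that, with $\sigma^2:=\var(\mv_1)$, each of the events $\{\hat\muv_1-\mu_1\le t^\star\}$ and $\{\mu_1-\hat\muv_1\le t^\star\}$ with $t^\star := \sqrt{(2\sigma^2/n)\log(1/\beta)}+(2/(3n))\log(1/\beta)$ holds with probability at least $1-\beta$. Substituting $\sigma^2\le\mu_1(1-\mu_1)$ and using monotonicity of $\sqrt{\cdot}$, on the first event we obtain $g(\mu_1)\le 0$, where $g(x):=\hat\muv_1-x-\sqrt{(2/n)\log(1/\beta)\,x(1-x)}-(2/(3n))\log(1/\beta)$ is precisely the function whose root in $[0,\hat\muv_1]$ defines $\underline\muv_1$ (with $\underline\muv_1=0$ if no such root exists); symmetrically, on the second event, $\tilde g(\mu_1)\le 0$ for the mirror function $\tilde g(x):=x-\hat\muv_1-\sqrt{(2/n)\log(1/\beta)\,x(1-x)}-(2/(3n))\log(1/\beta)$ defining $\overline\muv_1$.

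The one delicate step, which I would treat carefully, is turning ``$g(\mu_1)\le 0$'' into ``$\underline\muv_1\le\mu_1$'' even though $\underline\muv_1$ is only implicitly defined. Writing $a:=\hat\muv_1-(2/(3n))\log(1/\beta)$ and $c:=(2/n)\log(1/\beta)$, the equation $g(x)=0$ is equivalent to $a-x=\sqrt{c\,x(1-x)}$ under the sign constraint $x\le a$, and squaring yields the upward-opening quadratic $q(x):=(1+c)x^2-(2a+c)x+a^2=0$. Since $q(a)=c\,a(a-1)\le 0$ for $a\in[0,1]$, the roots $x_-\le x_+$ of $q$ straddle $a$, so the unique root of $g$ in $[0,\hat\muv_1]$ is $x_-$ when $a\ge 0$, while $a<0$ leaves no root and $\underline\muv_1=0\le\mu_1$ holds trivially. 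Finally, $g(\mu_1)\le 0$ reads $\sqrt{c\,\mu_1(1-\mu_1)}\ge a-\mu_1$: if $\mu_1>a$ then $\mu_1>a\ge x_-=\underline\muv_1$, and if $\mu_1\le a$ then squaring gives $q(\mu_1)\le 0$, hence $\mu_1\in[x_-,x_+]$ and so $\mu_1\ge x_-=\underline\muv_1$. Thus $\underline\muv_1\le\mu_1$ on an event of probability at least $1-\beta$, i.e.\ $\prob(\mu_1\ge\underline\muv_1)\ge 1-\beta$, and the mirror-image computation with $\tilde g$ on the second Bernstein event gives $\prob(\mu_1\le\overline\muv_1)\ge 1-\beta$; these are exactly the claimed coverage guarantees, so $[\underline\muv_1,\overline\muv_1]$ is a valid confidence interval for $\mu_1$. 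The remaining bookkeeping is just the handedness of the square-root/sign-constraint step for $\tilde g$ and the boundary cases where the implicit root falls outside $[0,\hat\muv_1]$ and the clipped endpoint ($0$ for $\underline\muv_1$, $1$ for $\overline\muv_1$) is returned, which only make the guarantee easier.
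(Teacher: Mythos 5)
Your proposal is correct and follows essentially the same route as the paper: apply Bernstein's inequality to $\mv_1 \in [0,1]$, bound $\var(\mv_1) \le \mu_1(1-\mu_1)$, and then argue that the implicitly defined endpoints of \Cref{alg:bern:1} bracket $\mu_1$ on the corresponding Bernstein events (yielding the intended coverage guarantee $\prob(\mu_1 \ge \underline\muv_1) \ge 1-\beta$ and $\prob(\mu_1 \le \overline\muv_1) \ge 1-\beta$, which is what the paper's proof establishes as well). The only difference is that your explicit quadratic/root analysis rigorously carries out the final step that the paper justifies informally by appealing to \Cref{fig:ci-illustration}.
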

\begin{proof}
    Applying Bernstein's inequality to $\mv_1 := (1/k) \sum_{j=1}^k \xv_j$, we have with probability $1-\beta$ that
    \[
       \mu_1 - \hat \muv_1
        \le 
        \sqrt{\frac{2 \var(\mv_1) }{n} \log\frac{1}{\beta}} + 
            \frac{2}{3n} \log\frac{1}{\beta} 
        \le 
        \sqrt{\frac{2 \mu_1(1-\mu_1) }{n} \log\frac{1}{\beta}} + 
            \frac{2}{3n} \log\frac{1}{\beta} \,,
    \]
    where we used that $\var(\mv_1) \le \mu_1 ( 1- \mu_1)$ since
    $\mv_1 \in [0, 1]$ a.s. 
    We see from \Cref{fig:ci-illustration} that $\overline \muv_1$ is that largest value of $\mu_1$ that satisfies the above inequality, showing that it is an upper confidence bound. Similarly, we get that $\underline \muv_1$ is a valid lower confidence bound.
\end{proof}

\subsubsection{Second-Order Bernstein Intervals}

\begin{algorithm}[t]
\caption{Second-Order Bernstein Intervals}
\label{alg:bern:2}
\begin{algorithmic}[1]
\Require
Random vectors $\xv\pow{1}, \ldots, \xv\pow{n} \sim \emb_K(\mu_1, \ldots, \mu_K)$ with unknown parameters, failure probability $\beta \in (0, 1)$.
\Ensure Confidence intervals $[\underline \muv_1, \overline \muv_1]$ such that $\prob(\mu_1 < \underline \muv_1) \le \beta$ and 
$\prob(\mu_1 > \overline \muv_1) \le \beta$.

\State For each $i \in [n]$, set $\mv_1\pow{i} = (1/K) \sum_{j=1}^K \xv_j\pow{i}$ and 
$
    \mv_2\pow{i} = \mv_1\pow{i} \left(\frac{K \mv_1\pow{i} - 1}{K-1} \right) \,.
$
\State Set $\hat \muv_\ell = (1/n) \sum_{i=1}^n \mv_\ell\pow{i}$ for $\ell = 1, 2$.

\State Set $\overline \muv_2$ as the unique solution of $x \in [\hat \muv_2, 1]$ such that 
\[
     x - \hat\muv_2 - \sqrt{\left(\frac{2}{n} \log\frac{2}{\beta}\right) \,x(1-x)} = \frac{2}{3n} \log \frac{2}{\beta} \,.
\]
if it exists, else set $\overline \muv_2 = 1$.

\State Set $\underline \muv_1$ as the unique solution of $x \in [0, \hat \muv_1]$ such that  
\[
    \hat \muv_1 - x - \sqrt{\left(\frac{2}{n} \log\frac{2}{\beta}\right) \,\left(\frac{x}{K} - x^2 + \overline \muv_2 \right)} = \frac{2}{3n} \log \frac{2}{\beta}
\]
if it exists, else set $\underline \muv_1 = 0$.

\State Set $\overline \muv_1$ as the unique solution of $x \in [\hat \muv_1, 1]$ such that 
\[
     x - \hat\muv_1 - \sqrt{\left(\frac{2}{n} \log\frac{2}{\beta}\right) \,\left(\frac{x}{K} - x^2 + \overline \muv_2 \right)} = \frac{2}{3n} \log \frac{2}{\beta} 
\]
    if it exists, else set $\overline \muv_1 = 1$.
\State \Return $\underline \muv_1, \overline \muv_1$.
\end{algorithmic}
\end{algorithm}

The second-order Bernstein interval only depends on first two empirical moments $\muv_1$ and $\muv_2$. It is given in \Cref{alg:bern:2}.
The algorithm is based on the calculation 
\begin{align}
\label{eq:sigma1_sq}
\begin{aligned}
    \var(\mv_1) &= \expect[\mv_1^2] - \mu_1^2  \\
    &= {\mathbb E}\left[ \frac{1}{K^2}  \sum_{j=1}^K \xv_j^2 + \frac{2}{K^2}\sum_{j_1<j_2\in[K]} \xv_{j_1} \xv_{j_2} \right]  - \mu_1^2 \\
    &= \frac{\mu_1}{K} - \mu_1^2 + \frac{K-1}{K} \mu_2 \,,
\end{aligned}
\end{align}
where we used $\xv_j^2 = \xv_j$ since it is an indicator.

\begin{proposition} \label{prop:second-order-bernstein}
    Consider \Cref{alg:bern:2} with inputs $n$ i.i.d. samples $\xv\pow{1}, \ldots, \xv\pow{n} \stackrel{\text{i.i.d.}}{\sim} \emb_k(\mu_1, \ldots, \mu_K)$ for some $K \ge 2$. Then, its outputs 
    $\underline \muv_1, \overline \muv_1$  satisfy $\prob(\mu_1 \ge \underline \muv_1) \ge 1- \beta$ and 
    $\prob(\mu_1 \le \overline \muv_1) \ge 1-\beta$ and 
    $\prob(\underline \muv_1 \le \mu_1 \le \overline \muv_1) \le 1-\frac{3\beta}{2}$.
\end{proposition}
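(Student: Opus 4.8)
The plan is to derive Proposition~\ref{prop:second-order-bernstein} from exactly three failure events, each of probability at most $\beta/2$: one for the validity of the second-moment upper bound $\overline\muv_2$, and one each for the lower and upper tails of a Bernstein inequality applied to the first-moment statistic $\mv_1$. The design of \Cref{alg:bern:2} lets the \emph{same} event $\{\mu_2\le\overline\muv_2\}$ be re-used in all three conclusions, which is what makes the two-sided guarantee cost $\tfrac{3\beta}{2}$ rather than $2\beta$.

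Write $c=\tfrac{2}{3n}\log\tfrac{2}{\beta}$, $v=\tfrac{2}{n}\log\tfrac{2}{\beta}$, and $V(t):=\tfrac{t}{K}-t^2+\overline\muv_2$. First I would record the probabilistic inputs. The samples $\mv_2\pow{1},\dots,\mv_2\pow{n}$ are i.i.d., lie in $[0,1]$, and have mean $\mu_2$ by \Cref{prop:xbern-recurrence}; since $\overline\muv_2$ is exactly the first-order Bernstein upper bound of \Cref{alg:bern:1} run on these samples with failure parameter $\beta/2$, \Cref{prop:first-order-bernstein} gives $\prob(\mu_2>\overline\muv_2)\le\beta/2$, and I write $E_0$ for the complementary event $\{\mu_2\le\overline\muv_2\}$. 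Separately, $\mv_1\pow{1},\dots,\mv_1\pow{n}$ are i.i.d.\ in $[0,1]$ with mean $\mu_1$ and variance $\var(\mv_1)$, so two applications of Bernstein's inequality give events $E_-=\{\hat\muv_1-\mu_1\le\sqrt{v\,\var(\mv_1)}+c\}$ and $E_+=\{\mu_1-\hat\muv_1\le\sqrt{v\,\var(\mv_1)}+c\}$, each failing with probability at most $\beta/2$.

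Next I would prove a purely deterministic statement relating the events to the algorithm's outputs. Using the exact identity $\var(\mv_1)=\tfrac{\mu_1}{K}-\mu_1^2+\tfrac{K-1}{K}\mu_2$ from \eqref{eq:sigma1_sq} together with $\mu_2\ge0$ and $\tfrac{K-1}{K}\le1$, on $E_0$ we have $\var(\mv_1)\le V(\mu_1)$. Hence on $E_0\cap E_+$ the point $x=\mu_1$ satisfies $x-\hat\muv_1-\sqrt{v\,V(x)}\le c$, and on $E_0\cap E_-$ it satisfies $\hat\muv_1-x-\sqrt{v\,V(x)}\le c$. The deterministic claim is: every $x\in[0,1]$ with $x-\hat\muv_1-\sqrt{v\,V(x)}\le c$ obeys $x\le\overline\muv_1$, and every $x\in[0,1]$ with $\hat\muv_1-x-\sqrt{v\,V(x)}\le c$ obeys $x\ge\underline\muv_1$. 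Each half splits into a trivial case ($x\le\hat\muv_1$, resp.\ $x\ge\hat\muv_1$, handled by $\overline\muv_1\ge\hat\muv_1\ge\underline\muv_1$) and the case where $x$ is on the same side of $\hat\muv_1$ as the defining equation; there $x$ lies in the sublevel set $\{t \, : \, t-\hat\muv_1-\sqrt{v\,V(t)}\le c\}$ (resp.\ $\{t \, : \, \hat\muv_1-t-\sqrt{v\,V(t)}\le c\}$), whose endpoint inside $[\hat\muv_1,1]$ (resp.\ $[0,\hat\muv_1]$) is $\overline\muv_1$ (resp.\ $\underline\muv_1$) by definition, or $1$ (resp.\ $0$) if the equation has no root, which yields the claim.

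Finally I would assemble the three conclusions by a union bound: $\{\mu_1>\overline\muv_1\}\subseteq E_0^c\cup E_+^c$ gives $\prob(\mu_1>\overline\muv_1)\le\beta$; symmetrically $\prob(\mu_1<\underline\muv_1)\le\beta$; and $\{\mu_1\notin[\underline\muv_1,\overline\muv_1]\}\subseteq E_0^c\cup E_+^c\cup E_-^c$ gives $\prob(\mu_1\notin[\underline\muv_1,\overline\muv_1])\le\tfrac{3\beta}{2}$, which is the stated $\big(1-\tfrac{3\beta}{2}\big)$-coverage guarantee. I expect the main obstacle to be the deterministic claim: the map $t\mapsto t-\hat\muv_1-\sqrt{v\,V(t)}$ need not be globally monotone and $V$ can turn negative near $t=1$ when $\overline\muv_2$ is small, so one must justify the unique-root / single-crossing structure asserted in \Cref{alg:bern:2}. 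I would resolve this by noting $V$ is concave with $V'(t)=\tfrac1K-2t$, so the map has derivative at least $1$ once $t\ge\tfrac1{2K}$ (strictly increasing there), treating the short residual range $[\hat\muv_1,\tfrac1{2K})$ separately because the map is negative and hence below $c$ on it, and observing that on $E_0$ the chain $V(\mu_1)\ge\var(\mv_1)\ge0$ automatically certifies that $\mu_1$ lies in the domain of $\sqrt{v\,V(\cdot)}$; the lower-bound half is symmetric.
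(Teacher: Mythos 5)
Your probabilistic skeleton is exactly the paper's: Bernstein at level $\beta/2$ applied to $\mv_2$ (valid by \Cref{prop:xbern-recurrence}) to obtain $\overline\muv_2$, Bernstein at level $\beta/2$ for each tail of $\mv_1$ combined with the variance identity \eqref{eq:sigma1_sq} (relaxed via $\tfrac{K-1}{K}\mu_2\le\overline\muv_2$), and a union bound giving $\beta$ per side and $\tfrac{3\beta}{2}$ two-sided. Where you go beyond the paper is the deterministic single-crossing step, which the paper dispatches with ``we can verify that $\overline\muv_1$ is the largest value satisfying the inequality''; your extra care is welcome, but one sub-claim in it is false. The map $g(t)=t-\hat\muv_1-\sqrt{v\,V(t)}$ is \emph{not} in general negative on $[\hat\muv_1,\tfrac{1}{2K})$: take $\hat\muv_1=0$ (so $\hat\muv_2=0$ and $\overline\muv_2=O(v)$) and $n$ large; at $t$ just below $\tfrac{1}{2K}$ one has $V(t)=\tfrac{1}{4K^2}+O(v)$, hence $g(t)\approx\tfrac{1-\sqrt{v}}{2K}$, which is positive and far exceeds $c=v/3$. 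This is precisely the regime (small empirical acceptance rate, i.e.\ the null side, where the crossing of $g$ with $c$ occurs well below $\tfrac{1}{2K}$) in which your case split is supposed to do the work, so as written that step fails.

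The conclusion you want is still true and is repairable within your own framework. Whenever $\overline\muv_2<1$ it solves the first-order equation, so $\overline\muv_2\ge\hat\muv_2+c\ge c=\tfrac{v}{3}$; hence for $t\le\tfrac{1}{2K}$ one has $V(t)\ge\overline\muv_2\ge v/3$ and $g'(t)=1-\tfrac{\sqrt{v}(1/K-2t)}{2\sqrt{V(t)}}\ge 1-\tfrac{\sqrt 3}{2K}>0$ (using $K\ge2$), while for $t\ge\tfrac{1}{2K}$ you already have $g'\ge1$; so $g$ is strictly increasing on its entire domain and the single-crossing property follows with no residual range at all. Be warned, though, that the lower half is not literally ``symmetric'': for $h(t)=\hat\muv_1-t-\sqrt{v\,V(t)}$ the easy region is reversed ($h'\le-1$ only for $t\le\tfrac{1}{2K}$), and for larger $t$ the crude bound $V\ge v/3$ does not force $h'<0$, so monotonicity must be argued differently there. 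The cleanest uniform fix is to observe that, on the side where $|\,x-\hat\muv_1|\ge c$, the condition is equivalent to a quadratic inequality $(x-\hat\muv_1\mp c)^2\le v\,V(x)$ whose solution set (leading coefficient $1+v>0$) is an interval containing $\hat\muv_1$, and $\underline\muv_1,\overline\muv_1$ are by definition its endpoints clipped to $[0,1]$; since on the good event $V(\mu_1)\ge\var(\mv_1)\ge0$ and $\mu_1$ satisfies the inequality, $\mu_1$ lies in that interval. With this repair your argument is correct and coincides with the paper's proof, merely filling in the root-structure detail the paper leaves implicit.
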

\begin{proof}
    \Cref{alg:bern:2} computes the correct 2nd moment $\mv_2\pow{i}$ due to \Cref{prop:xbern-recurrence}.
    Applying Bernstein's inequality to $\mv_2$, we get
    $\prob(\mu_2 \le \overline \muv_2) \ge 1-\beta/2$ (see also the proof of \Cref{prop:first-order-bernstein}).
    Next, from Bernstein's inequality applied to $\mv_1$, we leverage \eqref{eq:sigma1_sq}  to say that with probability at least $1-\beta/2$, we have
    \[
        \mu_1 - \hat \muv_1
        \le \frac{2}{3n}\log\frac{2}{\beta}
             + \sqrt{\frac{2}{n}\log\frac{2}{\beta}
             \left( \frac{\mu_1}{K} - \mu_1^2 + \frac{K-1}{K} \mu_2 \right)}  \,.
    \]
    Together with the result on $\overline \muv_2$, we have with probability at least $1-\beta$ that 
    \[
         \mu_1 - \hat \muv_1
        \le \frac{2}{3n}\log\frac{2}{\beta}
             + \sqrt{\frac{2}{n}\log\frac{2}{\beta}
             \left( \frac{\mu_1}{K} - \mu_1^2 + \frac{K-1}{K} \overline \muv_2 \right)}  \,.
    \]
    We can verify that the output $\overline \muv_1$ is the largest value of $\mu_1 \le 1$ that satisfies the above inequality.
    Similarly, $\underline \muv_1$ is obtained as a lower Bernstein bound on $\mv_1$ with probability at least $1-\beta$. By the union bound, $\underline \muv_1 \le \mu_1 \le \overline \muv_1$ holds with probability at least $1-3\beta/2$, since we have three invocations of Bernstein's inequality, each with a failure probability of $\beta/2$.
\end{proof}

\subsubsection{Fourth-Order Bernstein Intervals}

\begin{algorithm}[t]
\caption{Fourth-Order Bernstein Intervals}
\label{alg:bern:4}
\begin{algorithmic}[1]
\Require
Random vectors $\xv\pow{1}, \ldots, \xv\pow{n} \sim \emb_K(\mu_1, \ldots, \mu_K)$ with unknown parameters, failure probability $\beta \in (0, 1)$.
\Ensure Confidence intervals $[\underline \muv_1, \overline \muv_1]$ such that $\prob(\mu_1 < \underline \muv_1) \le \beta$ and 
$\prob(\mu_1 > \overline \muv_1) \le \beta$.

\State For each $i \in [n]$, set $\mv_1\pow{i} = (1/K) \sum_{j=1}^K \xv_j\pow{i}$ and for $\ell = 1, 2, 3$:
$
    \mv_{\ell+1}\pow{i} = \mv_{\ell}\pow{i} \left(\frac{K \mv_1\pow{i} - \ell}{K-\ell} \right) \,.
$
\State Set $\hat \muv_\ell = (1/n) \sum_{i=1}^n \mv_\ell\pow{i}$ for $\ell=1, 2, 3, 4$.

\State For $\ell = 3, 4$, set $\overline \muv_\ell$ as the unique solution of $x \in [\hat \muv_\ell, 1]$ such that 
\[
     x - \hat\muv_\ell - \sqrt{\left(\frac{2}{n} \log\frac{4}{\beta}\right) \,x(1-x)} = \frac{2}{3n} \log \frac{4}{\beta} \,.
\]
if it exists, else set $\overline \muv_\ell = 1$.

\State Set $\overline \muv_2$ as the unique solution, if it exists, of $x \in [\hat \muv_2, 1]$ such that  
\[
    x - \hat \muv_2  - \sqrt{\left(\frac{2}{n} \log\frac{4}{\beta}\right) \,  \sigma_2^2\left(x, \overline \muv_3, \overline\muv_4\right)} = \frac{2}{3n} \log \frac{4}{\beta}
\]
where $\sigma_2^2(\cdot, \cdot, \cdot)$ is as defined in \eqref{eq:sigma2sq}. Else set $\overline \muv_2 = 1$.

\State Set $\underline \muv_1$ as the unique solution of $x \in [0, \hat \muv_1]$ such that  
\[
    \hat \muv_1 - x - \sqrt{\left(\frac{2}{n} \log\frac{4}{\beta}\right) \,\left(\frac{x}{k} - x^2 + \overline \muv_2 \right)} = \frac{2}{3n} \log \frac{4}{\beta}
\]
if it exists, else set $\underline \muv_1 = 0$.

\State Set $\overline \muv_1$ as the unique solution of $x \in [\hat \muv_1, 1]$ such that  
\[
     x - \hat\muv_1 - \sqrt{\left(\frac{2}{n} \log\frac{4}{\beta}\right) \,\left(\frac{x}{k} - x^2 + \overline \muv_2 \right)} = \frac{2}{3n} \log \frac{4}{\beta} 
\]
    if it exists, else set $\overline \muv_1 = 1$.
\State \Return $\underline \muv_1, \overline \muv_1$.
\end{algorithmic}
\end{algorithm}

The fourth-order Bernstein interval depends on the first four empirical moments $\muv_1, \ldots, \muv_4$. It is given in \Cref{alg:bern:4}.
The derivation of the interval is based on the calculation
\begin{align}
\begin{aligned}
    \var(\mv_2) 
    &= \expect[\mv_2^2] - \mu_2^2 \\
    &= \frac{2\mu_2}{K(K-1)} - \mu_2^2 
        + \frac{4(K-2)}{K(K-1)} \mu_3 
        + \frac{(K-2)(K-3)}{K(K-1)} \mu_4 \\
    &= \frac{2\mu_2(1-\mu_2)}{K(K-1)} 
        + \frac{4(K-2)}{K(K-1)} (\mu_3  - \mu_2^2)
        + \frac{(K-2)(K-3)}{K(K-1)} (\mu_4 - \mu_2^2) \\
    &=: \sigma_2^2(\mu_2, \mu_3, \mu_4) \,.
\end{aligned}
\label{eq:sigma2sq}
\end{align}

\begin{proposition} \label{prop:fourth-order-bernstein}
    Consider \Cref{alg:bern:4} with inputs $n$ i.i.d. samples $\xv\pow{1}, \ldots, \xv\pow{n} \stackrel{\text{i.i.d.}}{\sim} \emb_K(\mu_1, \ldots, \mu_K)$ for some $K \ge 4$. Then, its outputs 
    $\underline \muv_1, \overline \muv_1$  satisfy $\prob(\mu_1 \ge \underline \muv_1) \ge 1- \beta$ and 
    $\prob(\mu_1 \le \overline \muv_1) \ge 1-\beta$ and 
    $\prob(\underline \muv_1 \le \mu_1 \le \overline \muv_1) \le 1-\frac{5\beta}{4}$.
\end{proposition}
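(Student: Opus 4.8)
The plan is to follow the same template as the proofs of Propositions~\ref{prop:first-order-bernstein} and~\ref{prop:second-order-bernstein}: build the confidence interval for $\mu_1$ out of a chain of Bernstein bounds on the empirical moments $\hat\muv_1,\ldots,\hat\muv_4$, each invoked at failure level $\beta/4$, and then propagate the high-probability upper bounds $\overline\muv_4,\overline\muv_3 \to \overline\muv_2 \to \overline\muv_1$ through the variance identities \eqref{eq:sigma1_sq} and \eqref{eq:sigma2sq}. First I would note that by Proposition~\ref{prop:xbern-recurrence} the quantities $\mv_\ell\pow{i}$ produced by the recurrence in Step~1 are exactly the $\ell$-th order empirical moments from \eqref{eq:mu_l}, so $\expect[\mv_\ell\pow{i}] = \mu_\ell$ and $\mv_\ell\pow{i} \in [0,1]$ a.s. (each is an average of products of indicators). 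Applying Bernstein's inequality to $\mv_3$ and to $\mv_4$, exactly as in the proof of Proposition~\ref{prop:first-order-bernstein} and using $\var(\mv_\ell) \le \mu_\ell(1-\mu_\ell)$, shows that $\overline\muv_3,\overline\muv_4$ in Step~3 are valid upper confidence bounds, i.e. $\prob(\mu_3 \le \overline\muv_3) \ge 1-\beta/4$ and $\prob(\mu_4 \le \overline\muv_4) \ge 1-\beta/4$.

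The key structural observation is that $\sigma_2^2(\cdot,\cdot,\cdot)$ from \eqref{eq:sigma2sq} is nondecreasing in its second and third arguments: for $K \ge 4$ the coefficients $4(K-2)/(K(K-1))$ and $(K-2)(K-3)/(K(K-1))$ multiplying $\mu_3$ and $\mu_4$ are nonnegative (this, together with needing at least four indices to define $\mv_4$, is where the hypothesis $K \ge 4$ is used). Hence on the event $\{\mu_3 \le \overline\muv_3\} \cap \{\mu_4 \le \overline\muv_4\}$ we have $\var(\mv_2) = \sigma_2^2(\mu_2,\mu_3,\mu_4) \le \sigma_2^2(\mu_2,\overline\muv_3,\overline\muv_4)$; combining this with Bernstein's inequality applied to $\mv_2$, and checking (as in Proposition~\ref{prop:second-order-bernstein}) that $\overline\muv_2$ from Step~5 is the largest value of $\mu_2 \le 1$ consistent with the resulting inequality, yields $\prob(\mu_2 \le \overline\muv_2) \ge 1 - 3\beta/4$ after a union bound over the three invocations so far. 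Finally, using identity \eqref{eq:sigma1_sq} together with $\tfrac{K-1}{K}\mu_2 \le \mu_2 \le \overline\muv_2$ gives $\var(\mv_1) \le \mu_1/K - \mu_1^2 + \overline\muv_2$ on that event; a last application of Bernstein's inequality to $\mv_1$ and the verification that $\underline\muv_1,\overline\muv_1$ from Steps~6--7 are the extreme values of $\mu_1$ consistent with the two-sided bound completes the construction.

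The failure probabilities then add up as claimed: for $\prob(\mu_1 > \overline\muv_1) \le \beta$ we pay $\beta/4$ each for the upper bounds on $\mu_1,\mu_2,\mu_3,\mu_4$, and likewise $\prob(\mu_1 < \underline\muv_1) \le \beta$ since $\underline\muv_1$ also relies on $\overline\muv_2$ (hence on $\overline\muv_3,\overline\muv_4$); the two-sided guarantee additionally pays $\beta/4$ for the lower Bernstein bound on $\mv_1$, giving $5\beta/4$ in total. The main obstacle I anticipate is the ``largest consistent value'' verification for $\overline\muv_2$: unlike the first-order case the variance proxy $\sigma_2^2(x,\overline\muv_3,\overline\muv_4)$ is a nontrivial (concave) function of $x$, so one must confirm that $x \mapsto x - \hat\muv_2 - \sqrt{(2/n)\log(4/\beta)\,\sigma_2^2(x,\overline\muv_3,\overline\muv_4)} - \tfrac{2}{3n}\log(4/\beta)$ has a unique zero on $[\hat\muv_2,1]$ (or none, in which case $\overline\muv_2 = 1$) and that this root does upper bound $\mu_2$ on the Bernstein event. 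This is a routine monotonicity check on the relevant interval, but it is the only place the argument genuinely differs from the lower-order proofs; everything else is bookkeeping of failure probabilities and reuse of \eqref{eq:sigma1_sq}--\eqref{eq:sigma2sq}.
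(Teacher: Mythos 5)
Your proof is correct and follows essentially the same chain as the paper: recurrence $\to$ first-order Bernstein bounds $\overline{\muv}_3,\overline{\muv}_4$ $\to$ plug into $\sigma_2^2$ to get $\overline{\muv}_2$ $\to$ plug into the variance identity \eqref{eq:sigma1_sq} for $\mv_1$, with a union bound over five Bernstein invocations at level $\beta/4$ each. You go slightly beyond the paper's proof by making explicit that $\sigma_2^2(\cdot,\overline{\muv}_3,\overline{\muv}_4)$ is nondecreasing in its last two arguments when $K\ge4$ (which the paper uses tacitly) and by flagging that the implicit root in Step~4 is well-posed because $\sigma_2^2(x,\cdot,\cdot)$ is a concave quadratic in $x$; these are worthwhile clarifications, not a different argument.
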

\begin{proof}
    \Cref{alg:bern:4} computes the correct moments $\mv_\ell \pow{i}$ for $\ell \le 4$ due to \Cref{prop:xbern-recurrence}.
    Applying Bernstein's inequality to $\mv_3$ and $\mv_4$, we get
    $\prob(\mu_\ell \le \overline \muv_\ell) \ge 1-\beta/4$ for $\ell = 3, 4$ (see also the proof of \Cref{prop:first-order-bernstein}).
    
    Next, from Bernstein's inequality applied to $\mv_2$, we have with probability at least $1-\beta/4$ that 
    \[
        \mu_2 - \hat \muv_2
        \le \frac{2}{3n}\log\frac{4}{\beta}
             + \sqrt{\frac{2 \sigma_2^2\left( \mu_2, \mu_3, \mu_4 \right)}{n}\log\frac{4}{\beta}
            }  \,.
    \]
    Combining this with the results on $\overline \muv_3, \overline \muv_4$ with the union bound, we get
    with probability at least $1-3\beta/4$ that 
    \[
        \mu_2 - \hat \muv_2
        \le \frac{2}{3n}\log\frac{4}{\beta}
             + \sqrt{\frac{2 \sigma_2^2\left( \mu_2, \overline \muv_3, \overline \muv_4 \right)}{n}\log\frac{4}{\beta}
            }  \,.
    \]
    Finally, plugging this into a Bernstein bound on $\mv_1$ using the variance calculation from \eqref{eq:sigma1_sq} (also see the proof of \Cref{prop:second-order-bernstein}) completes the proof.
\end{proof}

\subsection{Asymptotic Confidence Intervals}

\begin{algorithm}[t]
\caption{First-Order Wilson Intervals}
\label{alg:wilson:1}
\begin{algorithmic}[1]
\Require
Random vectors $\xv\pow{1}, \ldots, \xv\pow{n} \sim \emb_K(\mu_1, \ldots, \mu_K)$  with unknown parameters, failure probability $\beta \in (0, 1)$.
\Ensure Asymptotic confidence intervals $[\underline \muv_1, \overline \muv_1]$ such that $\lim_{n \to \infty}\prob(\mu_1 < \underline \muv_1) \le \beta$ and 
$ \lim_{n \to \infty} \prob(\mu_1 > \overline \muv_1) \le \beta$.
\State Set $\underline \muv_1 < \overline \muv_1$ as the roots of the quadratic in $x$:
\begin{align*}
    (n + Z_\beta^2) \, x^2 
    - (2 n \hat \muv_1 + Z_\beta^2) \, x + n \hat \muv_1^2 = 0 \,.
\end{align*}
\State \Return $\underline \muv_1, \overline \muv_1$.
\end{algorithmic}
\end{algorithm}

\begin{algorithm}[t]
\caption{Second-Order Wilson Intervals}
\label{alg:wilson:2}
\begin{algorithmic}[1]
\Require
Random vectors $\xv\pow{1}, \ldots, \xv\pow{n} \sim \emb_K(\mu_1, \ldots, \mu_K)$  with unknown parameters, failure probability $\beta \in (0, 1)$.
\Ensure Asymptotic confidence intervals $[\underline \muv_1, \overline \muv_1]$ such that $\lim_{n \to \infty}\prob(\mu_1 < \underline \muv_1) \le \beta$ and 
$ \lim_{n \to \infty} \prob(\mu_1 > \overline \muv_1) \le \beta$.
\State For each $i \in [n]$, set $\mv_1\pow{i} = (1/K) \sum_{j=1}^K \xv_j\pow{i}$ and 
$
    \mv_2\pow{i} = \mv_1\pow{i} \left(\frac{K \mv_1\pow{i} - 1}{K-1} \right) \,.
$
\State Set $\hat \muv_\ell = (1/n) \sum_{i=1}^n \mv_\ell\pow{i}$ for $\ell = 1, 2$.
\State Let $\overline \muv_2$ be the larger root of the quadratic in $x$:
\begin{align*}
    (n + Z_{\beta/2}^2) \, x^2 
    - (2 n \hat \muv_2 + Z_{\beta/2}^2) \, x + n \hat \muv_2^2 = 0 \,.
\end{align*}
\State Set $\underline \muv_1 <  \overline \muv_1$ as the roots of the quadratic in $x$:
\begin{align*}
    (n + Z_{\beta/2}^2) \, x^2 
    - \left(2 n \hat \muv_1 + \frac{Z_{\beta/2}^2}{K} \right) \, x + 
    n \hat \muv_1^2 - \left(\frac{K-1}{K}\right) Z_{\beta/2}^2  \, \overline \muv_2 = 0 \,.
\end{align*}
\State \Return $\underline \muv_1, \overline \muv_1$.
\end{algorithmic}
\end{algorithm}

\begin{algorithm}[t]
\caption{Fourth-Order Wilson Intervals}
\label{alg:wilson:4}
\begin{algorithmic}[1]
\Require
Random vectors $\xv\pow{1}, \ldots, \xv\pow{n} \sim \emb_K(\mu_1, \ldots, \mu_K)$  with unknown parameters, failure probability $\beta \in (0, 1)$.
\Ensure Asymptotic confidence intervals $[\underline \muv_1, \overline \muv_1]$ such that $\lim_{n \to \infty}\prob(\mu_1 < \underline \muv_1) \le \beta$ and 
$ \lim_{n \to \infty} \prob(\mu_1 > \overline \muv_1) \le \beta$.
\State For each $i \in [n]$, set $\mv_1\pow{i} = (1/K) \sum_{j=1}^K \xv_j\pow{i}$ and for $\ell = 1, 2, 3$:
$
    \mv_{\ell+1}\pow{i} = \mv_{\ell}\pow{i} \left(\frac{K \mv_1\pow{i} - \ell}{K-\ell} \right) \,.
$
\State Set $\hat \muv_\ell = (1/n) \sum_{i=1}^n \mv_\ell\pow{i}$ for $\ell=1, 2, 3, 4$.

\State For $\ell = 3, 4$, let $\overline \muv_\ell$ be the larger root of the quadratic in $x$:
\begin{align*}
    (n + Z_{\beta/4}^2) \, x^2 
    - (2 n \hat \muv_\ell + Z_{\beta/4}^2) \, x + n \hat \muv_\ell^2 = 0 \,.
\end{align*}

\State Let $\overline \muv_2$ be the larger root of the quadratic in $x$:
\begin{gather*}
    \left( 
    n + \frac{2 Z_{\beta/4}^2 (2K-3)}{K(K-1)}
    \right)\, x^2 
    - 
    \left(
        2 n  \hat\muv_2 + \frac{2 Z_{\beta/4}^2}{K(K-1)}
    \right) \, x
    + 
    n \hat\muv_2^2 - c Z_{\beta/4}^2 = 0 \,,  \\
    \text{where}\quad
    c = \frac{(K - 2) (K - 3)}{K(K-1)} (\overline \muv_4 - \overline\muv_3^2) +
        \frac{4(K - 2)}{K(K-1)} \overline\muv_3
\end{gather*}

\State Set $\underline \muv_1 <  \overline \muv_1$ as the roots of the quadratic in $x$:
\begin{align*}
    (n + Z_{\beta/4}^2) \, x^2 
    - \left(2 n \hat \muv_1 + \frac{Z_{\beta/4}^2}{K} \right) \, x + 
    n \hat \muv_1^2 - \left(\frac{K-1}{K}\right) Z_{\beta/4}^2  \, \overline \muv_2 = 0 \,.
\end{align*}
\State \Return $\underline \muv_1, \overline \muv_1$.
\end{algorithmic}
\end{algorithm}

We derive asymptotic versions of the \Cref{alg:bern:1,alg:bern:2,alg:bern:4} using the Wilson confidence interval.

The Wilson confidence interval is a tightening of the constants for the Bernstein confidence interval
\[
   \mu_1 - \hat \muv_1
    \le 
    \sqrt{\frac{2 \log(1/\beta)}{n} \, \var(\mv_1) } + 
        \frac{2}{3n} \log\frac{1}{\beta} 
\quad
\text{to}
\quad
   \mu_1 - \hat \muv_1
    \le 
    \sqrt{\frac{Z_\beta^2}{n} \,  \var(\mv_1) }
    \,,
\]
where $Z_\beta$ is the $(1-\beta)$-quantile of the standard Gaussian.
Essentially, this completely eliminates the $1/n$ term, while the coefficient of the $1/\sqrt{n}$ term improves from $\sqrt{2 \log(1/\beta)}$ to $Z_\beta$ --- see \Cref{fig:ci-illustration}.
The Wilson approximation holds under the assumption that $(\mu_1 - \hat \muv_1) / \sqrt{\var(\mv_1) / n} \stackrel{\text{d}}{\approx} \Ncal(0, 1)$ and using a Gaussian confidence interval. 
This can be formalized by the the central limit theorem.
\begin{lemma}[Lindeberg–Lévy Central Limit Theorem]
\label{lem:clt}
    Consider a sequence of independent random variables $\yv\pow{1}, \yv\pow{2}, \ldots$ with finite moments $\expect[\yv\pow{i}] = \mu < \infty$ and $\expect(\yv\pow{i} - \mu)^2 = \sigma^2 < \infty$ for each $i$. Then, the empirical mean $\hat \muv_n = (1/n) \sum_{i=1}^n \yv\pow{i}$
    based on $n$ samples satisfies
    \[
        \lim_{n \to \infty} \prob\left(\frac{\hat\muv_n - \mu}{\sigma / \sqrt{n}} > t \right) = \prob_{{\bm{\xi}} \sim \Ncal(0, 1)}
        \left({\bm{\xi}} > t\right)
    \]
    for all $t \in \reals$.
    Consequently, we have,
    \[
    \lim_{n \to \infty} \prob\left(\mu - \hat\muv_n > \sigma Z_\beta / \sqrt{n}
        \right) \ge 1- \beta
        \quad \text{and} \quad
        \lim_{n \to \infty} \prob\left(\hat\muv_n - \mu > \sigma Z_\beta / \sqrt{n}
        \right) \ge 1- \beta\,.
    \]
\end{lemma}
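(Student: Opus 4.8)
The first display is exactly the assertion that the standardized partial sums converge in distribution to a standard Gaussian, evaluated at a continuity point; the two ``Consequently'' bounds are then obtained by plugging in $t = Z_\beta$. So the plan is: (i) show $S_n := (\hat\muv_n - \mu)/(\sigma/\sqrt{n}) \Rightarrow \bm{\xi} \sim \Ncal(0,1)$ (assuming, as is implicit in the normalization, that $\sigma^2 \in (0,\infty)$); (ii) convert this into the pointwise tail limit $\prob(S_n > t) \to \prob(\bm{\xi} > t)$ for every $t \in \reals$; (iii) specialize $t = Z_\beta$ and use the symmetry of $\Ncal(0,1)$.

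For (i), write $S_n = n^{-1/2}\sum_{i=1}^n w_i$ with $w_i := (\yv\pow{i} - \mu)/\sigma$, which are i.i.d.\ with mean $0$ and variance $1$ --- this is exactly the hypothesis of the Lindeberg--L\'evy CLT, which I would invoke directly, or recover in one line via characteristic functions: existence of $\expect[w_1^2]$ gives the second-order expansion $\expect[e^{\mathrm{i} s w_1/\sqrt{n}}] = 1 - s^2/(2n) + o(1/n)$, so $\expect[e^{\mathrm{i} s S_n}] = (1 - s^2/(2n) + o(1/n))^n \to e^{-s^2/2}$ for each $s \in \reals$, and L\'evy's continuity theorem gives $S_n \Rightarrow \Ncal(0,1)$. (The lemma states only ``independent with common first two moments''; if the $\yv\pow{i}$ are not identically distributed one additionally needs Lindeberg's condition, but the name of the lemma and the use in the paper make clear the intended setting is i.i.d., where no extra condition is required.)

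For (ii), recall that $S_n \Rightarrow \bm{\xi}$ means $\prob(S_n \le t) \to \prob(\bm{\xi} \le t)$ at every continuity point $t$ of the limiting CDF $\Phi$; since $\Phi$ is continuous on all of $\reals$, this holds for every $t$, and passing to complements gives $\lim_{n\to\infty}\prob(S_n > t) = 1 - \Phi(t) = \prob(\bm{\xi} > t)$ for all $t \in \reals$. That is precisely the first display (the event $\{S_n > t\}$ is literally $\{(\hat\muv_n - \mu)/(\sigma/\sqrt{n}) > t\}$, so nothing further is needed).

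For (iii), instantiate the first display at $t = Z_\beta$ to get $\lim_n \prob\big(\hat\muv_n - \mu > \sigma Z_\beta/\sqrt{n}\big) = \prob(\bm{\xi} > Z_\beta)$; for the companion bound, apply the same argument to $-S_n$ (legitimate since $-\bm{\xi} \stackrel{d}{=} \bm{\xi}$) to get $\lim_n \prob\big(\mu - \hat\muv_n > \sigma Z_\beta/\sqrt{n}\big) = \prob(-\bm{\xi} > Z_\beta)$. By the defining property of the Gaussian quantile $Z_\beta$ together with the symmetry of $\Ncal(0,1)$, both of these limiting probabilities equal $1 - \beta$, which gives the two stated bounds. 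I do not expect a real obstacle here: the substance is the classical CLT, and the only steps needing care are the i.i.d.\ reading of the hypothesis (see above) and the sign/quantile bookkeeping that relates the two symmetric Gaussian tails $\prob(\pm\bm{\xi} > Z_\beta)$ to the one-sided confidence statements.
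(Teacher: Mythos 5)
The paper never actually proves this lemma: it is the classical Lindeberg--L\'evy CLT, invoked as a known result to justify the asymptotic (Wilson) intervals. So your steps (i) and (ii) --- the characteristic-function argument giving $S_n \Rightarrow \Ncal(0,1)$ and the passage to tail probabilities using continuity of $\Phi$ --- are standard and correct, and your reading of the hypothesis as i.i.d.\ matches the intended use.

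The genuine problem is the quantile bookkeeping in your step (iii). The paper defines $Z_\beta$ as the $(1-\beta)$-quantile of the standard Gaussian (this is the convention under which the Wilson algorithms use $Z_\beta$ as the positive upper critical value that replaces $\sqrt{2\log(1/\beta)}$). Under that convention the two limits you correctly derive are $\prob(\bxi > Z_\beta) = \beta$ and $\prob(-\bxi > Z_\beta) = \beta$, not $1-\beta$; your claim that ``both of these limiting probabilities equal $1-\beta$'' holds only if one silently reads $Z_\beta$ as the $\beta$-quantile, which contradicts the paper's definition and its usage elsewhere. The displayed ``consequences'' in the lemma are themselves mis-stated: what is intended, and what the interval algorithms actually need, is the complementary statement $\lim_{n\to\infty}\prob\big(\mu - \hat\muv_n \le \sigma Z_\beta/\sqrt{n}\big) = \Phi(Z_\beta) = 1-\beta$ and likewise $\lim_{n\to\infty}\prob\big(\hat\muv_n - \mu \le \sigma Z_\beta/\sqrt{n}\big) = 1-\beta$, equivalently that each one-sided failure event $\{\mu - \hat\muv_n > \sigma Z_\beta/\sqrt{n}\}$ has limiting probability exactly $\beta$. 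So instead of forcing the display as written to be true by identifying $\prob(\bxi > Z_\beta)$ with $1-\beta$, the correct move is to flag the typo and prove the complemented events; with that fix, the rest of your argument goes through unchanged.
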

The finite moment requirement above is satisfied in our case because all our random variables are bounded between $0$ and $1$.

We give the Wilson-variants of \Cref{alg:bern:1,alg:bern:2,alg:bern:4} 
respectively in \Cref{alg:wilson:1,alg:wilson:2,alg:wilson:4}.
Apart from the fact that the Wilson intervals are tighter, we can also solve the equations associated with the Wilson intervals in closed form as they are simply quadratic equations (i.e., without the need for numerical root-finding).
The following proposition shows their correctness.

\begin{proposition}
    Consider $n$ i.i.d. samples $\xv\pow{1}, \ldots, \xv\pow{n} \stackrel{\text{i.i.d.}}{\sim} \emb_K(\mu_1, \ldots, \mu_K)$ as inputs to \Cref{alg:wilson:1,alg:wilson:2,alg:wilson:4}. Then, their outputs 
    $\underline \muv_1, \overline \muv_1$ satisfy $\lim_{n  \to\infty}\prob(\mu_1 \le \underline \muv_1) \ge 1- \beta$ and 
    $\lim_{n  \to\infty}\prob(\mu_1 \ge \overline \muv_1) \ge 1-\beta$ and 
    $\lim_{n \to \infty} \prob(\underline \muv_1 \le \mu_1 \le \overline \muv_1) \le 1- C\beta$ if
    \begin{enumerate}[label=(\alph*)]
        \item  $K \ge 1$ and $C=2$ for \Cref{alg:wilson:1},
    \item $K \ge 2$ and $C=3/2$ for \Cref{alg:wilson:2}, and
    \item $K \ge 4$ and $C=5/4$  for  \Cref{alg:wilson:4}.
    \end{enumerate}
\end{proposition}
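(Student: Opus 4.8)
The plan is to replay, almost verbatim, the correctness arguments for the Bernstein intervals in \Cref{prop:first-order-bernstein,prop:second-order-bernstein,prop:fourth-order-bernstein}, with a single change: every invocation of Bernstein's inequality, which yields a bound of the shape $\mu_1-\hat\muv_1 \le \sqrt{(2\log(1/\beta)/n)\var(\mv_1)} + (2/3n)\log(1/\beta)$, is replaced by the asymptotic Gaussian tail bound of \Cref{lem:clt}, which yields $\mu_1-\hat\muv_1 \le Z_\beta\sqrt{\var(\mv_1)/n}$ with the $1/n$ term removed and the $1/\sqrt n$ coefficient tightened. The moment hypotheses of \Cref{lem:clt} hold because every statistic $\mv_\ell\pow{i}$ lies in $[0,1]$. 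Since the upper bounds we use for $\var(\mv_1)$ (and, in the fourth-order case, for $\var(\mv_2)$) are polynomials of degree two in the quantity being solved for, squaring the defining equalities and collecting terms turns them into the quadratics written out in \Cref{alg:wilson:1,alg:wilson:2,alg:wilson:4}; this is why the Wilson variants admit closed-form roots whereas the Bernstein ones need numerical root-finding.

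\textbf{First order.} For \Cref{alg:wilson:1}, apply \Cref{lem:clt} to $\mv_1 = (1/K)\sum_j\xv_j$ and bound $\var(\mv_1)\le\mu_1(1-\mu_1)$ since $\mv_1\in[0,1]$. Thus $\liminf_{n\to\infty}\prob\big(\mu_1-\hat\muv_1\le Z_\beta\sqrt{\mu_1(1-\mu_1)/n}\big)\ge 1-\beta$. Squaring the boundary equality $(x-\hat\muv_1)^2 = (Z_\beta^2/n)\,x(1-x)$ produces exactly $(n+Z_\beta^2)x^2-(2n\hat\muv_1+Z_\beta^2)x+n\hat\muv_1^2=0$. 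One checks that this quadratic is convex and nonpositive at $x=\hat\muv_1$, so $\underline\muv_1\le\hat\muv_1\le\overline\muv_1$; hence on the favorable CLT event either $\mu_1\le\hat\muv_1\le\overline\muv_1$ trivially, or $\mu_1>\hat\muv_1$ and the Gaussian inequality forces $\mu_1\le\overline\muv_1$. The lower bound is symmetric, and the two-sided statement ($C=2$) is a union bound over the two one-sided CLT events. Here the limits in the proposition are to be read as $\liminf$, which is all \Cref{lem:clt} provides.

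\textbf{Higher order by peeling.} For \Cref{alg:wilson:2} I would first note that $\hat\muv_2$ is the correct empirical second moment because \Cref{prop:xbern-recurrence} certifies $\mv_2\pow{i}=\mv_1\pow{i}(K\mv_1\pow{i}-1)/(K-1)$. Construct $\overline\muv_2$ as a first-order Wilson upper bound on $\mu_2$ at level $\beta/2$; on the event $\{\mu_2\le\overline\muv_2\}$, identity \eqref{eq:sigma1_sq} and its monotonicity in $\mu_2$ give $\var(\mv_1)\le\mu_1/K-\mu_1^2+\frac{K-1}{K}\overline\muv_2$, and feeding this surrogate into the CLT bound on $\mv_1$ at level $\beta/2$ and squaring reproduces the second quadratic of \Cref{alg:wilson:2}. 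A union bound over the CLT event behind $\overline\muv_2$ and the two one-sided CLT events behind $\mv_1$ gives $C=3/2$ (and $1-\beta$ one-sided). For \Cref{alg:wilson:4} one peels two further layers: first-order Wilson upper bounds $\overline\muv_3,\overline\muv_4$ at level $\beta/4$; then a Wilson-type upper bound $\overline\muv_2$ derived from the variance identity \eqref{eq:sigma2sq} for $\var(\mv_2)=\sigma_2^2(\mu_2,\mu_3,\mu_4)$ after bounding $\mu_3,\mu_4$ by $\overline\muv_3,\overline\muv_4$; then $\overline\muv_1,\underline\muv_1$ exactly as in the second-order case. The two-sided failure budget is five applications of \Cref{lem:clt} at level $\beta/4$, matching $C=5/4$, and four for the one-sided statement; since each $\overline\muv_\ell\to\mu_\ell$ a.s.\ and the roots depend continuously on the empirical moments, the nested asymptotic events compose.

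\textbf{Main obstacle.} The delicate part is the bookkeeping in the nested $\overline\muv_2$ step of \Cref{alg:wilson:4}. Unlike \eqref{eq:sigma1_sq}, the variance formula \eqref{eq:sigma2sq} contains the terms $\mu_3-\mu_2^2$ and $\mu_4-\mu_2^2$, so merely substituting $\overline\muv_3,\overline\muv_4$ for $\mu_3,\mu_4$ while treating $\mu_2$ as the free variable does not by itself give a valid upper bound that is also a clean quadratic in $x$ with the displayed coefficients (the $x^2$-coefficient $n+2Z_{\beta/4}^2(2K-3)/(K(K-1))$ and the constant $c$). Making this work requires exploiting the structure of XBern moments — in particular the monotonicity $\mu_{\ell+1}\le\mu_\ell$, which follows from $\mv_{\ell+1}\pow{i}\le\mv_\ell\pow{i}$ pointwise via the recurrence of \Cref{prop:xbern-recurrence}, together with the associated moment inequalities — to rewrite the $\mu_2^2$-type terms in \eqref{eq:sigma2sq} in terms of $\overline\muv_3$. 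Verifying that these substitutions are simultaneously high-probability upper bounds and produce exactly the stated quadratic is the crux; everything else is a mechanical transcription of the Bernstein proofs with Bernstein replaced by \Cref{lem:clt}.
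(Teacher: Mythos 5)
Your plan is exactly the paper's: the paper omits the proof entirely, with the one-line remark that it is identical to \Cref{prop:first-order-bernstein,prop:second-order-bernstein,prop:fourth-order-bernstein} after replacing each invocation of Bernstein's inequality by the CLT-based Wilson interval of \Cref{lem:clt}. Your transcription of the first- and second-order cases, the observation that squaring the Wilson boundary equality yields the displayed quadratics, and the union-bound bookkeeping giving $C=2,3/2,5/4$ match what the paper intends.

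Your ``main obstacle'' is a genuine gap in the paper, and you were right to flag it rather than sweep it under the rug. Substituting $\mu_2\to x$, $\mu_3\to\overline\muv_3$, $\mu_4\to\overline\muv_4$ into \eqref{eq:sigma2sq} (the substitution \Cref{alg:bern:4} uses, and which is a valid variance upper bound because $\sigma_2^2$ is increasing in its last two arguments) makes the $x^2$ coefficient of the surrogate equal to $-1$, and squaring produces the quadratic $(n+Z_{\beta/4}^2)x^2 - \big(2n\hat\muv_2 + \tfrac{2Z_{\beta/4}^2}{K(K-1)}\big)x + n\hat\muv_2^2 - c'Z_{\beta/4}^2 = 0$ with $c' = \tfrac{4(K-2)\overline\muv_3 + (K-2)(K-3)\overline\muv_4}{K(K-1)}$. \Cref{alg:wilson:4} instead displays an $x^2$ coefficient $n + \tfrac{2Z_{\beta/4}^2(2K-3)}{K(K-1)}$ (strictly smaller than $n + Z_{\beta/4}^2$ for $K\ge 4$) and a constant $c = c' - \tfrac{(K-2)(K-3)}{K(K-1)}\overline\muv_3^2$, i.e., it corresponds to a surrogate in which the $x^2$ inside $(\mu_4-\mu_2^2)$ has been replaced by $\overline\muv_3^2$. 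That replacement is an upper bound only on the event $\{\overline\muv_3\le\mu_2\}$; the monotonicity $\mu_{\ell+1}\le\mu_\ell$ you cite gives $\mu_3\le\mu_2$ but not $\overline\muv_3\le\mu_2$, since $\overline\muv_3$ is inflated by the CI half-width. So the paper's ``identical'' claim does not hold literally for \Cref{alg:wilson:4}, and an extra argument is needed --- e.g., that $\overline\muv_3\to\mu_3\le\mu_2$ a.s.\ so the surrogate is asymptotically valid (with a degeneracy when $\mu_3=\mu_2$), which is compatible with the stated asymptotic coverage but is not a mechanical transcription of the Bernstein proof. Neither your attempt nor the paper actually closes this step; your writeup is the more honest of the two in acknowledging it.
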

We omit the proof as it is identical to those of \Cref{prop:first-order-bernstein,prop:second-order-bernstein,prop:fourth-order-bernstein} except that it uses the Wilson interval from \Cref{lem:clt} rather than the Bernstein interval.

\subsection{Scaling of Higher-Order Bernstein Bounds (\Cref{prop:stat})}
\label{sec:stat_proof}

We now re-state and prove \Cref{prop:stat}.
\begin{proposition_unnumbered}[\ref{prop:stat}]
    For any positive integer $\ell$ that is a power of two and $K=\lceil n^{(\ell-1)/\ell} \rceil $, suppose we have $n$ samples from a $K$-dimensional  XBern distribution with parameters $(\mu_1,\ldots,\mu_K)$. If all $\ell'\textsuperscript{th}$-order correlations scale as $1/K$, i.e., $ |\mu_{2\ell'}-\mu_{\ell'}^2|=O(1/K)$, for all $\ell'\leq \ell$ and $\ell'$ is a power of two, then the $\ell\textsuperscript{th}$-order Bernstein bound is  
    $|\mu_1-\hat\muv_1|=O(1/n^{(2\ell-1)/(2\ell)})$.
\end{proposition_unnumbered}

\begin{proof}
We are given $n$ samples from an XBern distribution $\bx\in\{0,1\}^K$ with parameters $(\mu_1,\ldots,\mu_K)$, where $\mu_\ell := \expect[\mv_\ell ]$ with 
\begin{eqnarray*} 
\mv_\ell&:=&\frac{1}{K(K-1)\cdots(K-\ell+1)} \sum_{j_1<j_2<\ldots<j_\ell\in[K]} \xv_{j_1}\cdots\xv_{j_\ell} \;.
\end{eqnarray*} 
By exchangeability, it also holds that $\mu_\ell=\expect[\xv_1\cdots\xv_\ell]$. 
Assuming a confidence level $1-\beta<1$ and  $K=\lceil n^{(\ell-1)/\ell} \rceil$, the $1$st-order Bernstein bound gives 
\begin{eqnarray} 
 |\mu_1-\hat\muv_1| & =& O\left(\sqrt{\frac{\sigma_1^2}{n}}\right)\;,
 \label{eq:statproof1} 
\end{eqnarray}
where $\sigma_\ell^2:= {\rm Var}(\mv_\ell) $.
Expanding $\sigma_\ell^2$, it is easy to show that it is dominated by the $2\ell$th-order correlation $|\mu_{2\ell}-\mu_\ell^2|$:
\begin{eqnarray*} 
    \sigma_\ell^2 &=& O\left(\frac1K + |\mu_{2\ell}-\mu_\ell^2| \right) \;\;=\;\; 
    O\left(\frac1K + |\hat\muv_{2\ell}-\hat\muv_\ell^2| + \sqrt{\frac{\sigma_{2\ell}^2}{n}} \,\right)\;. 
\end{eqnarray*}
Note that our Bernstein confidence interval does not use the fact that the higher-order correlations are small. We only use that assumption to bound the resulting size of the confidence interval in the analysis. 
Applying the assumption that all the higher order correlations are bounded by $1/K$, i.e., $|\mu_{2\ell'}-\mu_{\ell'}^2|=O(1/K)$, we get that $|\hat\muv_{2\ell'}-\hat\muv_{\ell'}^2|=O(1/K + \sqrt{\sigma_{2\ell'}^2/n}) $. Applying this recursively into \eqref{eq:statproof1}, we get that 
\begin{eqnarray*}
    |\mu_1-\hat\muv_1| &=& O\left( \sqrt{\frac{1}{nK}} + \frac{\sigma_\ell^{1/\ell}}{n^{(2\ell-1)/(2\ell)}} \right)\;, 
\end{eqnarray*}
for any $\ell$ that is a power of two. 
For an $\ell$\textsuperscript{th}-order Bernstein bound, we only use moment estimates up to $\ell$ and bound $\sigma_\ell^2\leq 1$. The choice of $K=n^{(\ell-1)/\ell}$ gives the desired bound: $|\mu_1-\hat\muv_1|=O(1/n^{(2\ell-1)/(2\ell)})$.
\end{proof} 

\section{Canary Design for \pdpfull: Details} \label{sec:a:canary}

The canary design employed in the auditing of the usual $(\eps, \delta)$-DP can be easily extended to create distributions over canaries to audit \pdp.
We give some examples for common classes of canaries.

\myparagraph{Setup}
We assume a supervised learning setting with a training dataset $D_{\text{train}} = \{(x_i, y_i)\}_{i=1}^N$ and a held-out dataset $D_{\text{val}} = \{(x_i, y_i)\}_{i=N+1}^{N+N'}$ of  pairs of input $x_i \in \Xcal$ and output $y_i \in \Ycal$.
We then aim to minimize the average loss 
\begin{align} \label{eq:obj}
    F(\theta) = \frac{1}{N} \sum_{i=1}^N L((x_i, y_i), \theta) \,,
\end{align}
where $L(z, \theta)$ is the loss incurred by model $\theta$ on input-output pair $z = (x, y)$.

In the presence of canaries $c_1, \ldots, c_k$, we instead aim to minimize the objective 
\begin{align} \label{eq:obj-canary}
    F_\canary(\theta; c_1, \ldots, c_k) = \frac{1}{N} \left( \sum_{i=1}^N L((x_i, y_i), \theta) + \sum_{j=1}^k L_\canary(c_j, \theta) \right) \,,
\end{align}
where $L_\canary(c, \theta)$ is the loss function for a canary $c$ --- this may or may not coincide with the usual loss $L$.

\myparagraph{Goals of Auditing DP}
The usual practice is to set $D_0 = D_{\text{train}}$ and $D_1 = D_0 \cup \{c\}$ and $R \equiv R_c$ for a canary $c$. 
Recall from the definition of $(\eps, \delta)$-DP in \eqref{eq:dp}, we have
\[
    \eps \ge \sup_{c \in \Ccal} \log\left( \frac{\prob(\Acal(D_1) \in R_c) - \delta}{\prob(\Acal(D_0) \in R_c)} \right) \,,
\]
for some class $\Ccal$ of canaries.
The goal then is to find canaries $c$ that approximate the sup over $\Ccal$.
Since this goal is hard, one usually resorts to finding canaries whose effect can be ``easy to detect'' in some sense.

\myparagraph{Goals of Auditing \pdp}
Let $P_\canary$ denote a probability distribution over a set $\Ccal$ of allowed canaries. We sample $K$ canaries $\cv_1, \ldots, \cv_K \stackrel{\text{i.i.d.}}{\sim} P_\canary$ and set 
\[
  \bD_0 = D_{\text{train}} \cup \{\cv_1, \ldots, \cv_{K-1}\},
  \quad 
  \bD_1 = D_{\text{train}} \cup \{\cv_1, \ldots, \cv_{K}\},
  \quad
  \bR = R_{\cv_K} \,.
\]
From the definition of $(\eps, \delta)$-\pdp in \eqref{eq:pdp}, we have
\[
    \eps \ge \sup_{P_\canary} \log\left( \frac{\prob(\Acal(\bD_1) \in \bR) - \delta}{\prob(\Acal(\bD_0) \in \bR)} \right) \,,
\]
for some class $\Ccal$ of canaries.
The goal then is to approximate the distribution $P_\canary$ for each choice of the canary set $\Ccal$.
Since this is hard, we will attempt to define a distribution over canaries that are easy to detect (similar to the case of auditing DP).
Following the discussion in \S\ref{sec:stat}, auditing \pdp benefits the most when the canaries are uncorrelated. To this end, we will also impose the restriction that a canary $\cv \sim P_\canary$, if included in training of a model $\theta$, is unlikely to change the membership of $\theta \in R_{\cv'}$ for an i.i.d. canary $\cv' \sim P_\canary$ that is independent of $\cv$.

We consider two choices of the canary set $\Ccal$ (as well as the outcome set $R_c$ and the loss $L_\canary$): %
data poisoning, and random gradients.

\subsection{Data Poisoning}

We describe the data poisoning approach known as ClipBKD~\cite{jagielski2020auditing} that is based on using the tail singular vectors of the input data matrix and its extension to auditing \pdp.

Let $X = (x_1\T ; \cdots ; x_N\T) \in \reals^{N \times d}$ denote  the matrix with the datapoints $x_i \in \reals^d$ as rows.
Let $X = \sum_{i=1}^{\min\{N, d\}} \sigma_i u_i v_i\T$ be the singular value decomposition of $X$ with $\sigma_1 \le \sigma_2 \le \cdots$ be the singular values arranged in ascending order.
Let $Y$ denote set of allowed labels.

For this section, we take the set of allowed canaries $\Ccal = \{\alpha v_1, \alpha v_2, \ldots, \alpha v_{\min\{N, d\}}\} \times \Ycal$ as the set of right singular vector of $X$ scaled by a given factor $\alpha > 0$ together with any possible target from $\Ycal$.
We take $L_\canary(c, \theta) = L(c, \theta)$ to be the usual loss function, and the output set $R_c$ to be the loss-thresholded set
\begin{align} \label{eq:loss-threshold}
    R_c := \{ \theta \in \Rcal \, :\, L(c, \theta) \le \tau\} \,,
\end{align}
for some threshold $\tau$.

\myparagraph{Auditing DP}
The ClipBKD approach~\cite{jagielski2020auditing} uses a canary with input $\alpha v_1$, the singular vector corresponding to the smallest singular value, scaled by a parameter $\alpha > 0$.
The label is taken as $y^\star(\alpha v_1)$, where
\[
    y^\star(x) = \argmax_{y \in \Ycal} L((x, y), \theta_0^\star)
\]
is the target that has the highest loss on input $x$ under the empirical risk minimizer $\theta_0^\star = \argmin_{\theta \in \Rcal} F(\theta)$.
Since a unique $\theta_0^\star$ is not guaranteed for deep nets nor can we find it exactly, we train 100 models with different random seeds and pick the class $y$ that yields the highest average loss over these runs.

\myparagraph{Auditing \pdp}
We extend ClipBKD to define a probability distribution over a given number $p$ of canaries. We take 
\begin{align}
    P_\canary = \text{Uniform}\big(\{c_1, \ldots, c_p\}\big)\,
    \quad \text{with}\quad
    c_j = \big(\alpha v_j, y^\star(\alpha v_j) \big) \,,
\end{align}
i.e., $P_\canary$ is the uniform distribution over the $p$ singular vectors corresponding to the smallest singular values.

\subsection{Random Gradients}

The update poisoning approach of~\cite{andrew2023one} relies of supplying gradients $c\sim \text{Uniform}(B_\Rcal(0, r))$ that are uniform on the Euclidean ball of a given radius $r$. This is achieved by setting the loss of the canary as
\[
    L_\canary(c, \theta) = \inp{c}{\theta},
    \quad\text{so that}\quad
    \grad_\theta L_\canary(c, \theta) = c \,,
\]
is the desired vector $c$.

The set $R_c$ is a threshold of the dot product
\begin{align} \label{eq:dot-threshold}
    R_c = \{ \theta \in \Rcal \, :\, \inp{c}{\theta} \le \tau\}
\end{align}
for a given threshold $\tau$.
This set is analogous to the loss-based thresholding of \eqref{eq:loss-threshold} in that both can be written as $L_\canary(c, \theta) \le \tau$. 

\myparagraph{Auditing DP and \pdp}
The random gradient approach of \cite{andrew2023one} relies on defining a distribution $P_\canary \equiv \text{Uniform}(B_\Rcal(0, r))$ over canaries.
It can be used directly to audit \pdp.

\section{Simulations with the Gaussian Mechanism: Details and More Results}
\label{sec:a:gaussian}

Here, we give the full details and additional results of auditing the Gaussian mechanism with synthetic data in \S\ref{sec:gaussian}.

\begin{figure}
    \centering
    \includegraphics[width=0.9\linewidth]{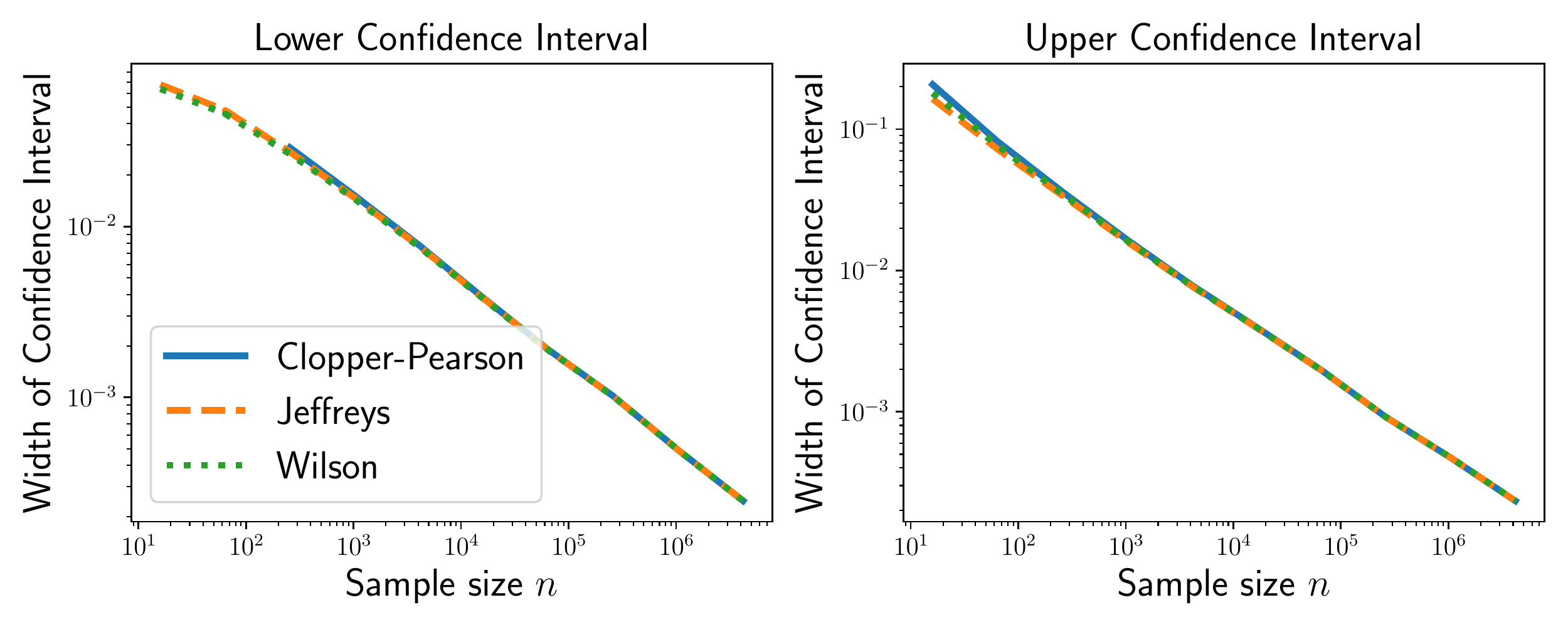}
    \caption{\small Comparing the Binomial proportion confidence intervals. We sample $m \sim \text{Binomial}(n, p)$ for $p=0.1$ and $n$ varying and find the $95\%$ confidence interval $[\underline \pv_n, \overline \pv_n]$. We plot the widths $p - \underline \pv_n$ and $\overline \pv_n - p$ versus $n$. We find that all confidence intervals are nearly equivalent once $n$ is larger than $\approx 1 / \min\{p, 1-p\}^2$.}
    \label{fig:binomial_intervals}
\end{figure}

\subsection{Experiment Setup}

Fix a dimension $d$
and a failure probability $\beta \in (0, 1)$.
Suppose we have a randomized algorithm $\Acal$ that returns a noisy sum of its inputs with a goal of simulating the Gaussian mechanism.
Concretely, the input space $\Zcal = \{z \in \reals^d \, :\, \norm{z}_2 \le 1\}$ 
is the unit ball in $\reals^d$.
Given a finite set $D \in \Zcal^*$, we sample a vector $\bxi \sim \mathcal{N}(0, \sigma^2 \id_d)$ of a given variance $\sigma^2$ and return
\[
    \Acal(D) = \bxi + \sum_{z \in D} z \,.
\]

To isolate the effect of the canaries, we set our original dataset $D = \{\bm{0}_d\}$ as a singleton with the vector of zeros in $\reals^d$. Since we are in the blackbox setting, we do not assume that this is known to the auditor.

\myparagraph{DP Upper Bound}
 The non-private version of our function computes the sum $D \mapsto \sum_{z \in D} z$.
 where each $z \in D$ is a canary. 
 Hence, the $\ell_2$ sensitivity of the operation is $\Delta_2 = \max_{x \in D} \|x\|_2 = 1$, as stated in \S\ref{sec:gaussian}.

Since we add $\bxi \sim \mathcal{N}(0, \sigma^2 I_d)$, it follows that the operation $\Acal(\cdot)$ is $\big(\alpha, \alpha / (2 \sigma^2)\big)$-RDP for every $\alpha > 1$. 
     Thus, $\Acal(\cdot)$ is $(\epsilon_\delta, \delta)$-DP where 
    \begin{align*}
        \epsilon_\delta & 
        {\le}
        \inf_{\alpha > 1} 
        \left\{
            \frac{\alpha}{2\sigma^2} 
            + \frac{1}{\alpha - 1} \log\frac{1}{\alpha \delta} + \log\left( 1 - \frac{1}{\alpha} \right)
        \right\}  
        \,,
    \end{align*}
    based on
    \cite[Thm. 21]{balle2020hypothesis}. This can be shown to be bounded above by $\frac{1}{\sigma} \sqrt{2 \log\frac{1}{\delta}} + \frac{1}{2\sigma^2}$
    \cite[Prop. 3]{mironov2017renyi}.
By \Cref{prop:pdp:dp}, it follows that the operation $\Acal(\cdot)$ is also $(\eps_\delta, \delta)$-\pdp.

\begin{figure}[t]
    \centering
    \includegraphics[width=0.98\linewidth]{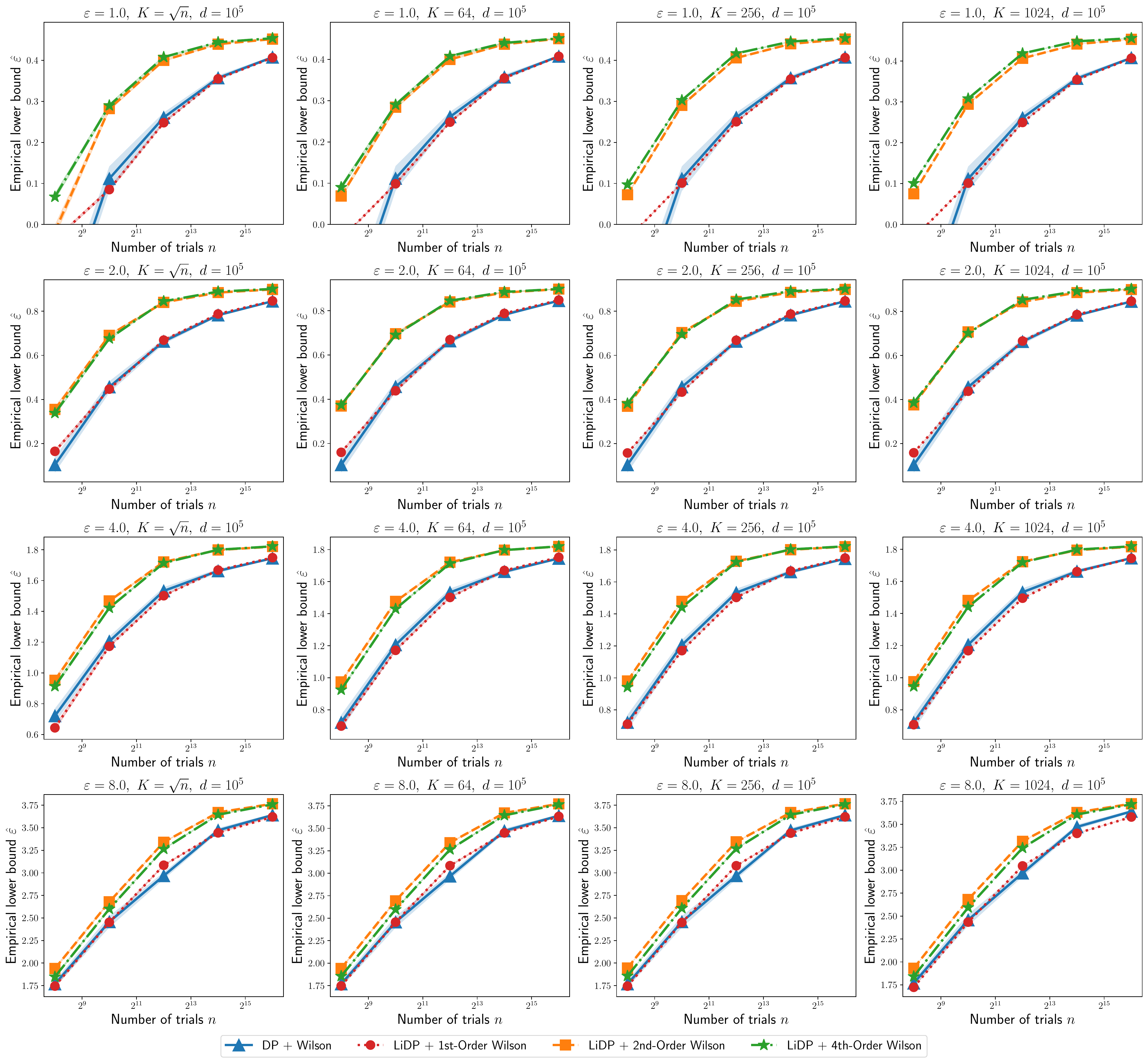}
    \caption{Effect of the number $n$ of trials on the empirical lower bound $\hat \eps$ from auditing the Gaussian mechanism for DP and \pdp. The shaded are denotes the standard error over 25 random seeds.}
    \label{fig:gaussian-n-all}
\end{figure}

\begin{figure}[t]
    \centering
    \includegraphics[width=0.98\linewidth]{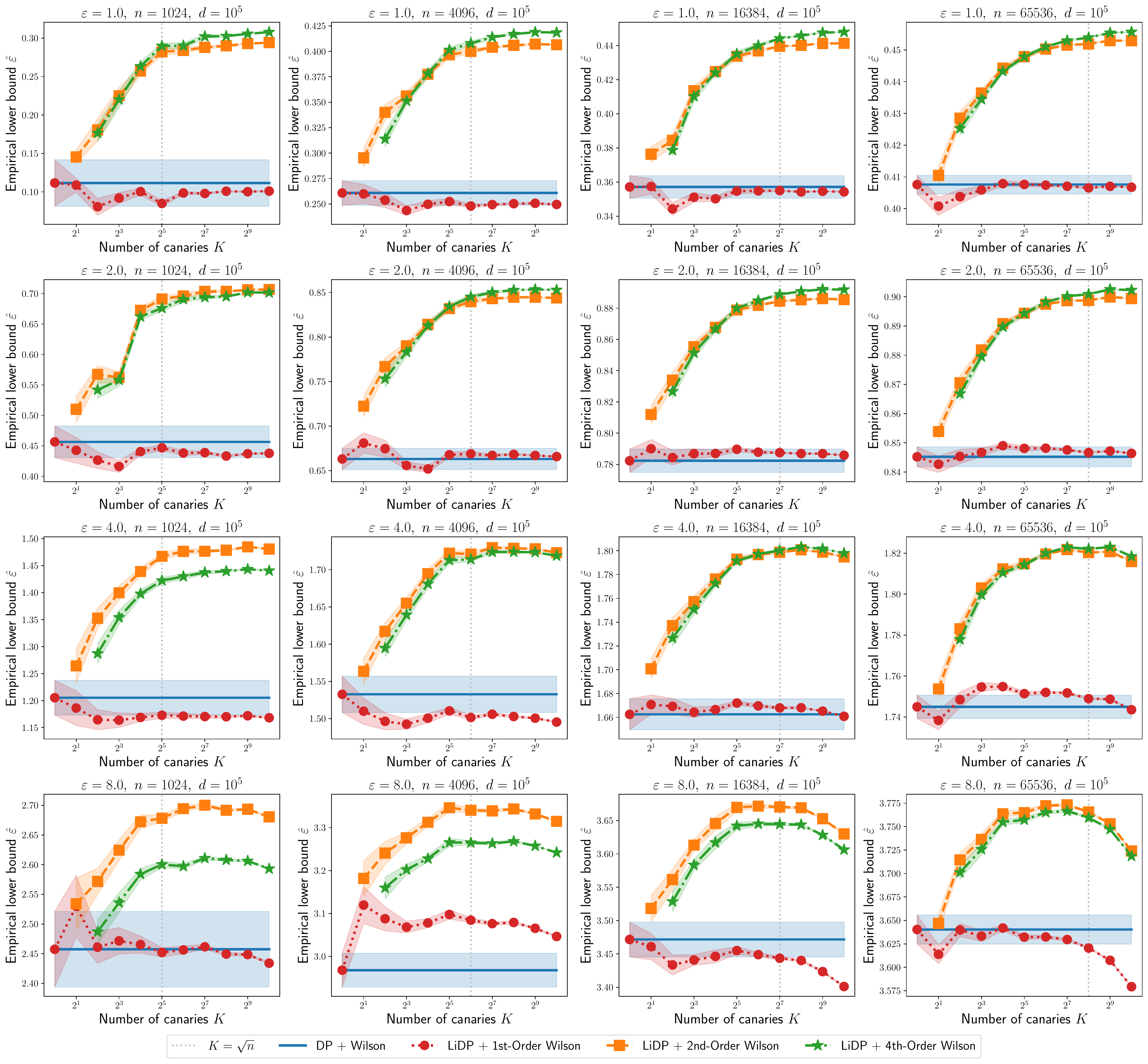}
    \caption{Effect of the number $k$ of canaries on the empirical lower bound $\hat \eps$ from auditing the Gaussian mechanism for DP and \pdp. The shaded are denotes the standard error over 25 random seeds.}
    \label{fig:gaussian-k-all}
\end{figure}

\begin{figure}[t]
    \centering
    \includegraphics[width=0.98\linewidth]{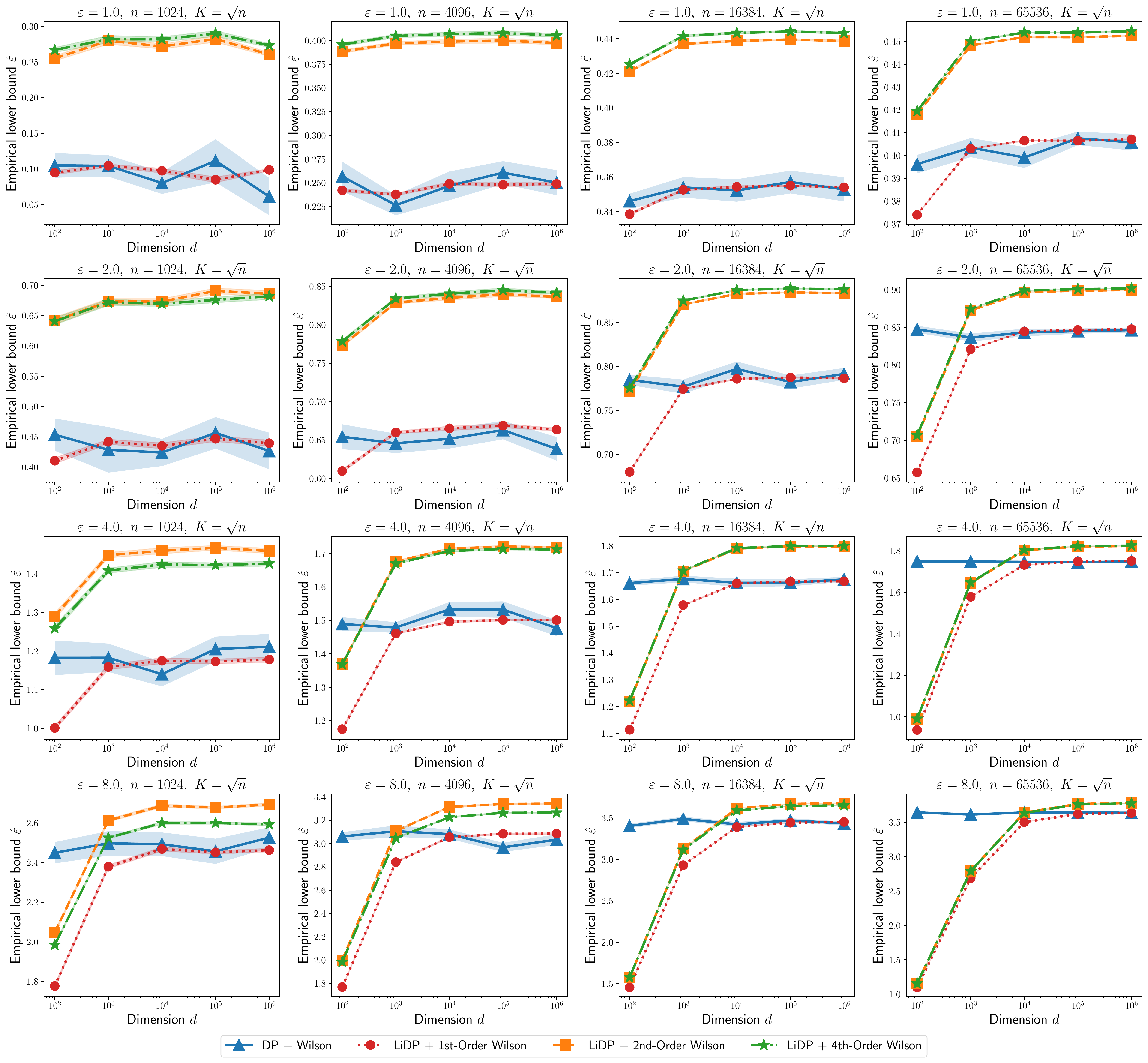}
    \caption{Effect of the data dimension $d$ on the empirical lower bound $\hat \eps$ from auditing the Gaussian mechanism for DP and \pdp. The shaded are denotes the standard error over 25 random seeds.}
    \label{fig:gaussian-d-all}
\end{figure}

\begin{figure}[t]
    \centering
    \includegraphics[width=0.98\linewidth]{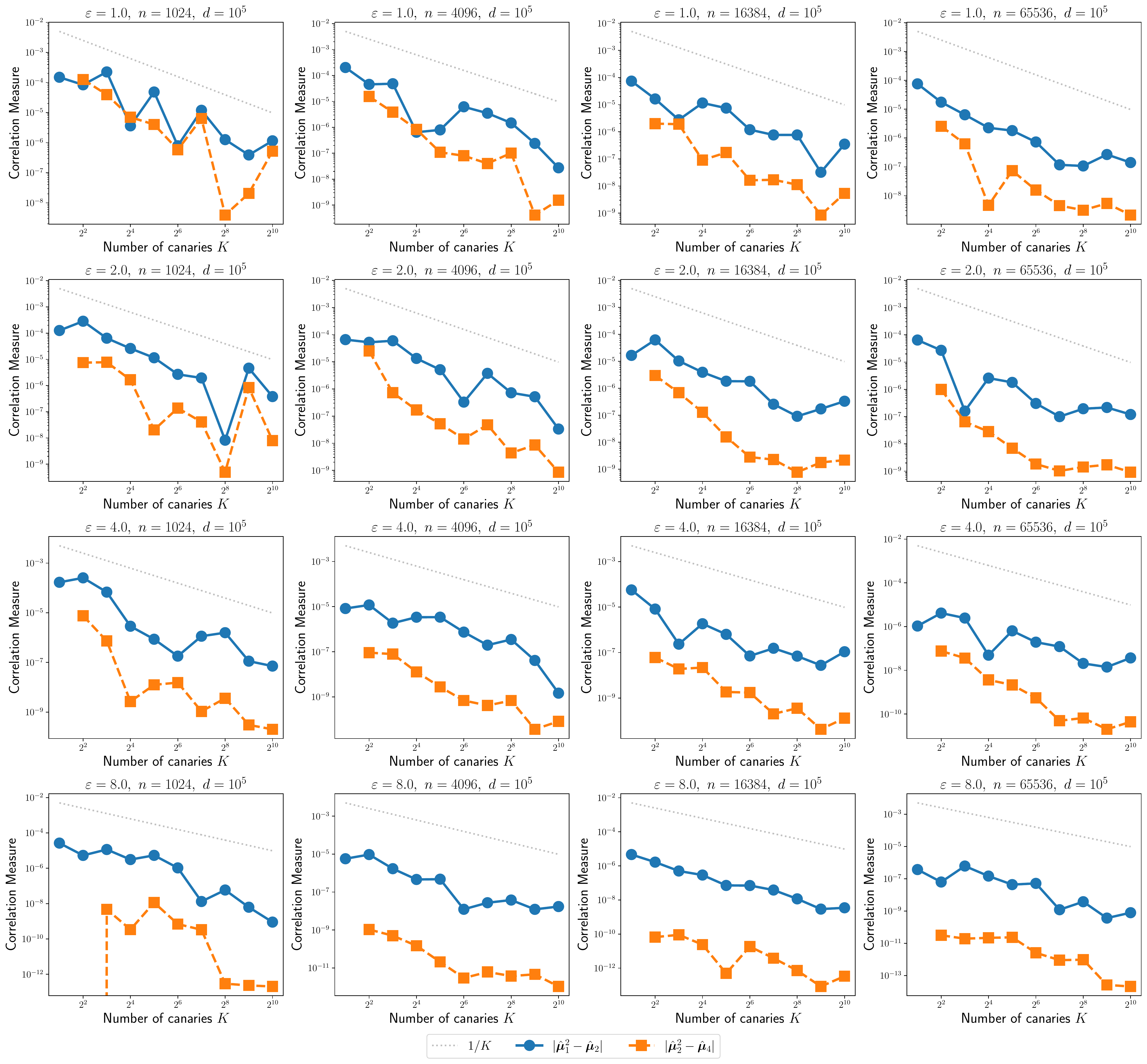}
    \caption{Effect of the number $k$ of canaries on the moment estimates employed by the higher-order Wilson intervals in auditing \pdp.}
    \label{fig:gaussian-moment-k-all}
\end{figure}

\begin{figure}[t]
    \centering
    \includegraphics[width=0.98\linewidth]{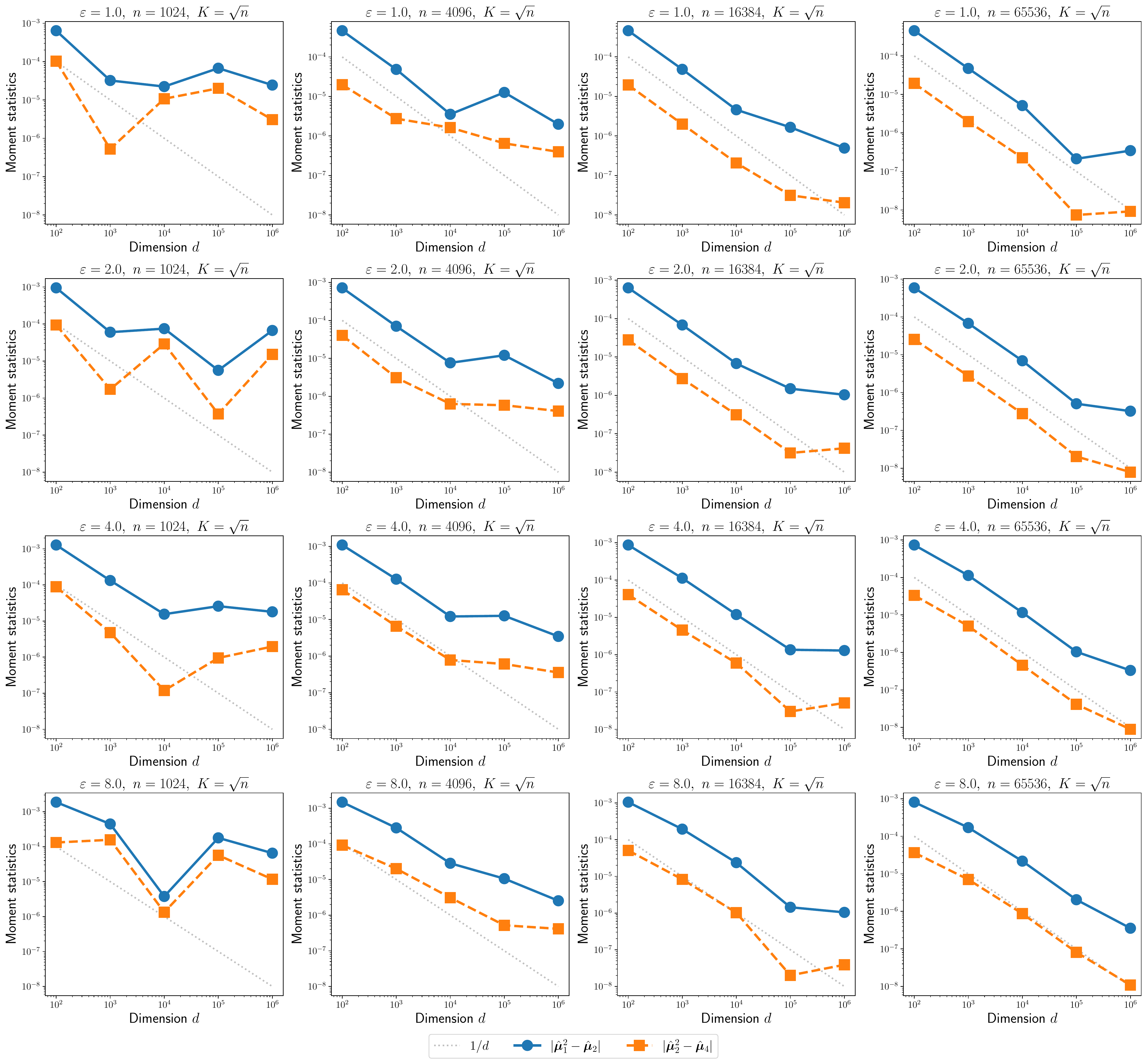}
    \caption{Effect of the data dimension on the moment estimates employed by the higher-order Wilson intervals in auditing \pdp.}
    \label{fig:gaussian-moment-d-all}
\end{figure}

\myparagraph{Auditing \pdp}
We follow the recipe of \Cref{alg:recipe}.
We 
set the rejection region $\bR = R_\tau(\cv_K)$ as a function of the canary $\cv_K$ that differs between $\bD_0$ and $\bD_1$, where
    \begin{align}
        R_\tau(\cv_j) := \left\{
            u \in \reals^d \, :\, \langle u, \cv_j \rangle \ge \tau
        \right\} \,,
    \end{align}
    and $\tau \in \reals$ is a tuned threshold.  

We evaluate empirical privacy auditing methods by how large lower bound $\hat \eps$ is --- the higher the lower bound, the better is the confidence interval.

\myparagraph{Methods Compared}
An empirical privacy auditing method is defined by the type of privacy auditing (DP or \pdp) and the type of confidence intervals.
We compare the following auditing methods:
\begin{itemize}
    \item \textbf{DP + Wilson}: We audit the usual $(\eps, \delta)$-DP with $K=1$ canary. This corresponds exactly to auditing \pdp with $K=1$. We use the 1st-Order Wilson confidence intervals for a fair comparison with the other \pdp auditing methods. This performs quite similarly to the other intervals used in the literature, cf. \Cref{fig:binomial_intervals}.
    \item \textbf{\pdp + 1st-Order Wilson}: We audit \pdp with $K$ canaries with the 1st-Order Wilson confidence interval. 
    This method cannot leverage the shrinking of the confidence intervals from higher order estimates. 
    \item \textbf{\pdp + 2nd/4th-Order Wilson}: We audit \pdp with $k > 1$ canaries using the higher-order Wilson confidence intervals. 
\end{itemize}

\myparagraph{Parameters of the Experiment}
We vary the following parameters in the experiment:
\begin{itemize}
    \item Number of trials $n \in \{2^8, 2^{10}, \cdots, 2^{16}\}$.
    \item Number of canaries $k \in \{1, 2, 2^{2}, \ldots, 2^{10}\}$.
    \item Dimension $d \in \{10^2, 10^3, \ldots, 10^6\}$.
    \item DP upper bound $\eps \in \{1, 2, 4, 8\}$. 
\end{itemize}
We fix the DP parameter $\delta = 10^{-5}$ and the failure probability $\beta = 0.05$.

\myparagraph{Tuning the threshold $\tau$}
For each confidence interval scheme, we repeat the estimation of the lower bound $\hat \eps(\tau)$ for a grid of thresholds $\tau \in \Gamma$ on a holdout set of $n$ trials. We fix the best threshold $\tau^* = \argmax_{\tau \in \Gamma} \hat \eps(\tau)$ that gives the largest lower bound $\hat \eps(\tau)$ from the grid $\Gamma$.
We then fix the threshold $\tau^*$ and report numbers over a fresh set of $n$ trials.

\myparagraph{Randomness and Repetitions}
We repeat each experiment $25$ times (after fixing the threshold) with different random seeds and report the mean and standard error.

\subsection{Additional Experimental Results}
We give additional experimental results, expanding on the plots shown in \Cref{fig:gaussian,fig:gaussian-2}:
\begin{itemize}
    \item \Cref{fig:gaussian-n-all} shows the effect of varying the number of trials $n$, similar to \Cref{fig:gaussian} (left).
    \item \Cref{fig:gaussian-k-all} shows the effect of varying the number of canaries $k$, similar to \Cref{fig:gaussian} (middle).
    \item \Cref{fig:gaussian-d-all} shows the effect of varying the data dimension $d$, similar to \Cref{fig:gaussian} (right).
    \item \Cref{fig:gaussian-moment-k-all} shows the effect of varying the number of canaries $k$ on the moment estimates, similar to \Cref{fig:gaussian-2} (right).
    \item \Cref{fig:gaussian-moment-d-all} shows the effect of varying the data dimension $d$ on the moment estimates, similar to \Cref{fig:gaussian-2} (right).
\end{itemize}

We observe that the insights discussed in \S\ref{sec:gaussian} hold across a wide range of the parameter values.
In addition we make the following observations. 

\myparagraph{The benefit of higher-order confidence estimators}
We see from \Cref{fig:gaussian-n-all,fig:gaussian-k-all,fig:gaussian-d-all} that the higher-order Wilson estimators lead to larger relative improvements at smaller $\eps$. On the other hand, they perform similarly at large $\eps$ (e.g., $\eps=8$) to the lower-order estimators.

\myparagraph{4th-Order Wilson vs. 2nd-Order Wilson}
We note that the 4th-order Wilson intervals outperforms the 2nd-order Wilson interval at $\eps=1$, while the opposite is true at large $\eps=8$. At intermediate values of $\eps$, both behave very similarly.
We suggest the 2nd-order Wilson interval as a default because it is nearly as good as or better than the 4th-order variant across the board, but is easier to implement. 
\section{Experiments: Details and More Results}
\label{sec:a:expt}
We describe the detailed experimental setup here.

\subsection{Training Details: Datasets, Models}

We consider two datasets, FMNIST and Purchase-100. Both are multiclass classification datasets trained with the cross entropy loss using stochastic gradient descent (without momentum) for fixed epoch budget.

\begin{itemize}
    \item \textbf{FMNIST}: FMNIST or FashionMNIST~\cite{xiao2017fashionmnist} is
    a classification of $28\times 28$ grayscale images of various articles of clothing into 10 classes. It contains 60K train images and 10K test images. The dataset is available under the MIT license. We experiment with two models: a linear model and a multi-layer perceptron (MLP) with 2 hidden layers of dimension 256 each.
    We train each model for 30 epochs with a batch size of 100 and a fixed learning rate of 0.02 for the linear model and 0.01 for the MLP.
    \item \textbf{Purchase-100}: The Purchase dataset is based on Kaggle's  ``acquire valued shoppers'' challenge \cite{acquire-valued-shoppers-challenge}
    that records the shopping history of 200K customers.
    The dataset is available publicly on Kaggle but the owners have not created a license as far as we could tell.
    We use the preprocessed version of \cite{shokri2017membership}\footnote{\url{https://github.com/privacytrustlab/datasets}} where the input is a 600 dimensional binary vector encoding the shopping history.
    The classes are obtained by grouping the records into 100 clusters using $k$-means.
    We use a fixed subsample of 20K training points and 5K test points.
    The model is a MLP with 2 hidden layers of 256 units each. It is trained for 100 epochs with a batch size of 100 and a fixed learning rate of 0.05.
\end{itemize}

\subsection{DP and Auditing Setup}

We train each dataset-model pair with DP-SGD~\cite{DP-DL}. The noise level is calibrated so that the entire training algorithm satisfies $(\eps, \delta)$-differential privacy with $\eps$ varying and $\delta = 10^{-5}$ fixed across all experiments.
We tune the per-example gradient clip norm for each dataset, model, and DP parameter $\eps$ so as to maximize the validation accuracy.

\myparagraph{Auditing Setup}
We follow the \pdp auditing setup described in 
\Cref{sec:a:nbhood}. Recall that auditing \pdp  with $K=1$ canaries and corresponds exactly with auditing DP.
We train $n \in \{125, 250, 500, 1000\}$ trials, where each trial corresponds to a model trained with $K$ canaries in each run. 
We try two methods for canary design (as well as their corresponding rejection regions), as discussed in \Cref{sec:a:canary}: data poisoning and random gradients. 

With data poisoning for FMNIST, we define the canary distribution $\Pcal_\canary$ as the uniform distribution over the last $p=284$ principal components (i.e., principal components $500$ to $784$).
For Purchase-100, we use the uniform distribution over the last $p=300$ principal components (i.e., principal components $300$ to $600$).

For both settings, we audit \emph{only the final model}, assuming that we do not have access to the intermediate models. This corresponds to the blackbox auditing setting for data poisoning and a graybox setting for random gradient canaries. 

\subsection{Miscellaneous Details}

\myparagraph{Hardware}
We run each job on an internal compute cluster using only CPUs (i.e., no hardware accelerators such as GPUs were used). Each job was run with 8 CPU cores and 16G memory.

\subsection{Additional Experimental Results}
We give the following plots to augment \Cref{tab:expt:sample-complexity} and \Cref{fig:expt:main}:
\begin{itemize}
    \item \Cref{fig:expt:utility}: plot of the test accuracy upon adding $K$ canaries to the training process. 
    \item \Cref{fig:expt-all:fmnist-linear}: experimental results for FMNIST linear model.
    \item \Cref{fig:expt-all:fmnist-mlp}: experimental results for FMNIST MLP model.
    \item \Cref{fig:expt-all:purchase-mlp}: experimental results for Purchase MLP model.
\end{itemize}

\begin{figure}
\centering
    \includegraphics[width=0.9\linewidth]{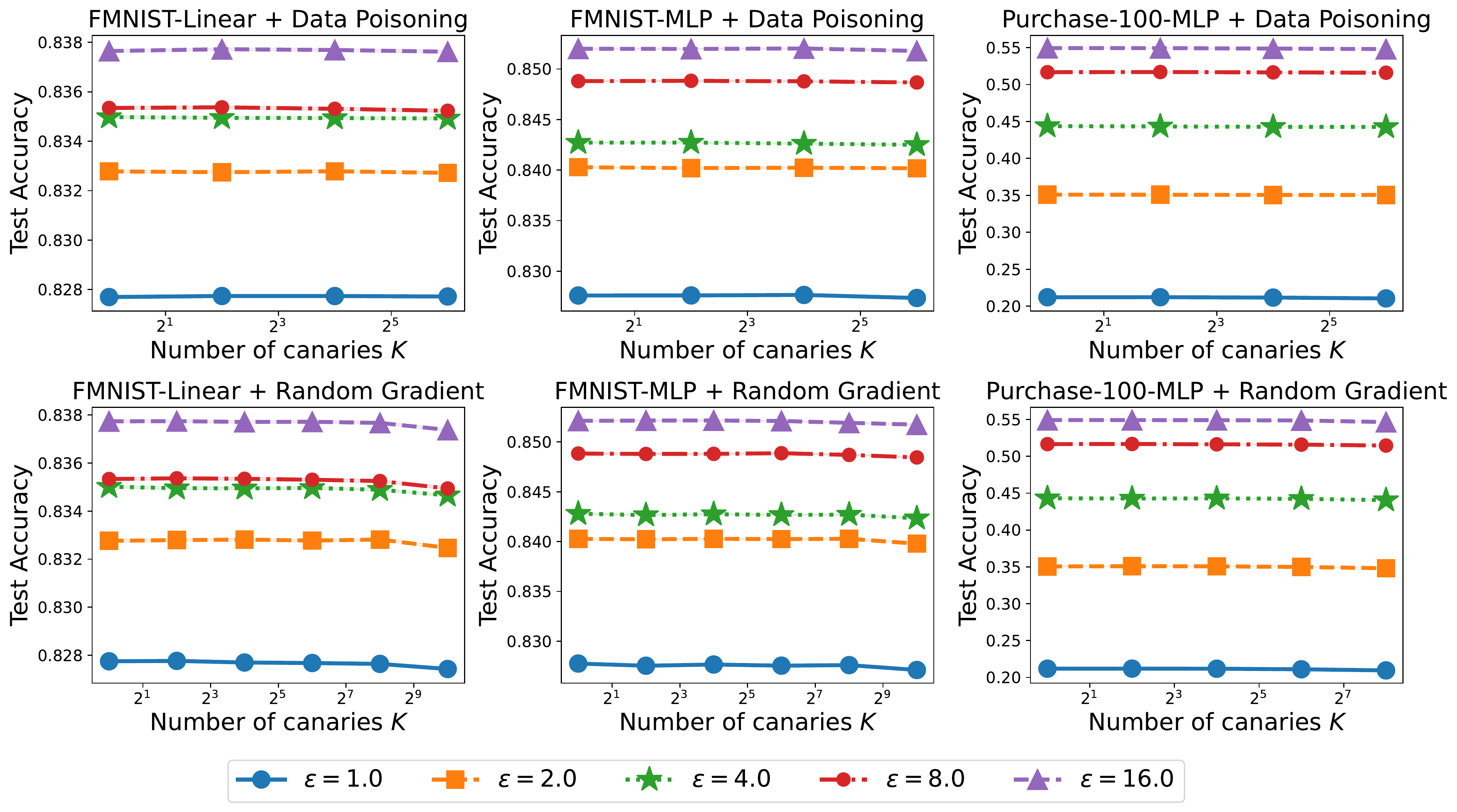}
    \caption{\small Test accuracy versus the number of canaries $K$. We plot the mean over $1000$ training runs (the standard error in under $10^{-5}$). Adding multiple canaries to audit \pdp does not have any impact on the final test accuracy of the model trained with DP.}
    \label{fig:expt:utility}
\end{figure}

\ifnum \value{arxiv}>0  {
\begin{figure}
    \centering
    \adjincludegraphics[width=0.7\linewidth, trim={0 5em 0 0}, clip]{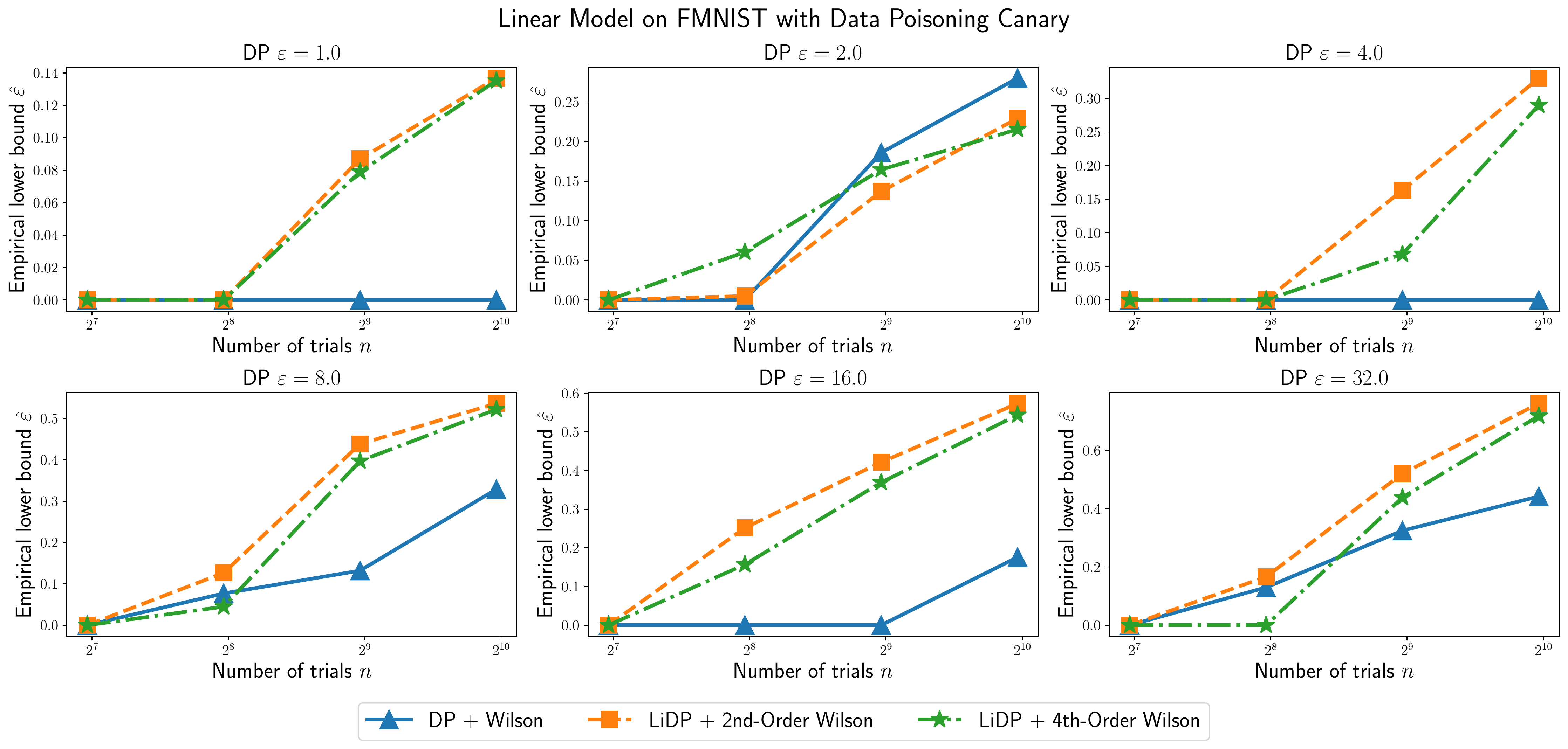}
    \adjincludegraphics[width=0.7\linewidth, trim={0 5em 0 0}, clip]{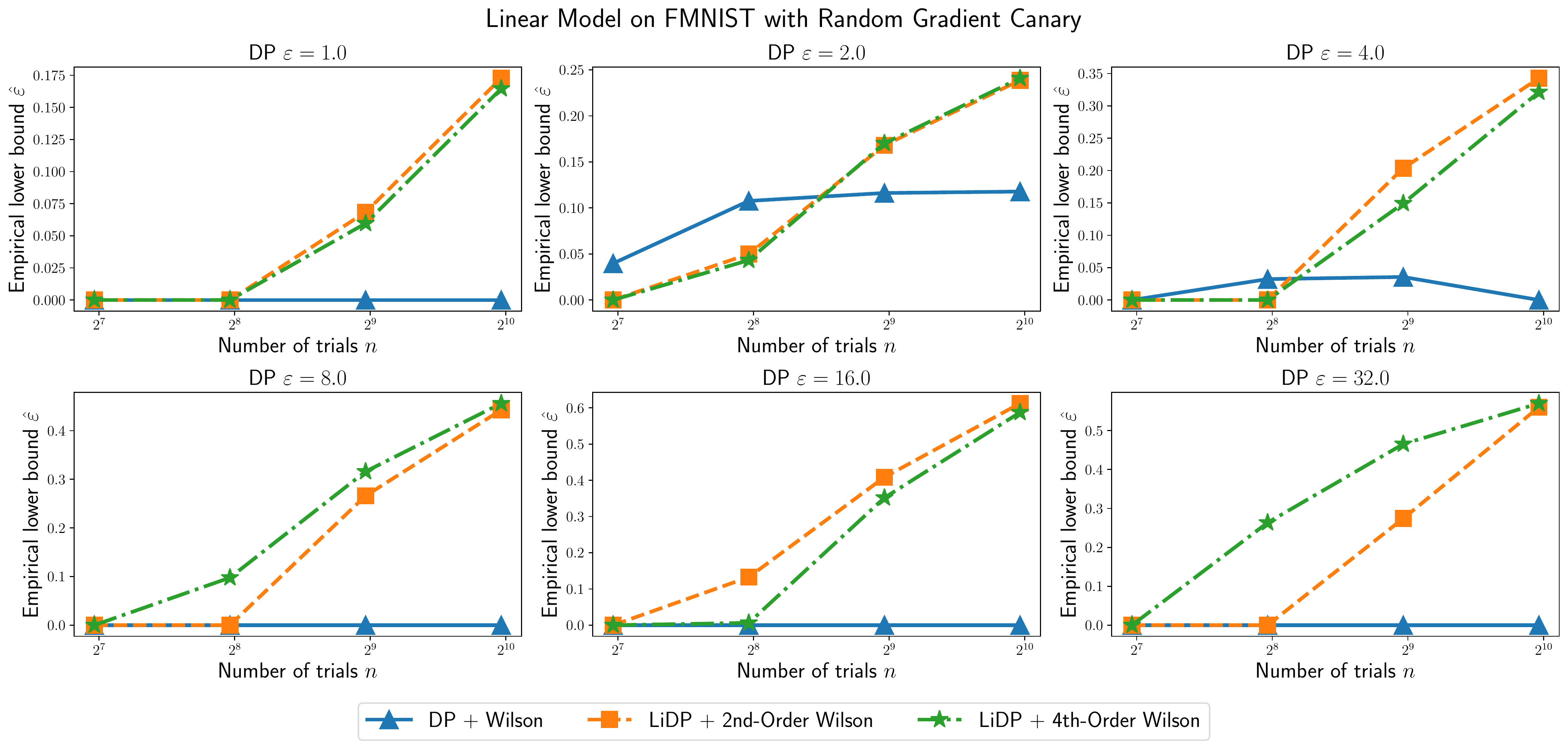}
    \adjincludegraphics[width=0.7\linewidth, trim={0 5em 0 0}, clip]{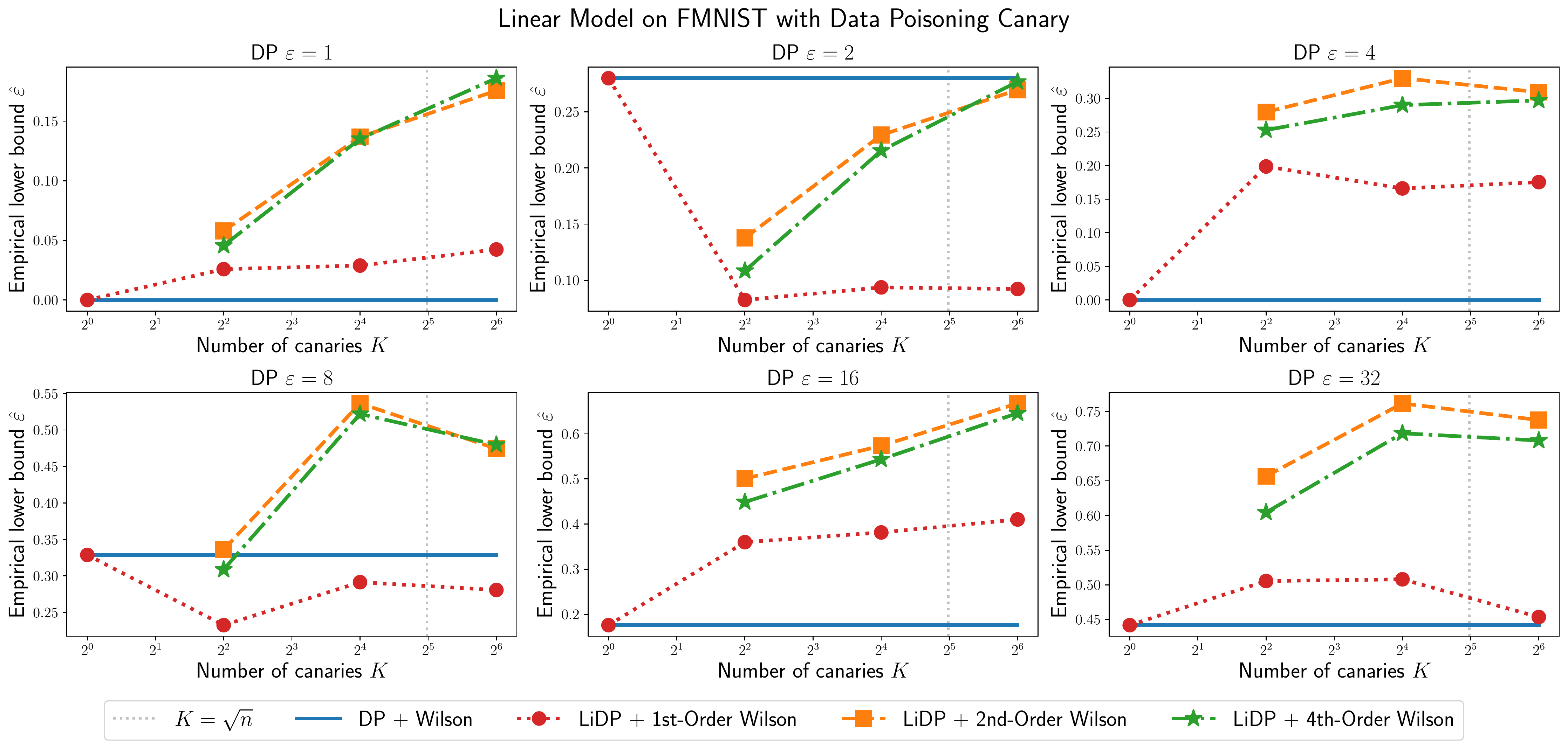}
    \includegraphics[width=0.7\linewidth]{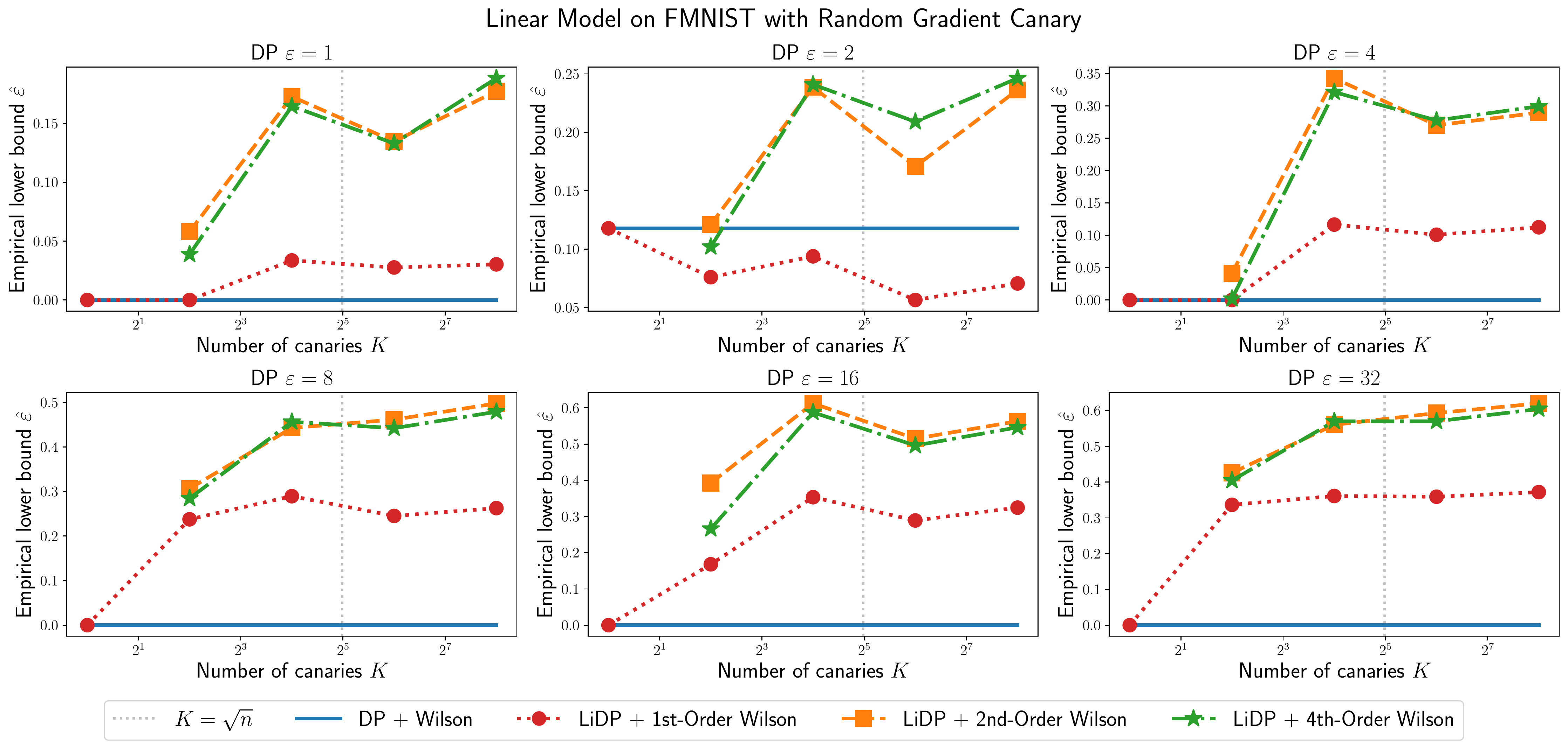}
    \caption{\small Experimental results for FMNIST linear model (top two: varying $n$, bottom two: varying $K$).}
    \label{fig:expt-all:fmnist-linear}
\end{figure}

\begin{figure}
    \centering
    \adjincludegraphics[width=0.7\linewidth, trim={0 5em 0 0}, clip]{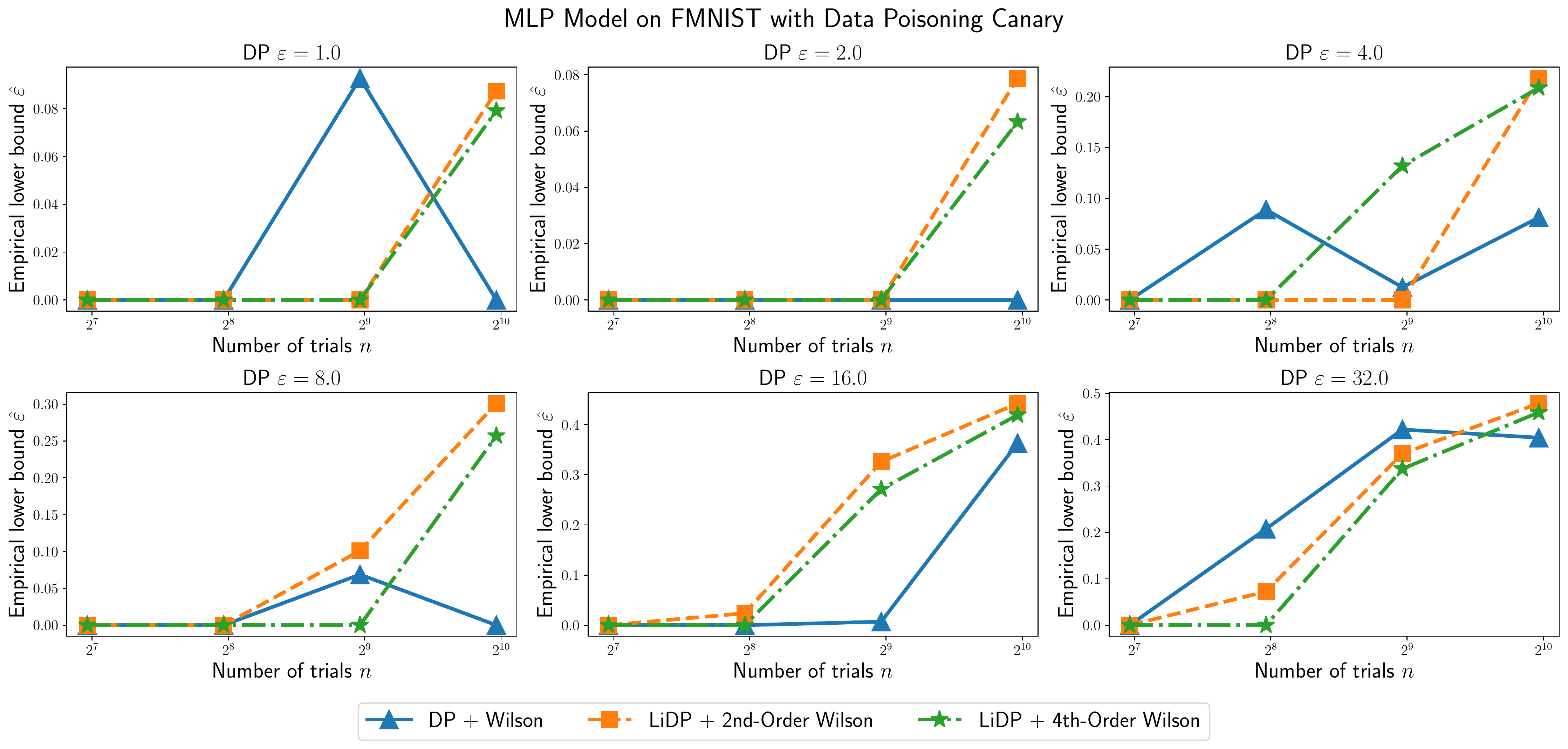}
    \adjincludegraphics[width=0.7\linewidth, trim={0 5em 0 0}, clip]{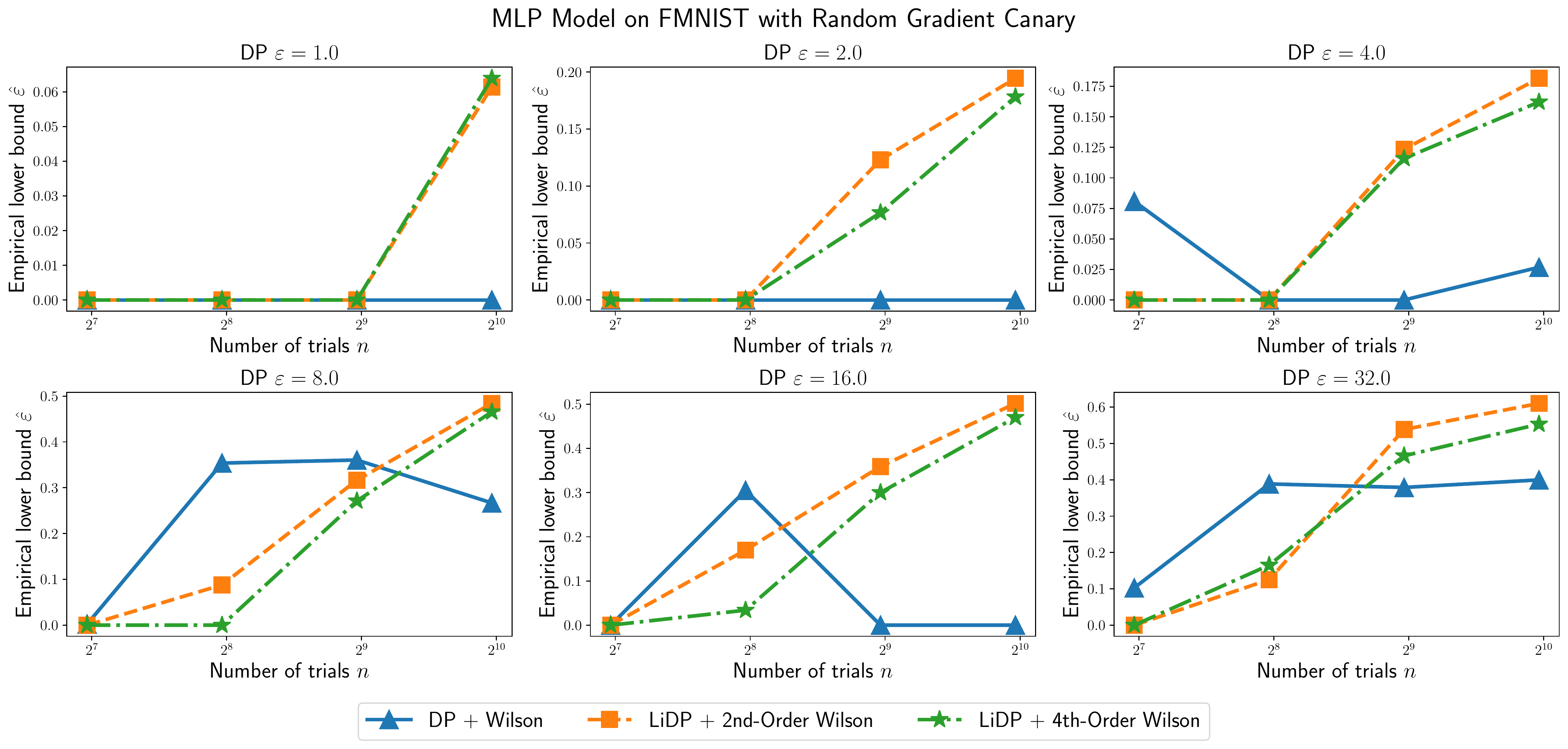}
    \adjincludegraphics[width=0.7\linewidth, trim={0 5em 0 0}, clip]{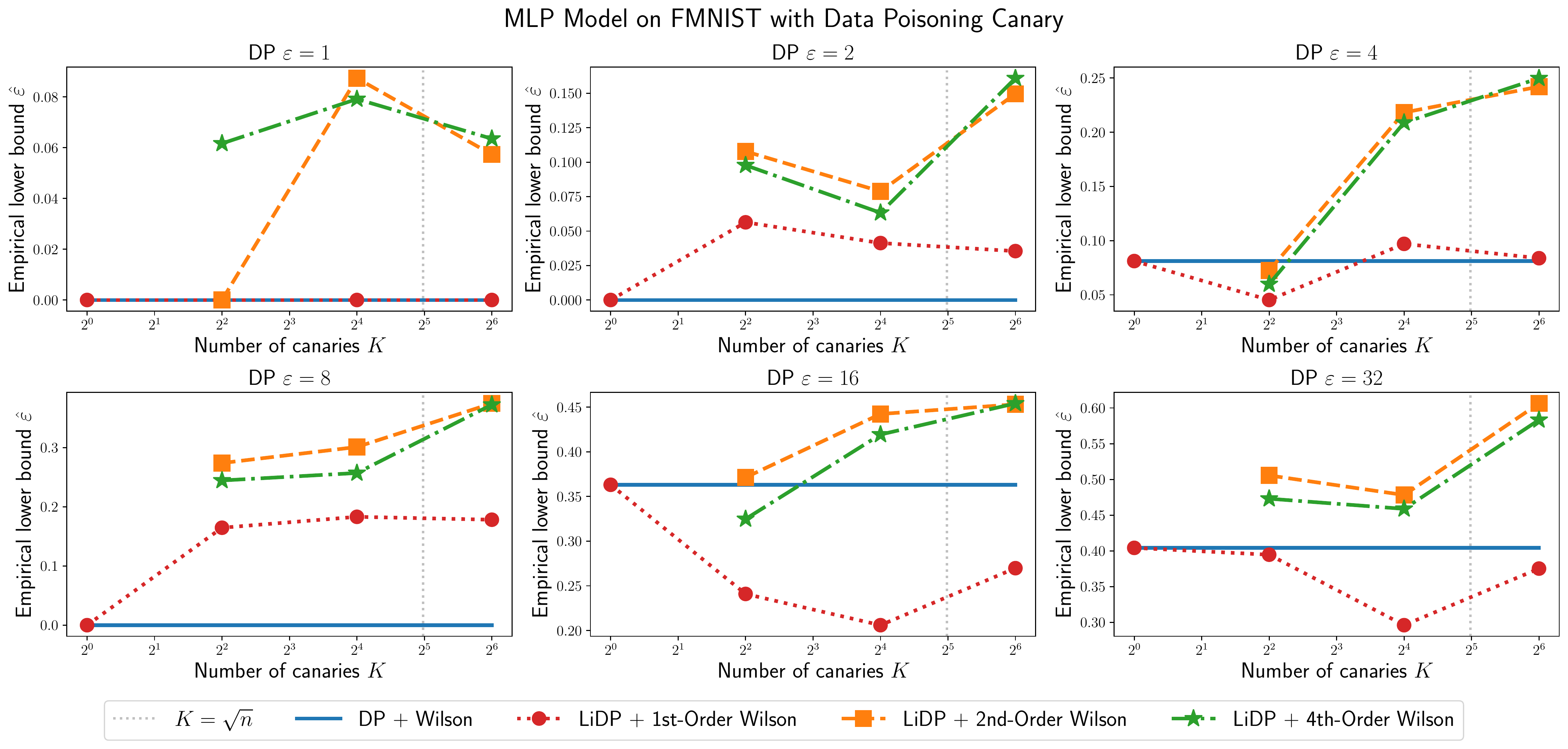}
    \includegraphics[width=0.7\linewidth]{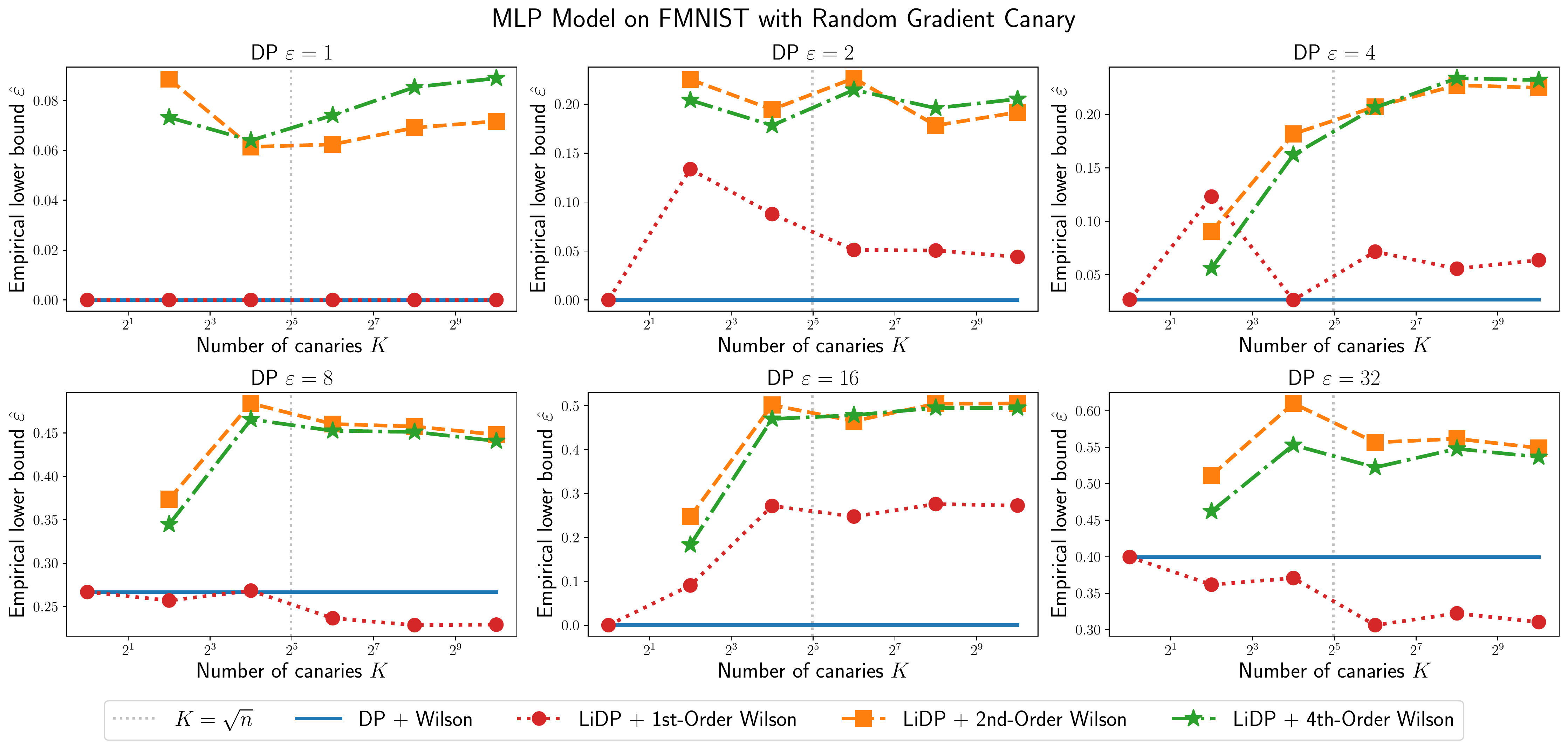}
    \caption{\small Experimental results for FMNIST MLP model (top two: varying $n$, bottom two: varying $K$).}
    \label{fig:expt-all:fmnist-mlp}
\end{figure}

\begin{figure}
    \centering
    \adjincludegraphics[width=0.7\linewidth, trim={0 5em 0 0}, clip]{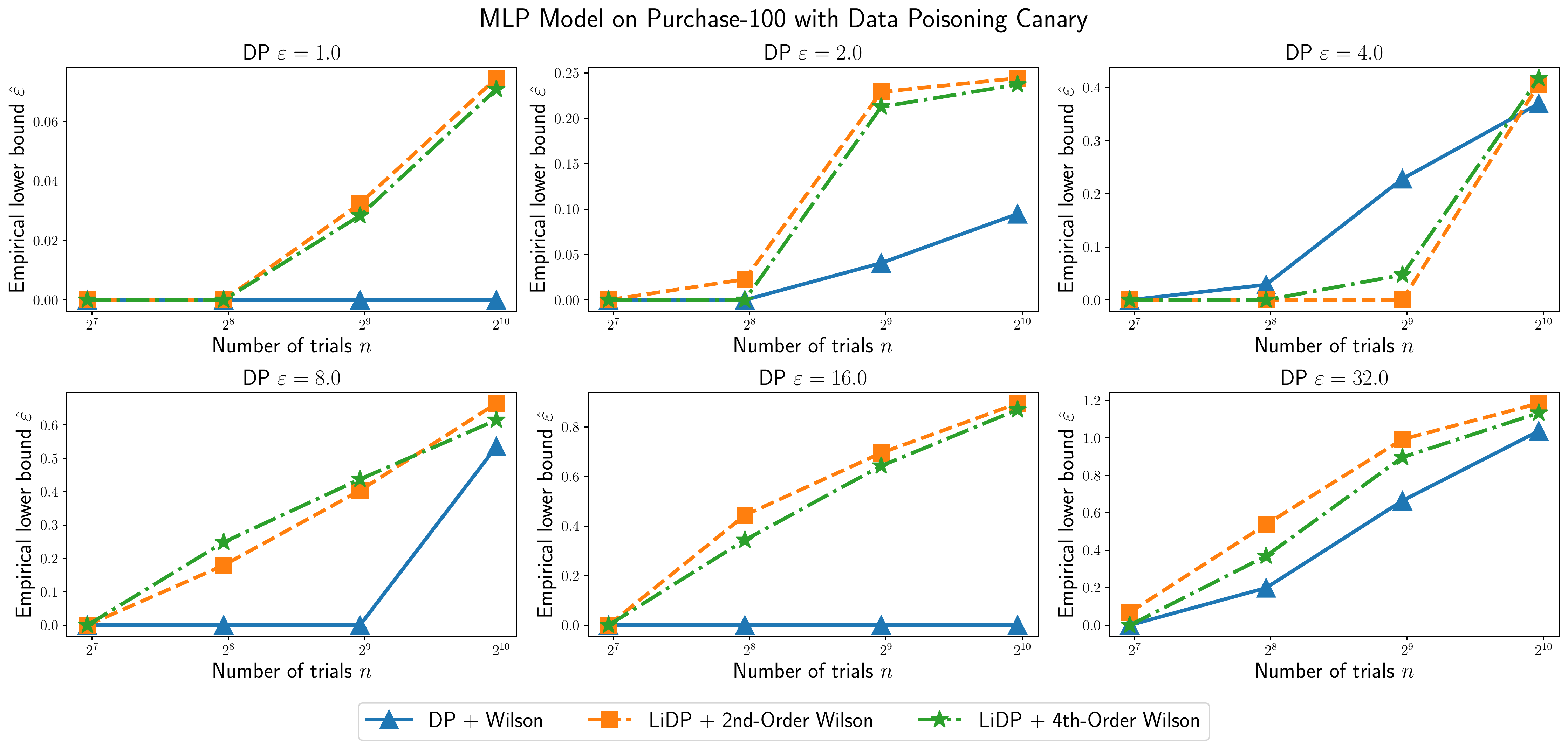}
    \adjincludegraphics[width=0.7\linewidth, trim={0 5em 0 0}, clip]{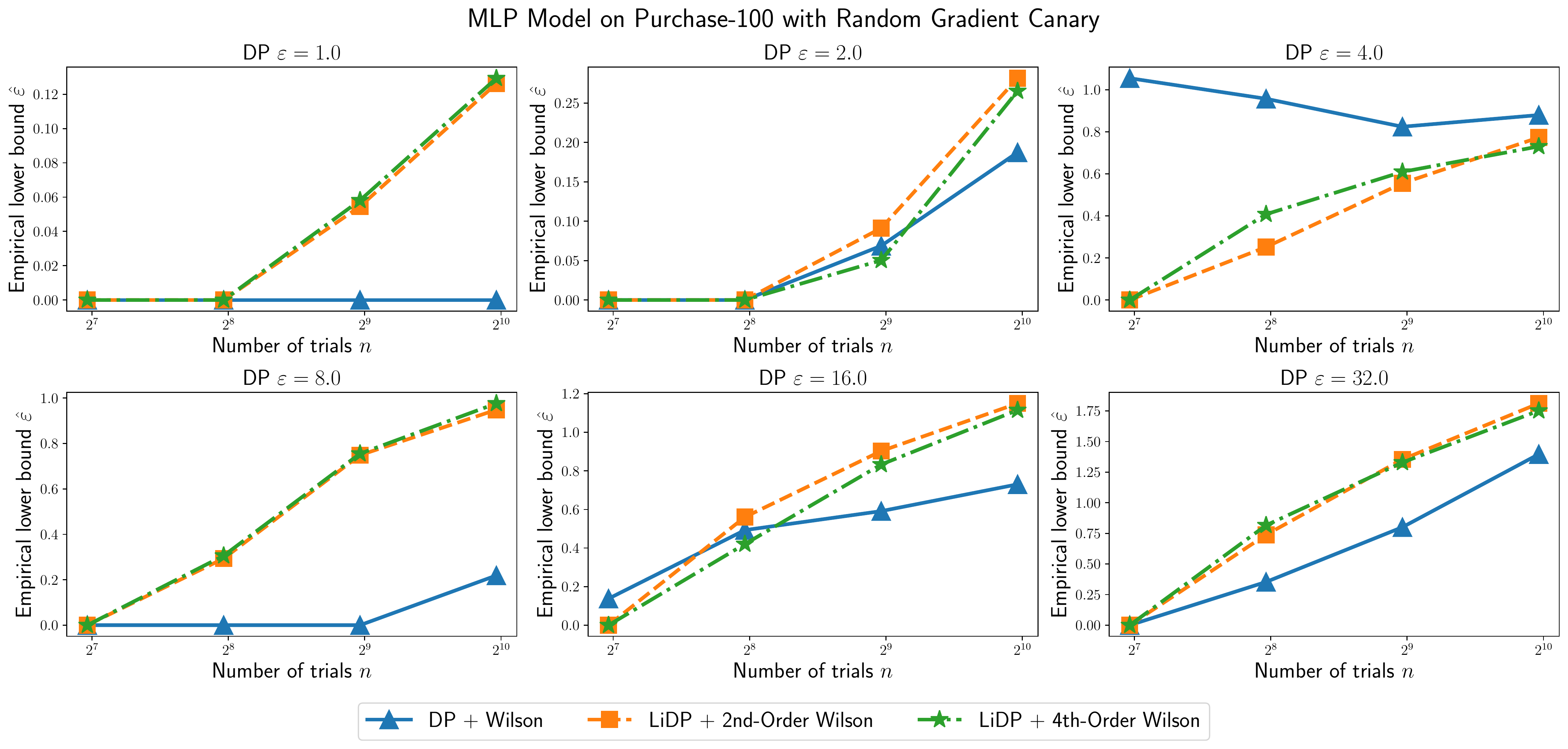}
    \adjincludegraphics[width=0.7\linewidth, trim={0 5em 0 0}, clip]{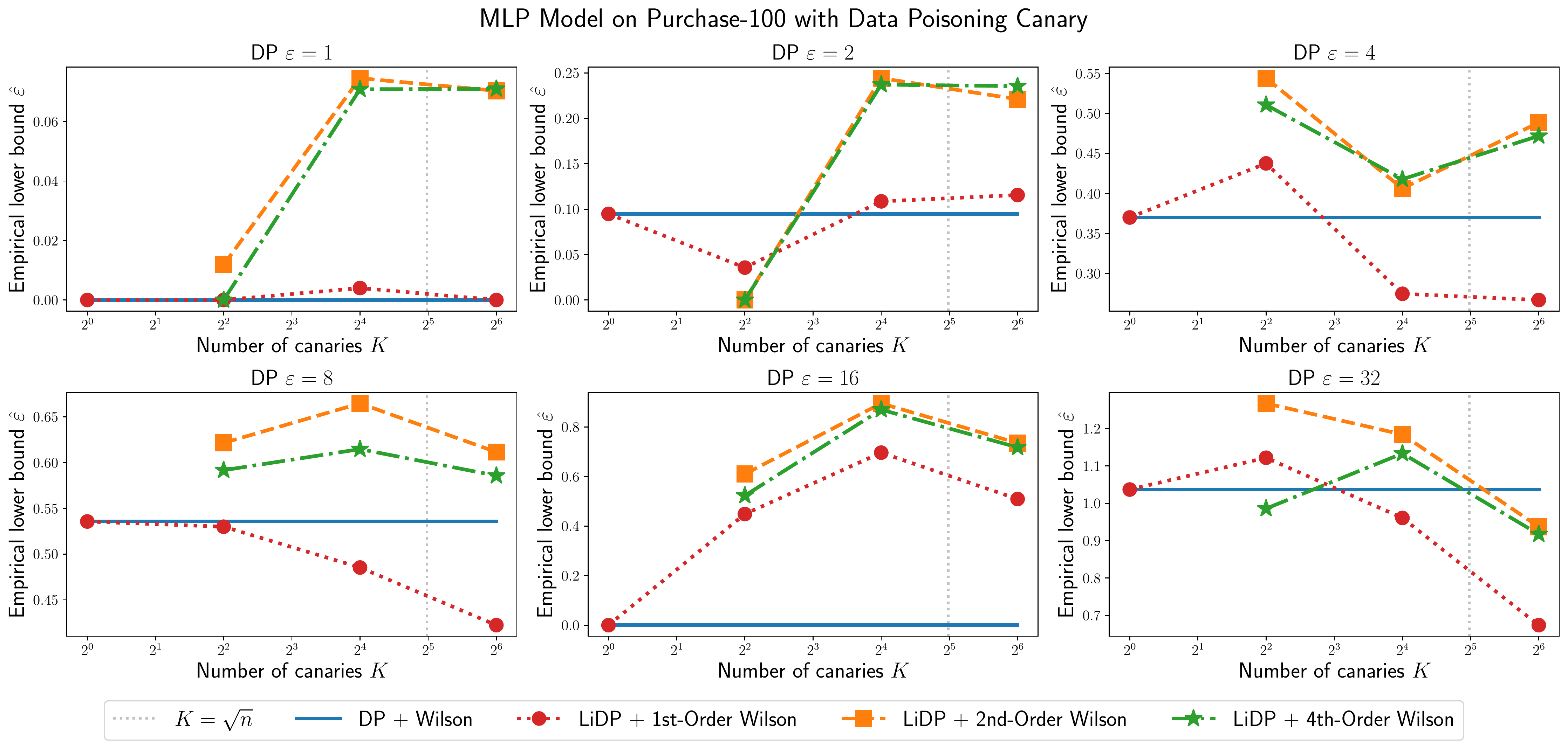}
    \includegraphics[width=0.7\linewidth]{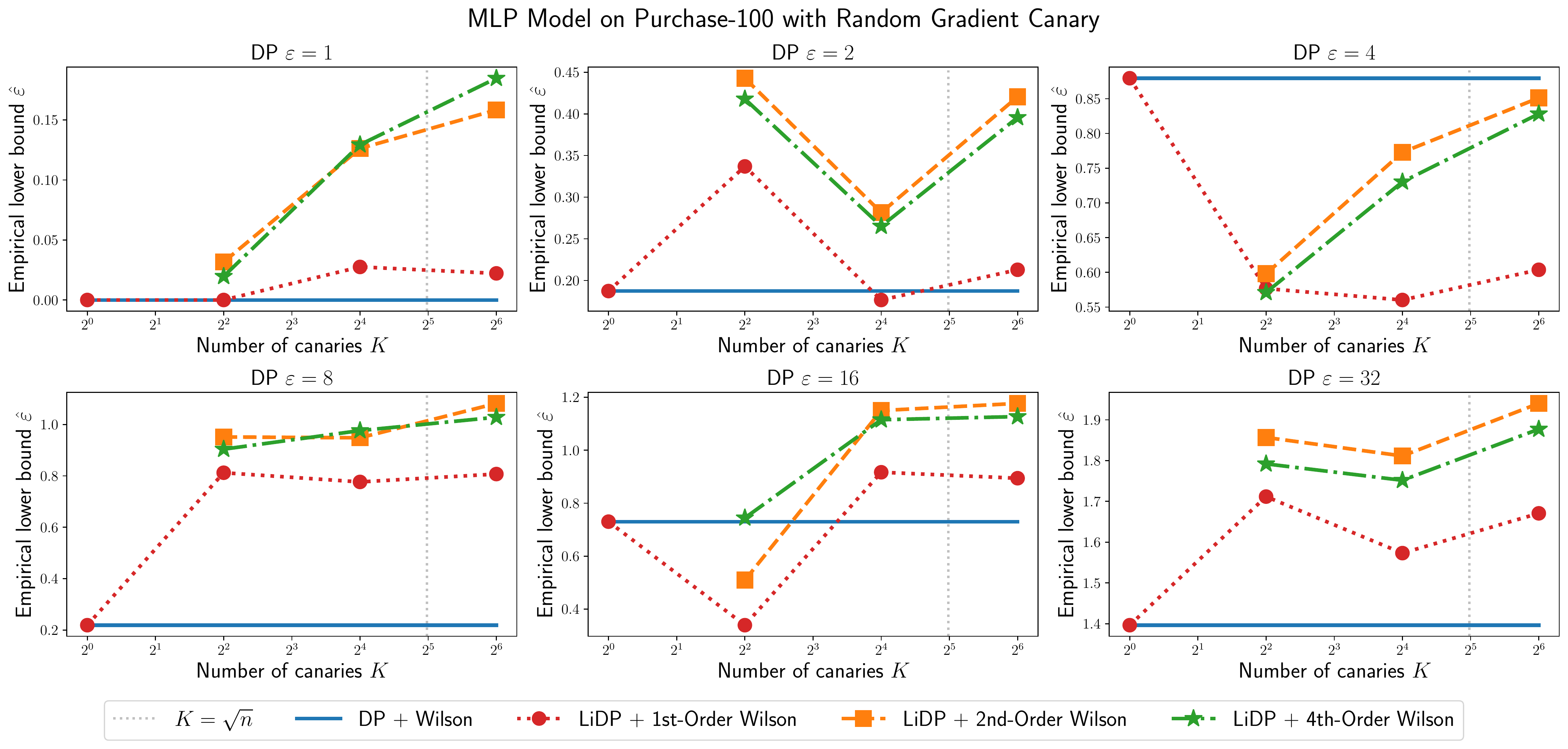}
    \caption{\small Experimental results for Purchase-100 MLP model (top two: varying $n$, bottom two: varying $K$).}
    \label{fig:expt-all:purchase-mlp}
\end{figure}

} \else {

\begin{figure}
    \centering
    \adjincludegraphics[width=0.88\linewidth, trim={0 5em 0 0}, clip]{fig/expt/all/plot_n_fmnist_linear_data.pdf}
    \adjincludegraphics[width=0.88\linewidth, trim={0 5em 0 0}, clip]{fig/expt/all/plot_n_fmnist_linear_grad.pdf}
    \adjincludegraphics[width=0.88\linewidth, trim={0 5em 0 0}, clip]{fig/expt/all/plot_k_fmnist_linear_data.pdf}
    \includegraphics[width=0.88\linewidth]{fig/expt/all/plot_k_fmnist_linear_grad.pdf}
    \caption{\small Experimental results for FMNIST linear model (top two: varying $n$, bottom two: varying $K$).}
    \label{fig:expt-all:fmnist-linear}
\end{figure}

\begin{figure}
    \centering
    \adjincludegraphics[width=0.88\linewidth, trim={0 5em 0 0}, clip]{fig/expt/all/plot_n_fmnist_mlp_data.pdf}
    \adjincludegraphics[width=0.88\linewidth, trim={0 5em 0 0}, clip]{fig/expt/all/plot_n_fmnist_mlp_grad.pdf}
    \adjincludegraphics[width=0.88\linewidth, trim={0 5em 0 0}, clip]{fig/expt/all/plot_k_fmnist_mlp_data.pdf}
    \includegraphics[width=0.88\linewidth]{fig/expt/all/plot_k_fmnist_mlp_grad.pdf}
    \caption{\small Experimental results for FMNIST MLP model (top two: varying $n$, bottom two: varying $K$).}
    \label{fig:expt-all:fmnist-mlp}
\end{figure}

\begin{figure}
    \centering
    \adjincludegraphics[width=0.88\linewidth, trim={0 5em 0 0}, clip]{fig/expt/all/plot_n_purchase_mlp_data.pdf}
    \adjincludegraphics[width=0.88\linewidth, trim={0 5em 0 0}, clip]{fig/expt/all/plot_n_purchase_mlp_grad.pdf}
    \adjincludegraphics[width=0.88\linewidth, trim={0 5em 0 0}, clip]{fig/expt/all/plot_k_purchase_mlp_data.pdf}
    \includegraphics[width=0.88\linewidth]{fig/expt/all/plot_k_purchase_mlp_grad.pdf}
    \caption{\small Experimental results for Purchase-100 MLP model (top two: varying $n$, bottom two: varying $K$).}
    \label{fig:expt-all:purchase-mlp}
\end{figure}
} \fi 
\end{document}